\definecolor{yxc}{RGB}{255,0,0}
\definecolor{yjc}{RGB}{190,0,255}
\definecolor{dacong}{RGB}{10,103,68}
\definecolor{cc}{RGB}{1,11,111}
\DeclareMathOperator{\ind}{\mathds{1}}
\newcommand{\Exs}{\mathbb{E}}
\newcommand{\soft}[1]{{#1}_{\tau}}
\newcommand{\prob}{\mathop{{P}}}
\newcommand{\ex}[2]{\mathbb{E}_{#1}\left[#2\right]}
\newcommand{\exlim}[2]{\mathop\mathbb{E}\limits_{#1}\left[#2\right]}
\newcommand{\exlimbig}[2]{\mathop\mathbb{E}\limits_{#1}\big[#2\big]}
\newcommand{\exlimBig}[2]{\mathop\mathbb{E}\limits_{#1}\Big[#2\Big]}
\newcommand{\V}{V} 
\newcommand{\A}{A} 
\newcommand{\Q}{Q} 
\newcommand{\plcy}{\pi} 
\newcommand{\prm}{\theta} 
\newcommand{\s}{s} 
\newcommand{\ac}{a} 
\newcommand{\ssp}{\mathcal{S}} 
\newcommand{\asp}{\mathcal{A}} 
\newcommand{\real}{\mathbb{R}} 
\newcommand{\disct}{\gamma} 
\newcommand{\prn}[1]{\left({#1}\right)} 
\newcommand{\prnbig}[1]{\big({#1}\big)} 
\newcommand{\prnBig}[1]{\Big({#1}\Big)} 
\newcommand{\prnBigg}[1]{\Bigg({#1}\Bigg)} 
\newcommand{\brk}[1]{\left[{#1}\right]} 
\newcommand{\norm}[1]{\left\|{#1}\right\|} 
\newcommand{\normbig}[1]{\big\|{#1}\big\|} 
\newcommand{\abs}[1]{\left|{#1}\right|} 
\newcommand{\cS}{\mathcal{S}}
\newcommand{\cA}{\mathcal{A}}
\newcommand{\KL}{\mathsf{KL}}
\newcommand{\sP}{{P}} 
\newcommand{\idn}{{I}} 
\newcommand{\estsoftQ}{\widehat{Q}_\tau^{(t)}}
\newcommand{\exctplcy}[1]{{\breve{\plcy}}^{(#1)}}
\newtheorem{theorem}{Theorem}
\newtheorem{lemma}{Lemma}
\newtheorem{prop}{Proposition}
\newtheorem{remark}{Remark}
\title{Fast Global Convergence of Natural Policy Gradient Methods \\ with Entropy Regularization}
\author{Shicong Cen\thanks{Department of Electrical and Computer Engineering, Carnegie Mellon University; email: \texttt{shicongc@andrew.cmu.edu}. } \\
CMU    \\
	\and
	Chen Cheng\thanks{Department of Statistics, Stanford University; email: \texttt{chencheng@stanford.edu}.  } \\
Stanford    \\
	\and
	Yuxin Chen\thanks{Department of Electrical and Computer Engineering, Princeton University; email: \texttt{yuxin.chen@princeton.edu}. } \\
 Princeton  \\
 	\and
	Yuting Wei\thanks{Department of Statistics and Data Science, Carnegie Mellon University; email: \texttt{ytwei@cmu.edu}.}\\
	CMU\\
	\and
	Yuejie Chi\thanks{Department of Electrical and Computer Engineering, Carnegie Mellon University; email: \texttt{yuejiechi@cmu.edu}. }\\
	CMU\\
	}
\date{July 13, 2020; \quad Revised \today}
\begin{document}
	\maketitle
	
	\begin{abstract}

Natural policy gradient (NPG) methods  are among the most widely used policy optimization algorithms in contemporary reinforcement learning. 
This class of methods is often applied in conjunction with entropy regularization --- an algorithmic scheme that encourages exploration --- and is closely related to soft policy iteration and trust region policy optimization.  Despite the empirical success, the theoretical underpinnings for NPG methods remain limited even for the tabular setting.

This paper develops {\em non-asymptotic} convergence guarantees for entropy-regularized NPG methods under softmax parameterization, focusing on discounted Markov decision processes (MDPs). Assuming access to exact policy evaluation, we demonstrate that the algorithm converges linearly --- even quadratically once it enters a local region around the optimal policy --- when computing optimal value functions of the regularized MDP. 
Moreover, the algorithm is provably stable vis-\`a-vis inexactness of policy evaluation. Our convergence results accommodate a wide range of learning rates, and shed light upon the role of entropy regularization in enabling fast convergence. 
 

\end{abstract}

\noindent \textbf{Keywords:} natural policy gradient methods, entropy regularization, global convergence, soft policy iteration, conservative policy iteration, trust region policy optimization

	\tableofcontents

	\section{Introduction}

Policy gradient (PG) methods and their variants \citep{williams1992simple,sutton2000policy,kakade2002natural,peters2008natural,konda2000actor},  which aim to optimize (parameterized) policies via gradient-type methods, lie at the heart of recent advances in reinforcement learning (RL) (e.g.~\citet{mnih2015human,schulman2015trust,silver2016mastering,schulman2017proximal}).  
Perhaps most appealing is their flexibility in adopting various kinds of policy parameterizations (e.g.~a class of policies parameterized via deep neural networks),  which makes them remarkably powerful and versatile in contemporary RL. 

As an important and widely used extension of PG methods, {\em natural policy gradient} (NPG) methods propose to employ natural policy gradients \citep{amari1998natural} as search directions, in order to achieve faster convergence than the update rules based on  policy gradients \citep{kakade2002natural,peters2008natural,bhatnagar2009natural,even2009online}. Informally speaking, NPG methods precondition the gradient directions by Fisher information matrices (which are the Hessians of a certain divergence metric), and fall under the category of quasi second-order policy optimization methods. In fact, a variety of mainstream RL algorithms, such as {\em trust region policy optimization} (TRPO) \citep{schulman2015trust} and {\em proximal policy optimization} (PPO) \citep{schulman2017proximal}, can be viewed as generalizations of NPG methods \citep{shani2019adaptive}. In this paper, we pursue in-depth theoretical understanding about this popular class of methods --- in conjunction with entropy regularization to be introduced momentarily.

\subsection{Background and motivation}

Despite the enormous empirical success, the theoretical underpinnings of policy gradient type methods have been limited even until recently, 
primarily due to the intrinsic non-concavity underlying the value maximization problem of interest \citep{bhandari2019global,agarwal2019optimality}. To further exacerbate the situation, an abundance of problem instances contain suboptimal policies residing in regions  with flat curvatures (namely, vanishingly small gradients and high-order derivatives) \citep{agarwal2019optimality}. Such plateaus in the optimization landscape could, in principle, be difficult to escape once entered, thereby necessitating a higher degree of exploration in order to accelerate policy optimization.

In practice, a strategy that has been frequently adopted to encourage exploration and improve convergence is to enforce entropy regularization \citep{williams1991function,peters2010relative,mnih2016asynchronous,duan2016benchmarking,haarnoja2017reinforcement,hazan2019provably,vieillard2020leverage,xiao2019maximum}. By inserting an additional penalty term to the objective function, this strategy penalizes policies that are not stochastic/exploratory enough, 
in the hope of preventing a policy optimization algorithm from being trapped in an undesired local region. Through empirical visualization, \citet{ahmed2019understanding} suggested that entropy regularization induces a smoother landscape that allows for the use of larger learning rates, and hence, faster convergence.   However, the theoretical support for regularization-based policy optimization remains highly inadequate.

Motivated by this, a very recent line of works set out to elucidate, in a theoretically sound manner, the efficiency of entropy-regularized policy gradient methods. Assuming access to exact policy gradients, \citet{agarwal2019optimality} and \citet{mei2020global} developed convergence guarantees for regularized PG methods (with relative entropy regularization considered in \citet{agarwal2019optimality} and entropy regularization in \citet{mei2020global}). Encouragingly, both papers suggested the positive role of regularization in guaranteeing faster convergence for the tabular setting. However, these works fell short of explaining the role of entropy regularization for other policy optimization algorithms like NPG methods, which we seek to understand in this paper.

\subsection{This paper}

Inspired by recent theoretical progress towards understanding PG methods \citep{agarwal2019optimality,bhandari2019global,mei2020global},
we aim to develop non-asymptotic convergence guarantees for entropy-regularized NPG methods in conjunction with softmax parameterization. We focus attention on studying tabular discounted Markov decision processes (MDPs), which is an important first step and a stepping stone towards demystifying the effectiveness of entropy-regularized policy optimization in more complex settings.

\paragraph{Settings.} Consider a $\gamma$-discounted infinite-horizon MDP with state space $\cS$ and action space $\cA$. 
Assuming availability of exact policy evaluation, the update rule of entropy-regularized NPG methods with softmax parameterization admits a simple update rule in the policy space (see Section~\ref{sec:models} for precise descriptions)
\begin{equation}
\label{eqn:NPG-regularized}
	\plcy^{(t+1)}(\ac|\s) ~\propto~
		 \big( \plcy^{(t)}(\ac|\s) \big)^{1- \frac{\eta\tau}{1-\disct}}\exp\Big( \frac{\eta{\soft{\Q}^{\plcy^{(t)}}(\s, \ac)}} { {1-\disct}} \Big)
\end{equation}
for any $(s,a)\in \cS\times \cA$, 
where $\tau>0$ is the regularization parameter, $0<\eta \leq \frac{1-\gamma}{\tau}$ is the learning rate (or stepsize), $\plcy^{(t)}$ indicates the $t$-th policy iterate, and $\soft{\Q}^{\plcy}$ is the soft Q-function under policy $\plcy$ (to be defined in \eqref{eq:defn-regularized-Q}). The update rule \eqref{eqn:NPG-regularized} is closely connected to several popular algorithms in practice. For instance, the {\em trust region policy optimization} (TRPO) algorithm \citep{schulman2015trust}, when instantiated in the tabular setting, can be viewed as implementing \eqref{eqn:NPG-regularized} with line search. In addition, by setting the learning rate as $\eta = \frac{1-\gamma}{\tau}$, the update rule \eqref{eqn:NPG-regularized} coincides with  {\em soft policy iteration} (SPI)  studied in \citet{haarnoja2017reinforcement}.

\paragraph{Our contributions.} The results of this paper deliver fully non-asymptotic convergence rates of entropy-regularized NPG methods without any hidden constants, which are previewed as follows (in an orderwise manner). The definition of $\epsilon$-optimality can be found in Table~\ref{tab:comparisons}.

\begin{itemize}
	\item
	\textbf{Linear convergence of exact entropy-regularized NPG methods.} We establish linear convergence of entropy-regularized NPG methods for finding the optimal policy of the entropy-regularized MDP, assuming access to exact policy evaluation. To yield an $\epsilon$-optimal policy for the regularized MDP (cf.~Table~\ref{tab:comparisons}), the algorithm \eqref{eqn:NPG-regularized} with a general learning rate $0< \eta \leq \frac{1-\gamma}{\tau}$ needs no more than an order of
\[
	\frac{1}{\eta\tau}  \log\left(\frac{1}{\epsilon}\right)   
\]
		iterations, where we hide the dependencies that are logarithmic on salient problem parameters (see Theorem~\ref{thm:npg_exact}). Some highlights of our convergence results are (i) their near dimension-free feature and (ii) their applicability to a wide range of learning rates (including small learning rates). 

\item 
\textbf{Linear convergence of approximate entropy-regularized NPG methods.}
We demonstrate the stability of the regularized NPG method with a general learning rate $0< \eta \leq \frac{1-\gamma}{\tau}$ even when the soft Q-functions of interest are only available approximately. This paves the way for future investigations that involve finite-sample analysis. Informally speaking, the algorithm exhibits the same convergence behavior as in the exact gradient case before an error floor is hit, where the error floor scales linearly in the entrywise error of the soft Q-function estimates (see Theorem~\ref{thm:npg_inexact}).

\item 
	\textbf{Quadratic convergence in the small-$\epsilon$ regime.} In the high-accuracy regime where the target level $\epsilon$ is {\em very small}, the algorithm \eqref{eqn:NPG-regularized} with $\eta = \frac{1-\gamma}{\tau}$ converges super-linearly, in the sense that the iteration complexity to reach $\epsilon$-accuracy for the regularized MDP is at most on the order of 
\begin{equation*}
	 \log  \log \left( \frac{1}{\epsilon} \right) ,
\end{equation*}
{\em after} entering a small local neighborhood surrounding the optimal policy. Here, we again hide the dependencies that are logarithmic on salient problem parameters (see Theorem~\ref{thm:spi_sp}).

\end{itemize}
%

\paragraph{Comparisons with prior art.} 
\citet{agarwal2019optimality} proved that unregularized NPG methods with softmax parameterization attain an $\epsilon$-accuracy within $O(1/\epsilon)$ iterations. In contrast, our results assert that $O(\log(1/\epsilon))$ iterations suffice with the assistance of entropy regularization, which hints at the potential benefit of entropy regularization in accelerating the convergence of NPG methods.   Shortly after the initial posting of our paper, \citet{bhandari2020note} posted a note that proves linear convergence of unregularized NPG methods with exact line search, by exploiting a clever connection to policy iteration.  Their convergence rate is governed by a quantity $\min_{s\in \cS}\rho(s)$, resulting in an iteration complexity at least $|\cS|$ times larger than ours. 
In comparison, our results cover a broad range of fixed learning rates (including small stepsizes that are of particular interest in practice), and accommodate the scenario with inexact gradient evaluation.   See Table~\ref{tab:comparisons} for a  quantitative comparison. Moreover, we note that the entropy-regularized NPG method with general learning rates is closely related to TRPO in the tabular setting (see \citet{shani2019adaptive}).
The recent work \citet{shani2019adaptive} demonstrated that TRPO converges with an iteration complexity $O(1/\epsilon)$ in entropy-regularized MDPs. The analysis therein is inspired by the mirror descent theory in generic optimization literature, which characterizes sublinear convergence  under properly decaying stepsizes and accommodates various choices of divergence metrics. 
		In comparison, our analysis strengthens the performance guarantees by carefully exploiting properties specific to the current version of the NPG method. In particular, we identify the delicate interplay between the crucial operational quantities $Q_\tau^\star - Q_\tau^{(t)}$  and $Q_\tau^\star - \tau \log \xi^{(t)}$ (to be defined later), and invoke the linear system theory to establish appealing contraction,  which allow for the use of more aggressive constant stepsizes and hence improved convergence.

\newcommand{\topsepremove}{\aboverulesep = 0mm \belowrulesep = 0mm} \topsepremove

\begin{table}[ht]

\begin{center}
	{
\begin{tabular}{c|c|c|c}
\toprule \hline
\multirow{2}{*}{paper} & \multirow{2}{*}{iteration complexity} & \multirow{2}{*}{regularization} & \multirow{2}{*}{learning rates} \tabularnewline
 &  &  &  \tabularnewline	
\toprule 
	\multirow{2}{*}{\citet{agarwal2019optimality}} & \multirow{2}{*}{$\frac{2}{(1-\gamma)^{2}\epsilon}+\frac{2}{\eta\epsilon}$} & \multirow{2}{*}{unregularized} & \multirow{2}{*}{constant: $(0,\infty)$}  \tabularnewline
	 &  &  &    \tabularnewline
\hline 
	\multirow{2}{*}{\citet{bhandari2020note}}	& \multirow{2}{*}{$\frac{1}{(1-\gamma)\min_{s\in\mathcal{S}}\rho(s)}\log\big(\frac{1}{\epsilon}\big)$} & \multirow{2}{*}{unregularized} & \multirow{2}{*}{exact line search} \tabularnewline
	&  &  &   \tabularnewline
\hline 
	\multirow{2}{*}{\textbf{this work}} & \multirow{2}{*}{$\frac{1}{1-\gamma}\log\big(\frac{1}{\epsilon}\big)$} & \multirow{2}{*}{regularized} & \multirow{2}{*}{constant: $\frac{1-\gamma}{\tau}$ $\vphantom{\frac{1}{1}}$}   \tabularnewline
	&  &  &     \tabularnewline
\hline 
	\multirow{2}{*}{\textbf{this work}} & \multirow{2}{*}{$\frac{1}{\eta\tau}\log\big(\frac{1}{\epsilon}\big)$} & \multirow{2}{*}{regularized} & \multirow{2}{*}{constant: $\big(0,\frac{1-\gamma}{\tau}\big)$} $\vphantom{\frac{1}{1}}$   \tabularnewline
	&  &  &     \tabularnewline
\toprule \hline
\end{tabular}
	}
\end{center}
	\caption{The iteration complexities of NPG methods  to reach $\epsilon$-accuracy {\em in terms of optimization error}, where the unregularized (resp.~regularized) version is given by \eqref{eq:NPG-original-0} (cf.~\eqref{eq:NPG-original}) with $\eta$ the learning rate.  
	We assume exact gradient evaluation and softmax parameterization, and hide the dependencies that are logarithmic on problem parameters. Here, $\epsilon$-accuracy or $\epsilon$-optimality for the unregularized (resp.~regularized) case mean $V^{\star}(s) -V^{\pi^{(t)}} (s) \leq\epsilon$ (resp.~$V_{\tau}^{\star}(s)-V_{\tau}^{\pi^{(t)}}(s)\leq\epsilon$) holds simultaneously for all $s\in \cS$;  
	  $\rho$ denotes the initial state distribution, which clearly obeys $\frac{1}{\min_{s\in \cS} \rho(s)}\geq |\cS|$.    \label{tab:comparisons}
}

\end{table}

It  is also helpful to compare our results with the state-of-the-art theory for PG methods with softmax parameterization \citep{agarwal2019optimality,mei2020global}. Specifically, \citet{agarwal2019optimality} established the asymptotic convergence of unregularized PG methods with softmax parameterization, while an iteration complexity of $O(1/\epsilon)$  was recently pinned down by \citet{mei2020global}. In the presence of entropy regularization, \citet{agarwal2019optimality} showed that PG with relative entropy regularization and softmax parameterization enjoys an iteration complexity of $O(1/\epsilon^2)$, while \citet{mei2020global} showed that the entropy-regularized softmax PG method converges linearly in $O(\log(1/\epsilon))$ iterations. However, the dependencies of the iteration complexity in \citet{mei2020global} on other salient parameters like $|\cS|$, $|\cA|$ and $\frac{1}{1-\gamma}$ are not fully specified. Very recently, \citet{li2021softmax} delivered a negative message demonstrating that  these dependencies can be highly pessimistic; in fact, one can find an MDP instance which takes softmax PG methods 
(super)-exponential time (in terms of $|\cS|$ and $\frac{1}{1-\gamma}$) to converge. In contrast, the bounds derived in the current paper are fully non-asymptotic, delineating clear dependencies on all salient problem parameters, which clearly demonstrate the algorithmic advantages of NPG methods. Fig.~\ref{fig:pg_vs_npg} depicts the policy paths of PG and NPG methods with entropy regularization for a simple bandit problem with three actions. It is evident from the plots that the NPG method follows a more direct path to the global optimum compared to the PG counterpart and hence converges faster. In addition, both algorithms converge more rapidly as the regularization parameter $\tau$ increases.
 

\begin{figure}[ht]
\begin{center}
\begin{tabular}{ccc}
\hspace{-2ex}
\includegraphics[width=0.33\textwidth]{{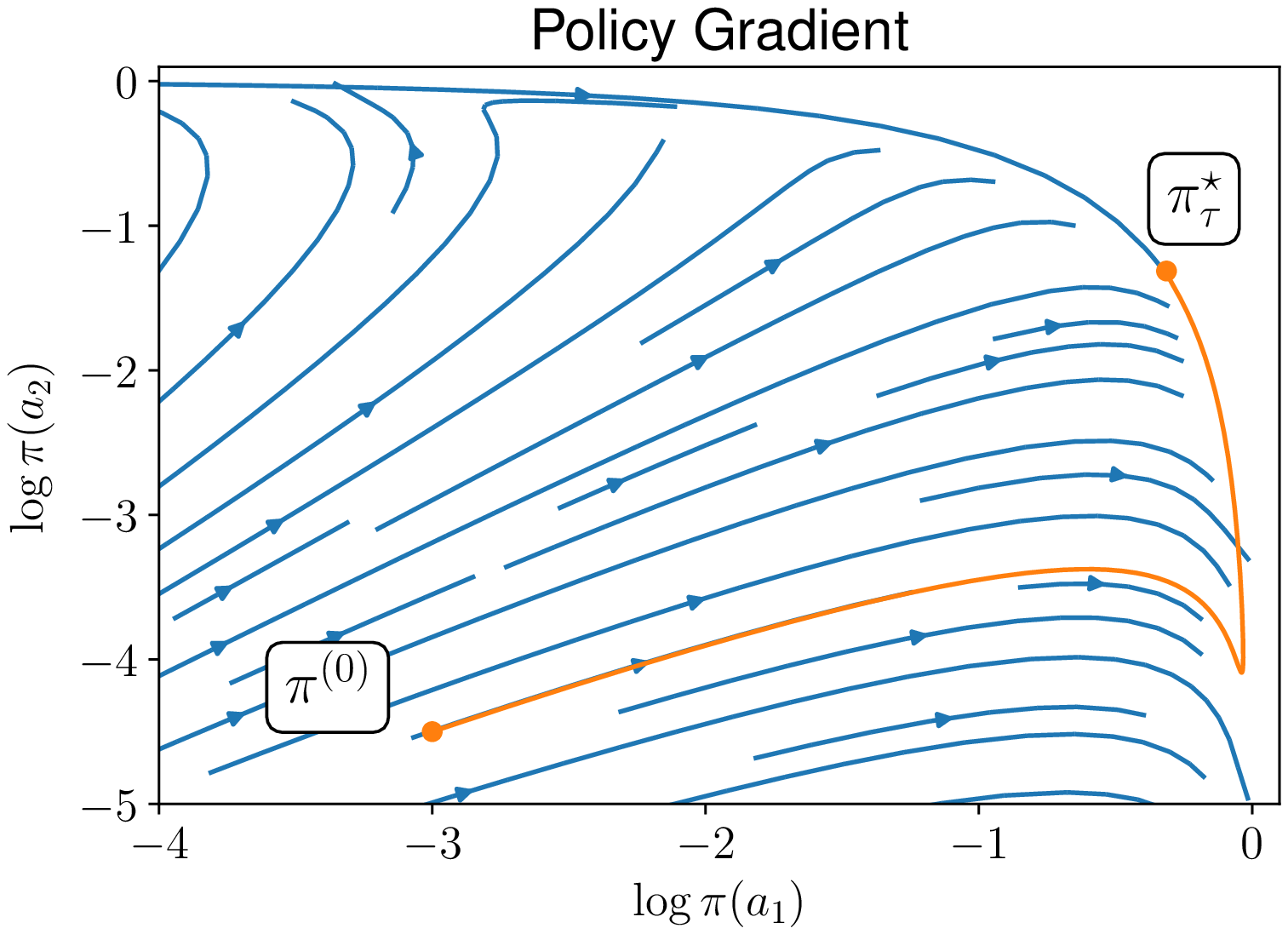}} & \hspace{-2ex}\includegraphics[width=0.33\textwidth]{{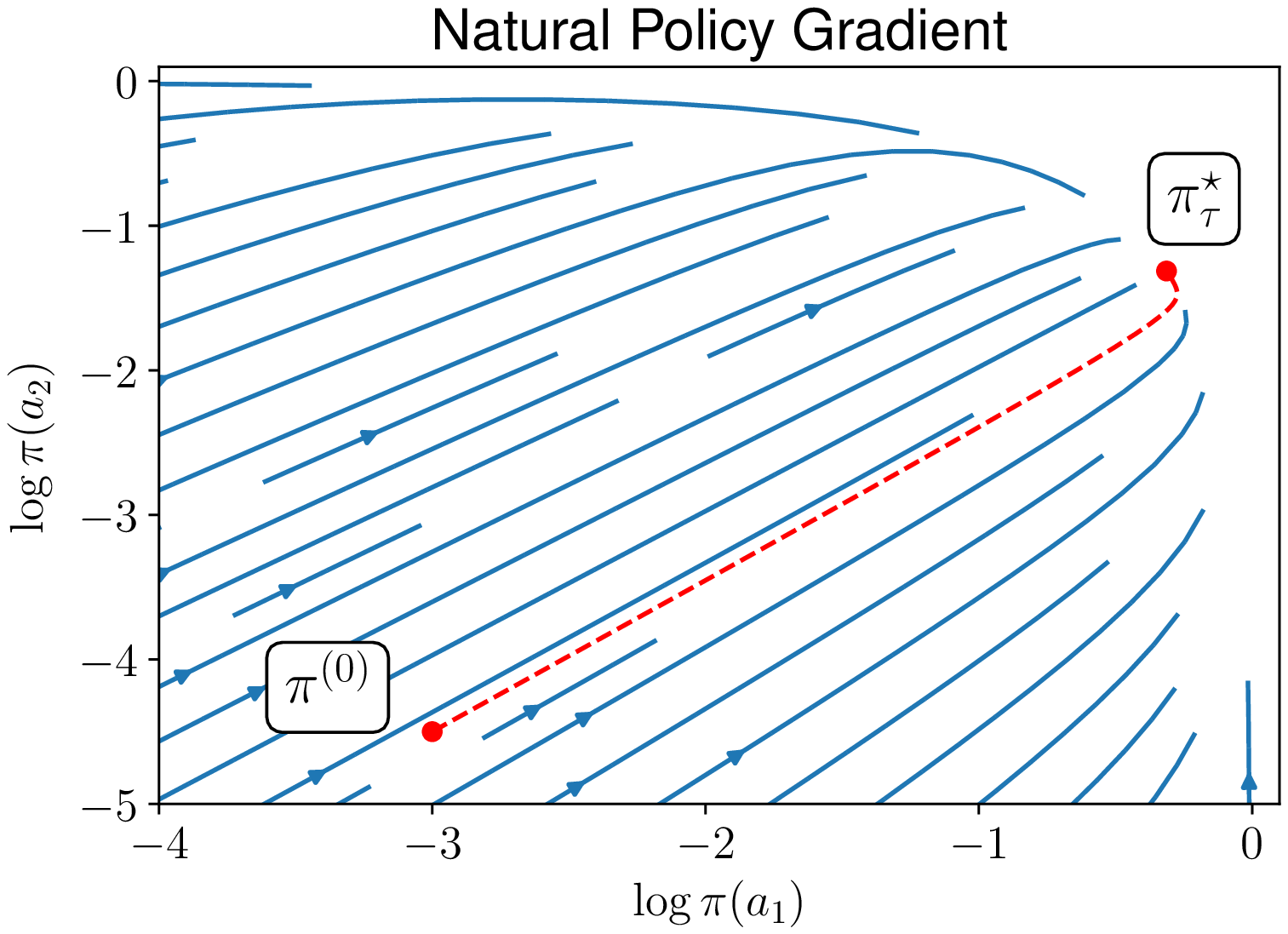}} &\hspace{-2ex}\includegraphics[width=0.33\textwidth]{{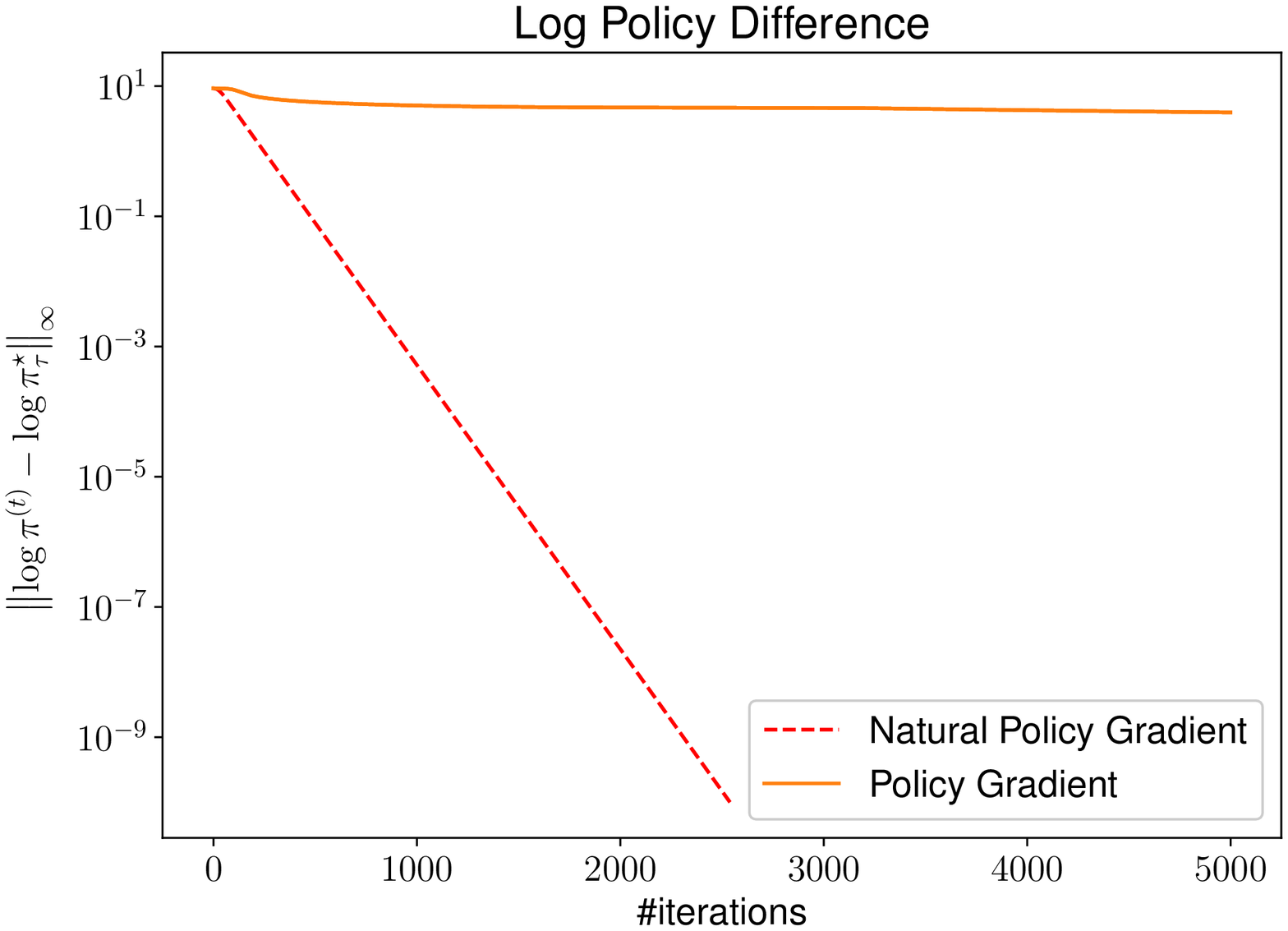}} \\
(a)  regularized PG with $\tau=0.1$ & (b)  regularized NPG with $\tau=0.1$ & (c) error contraction with $\tau=0.1$ \\
\hspace{-2ex}\includegraphics[width=0.33\textwidth]{{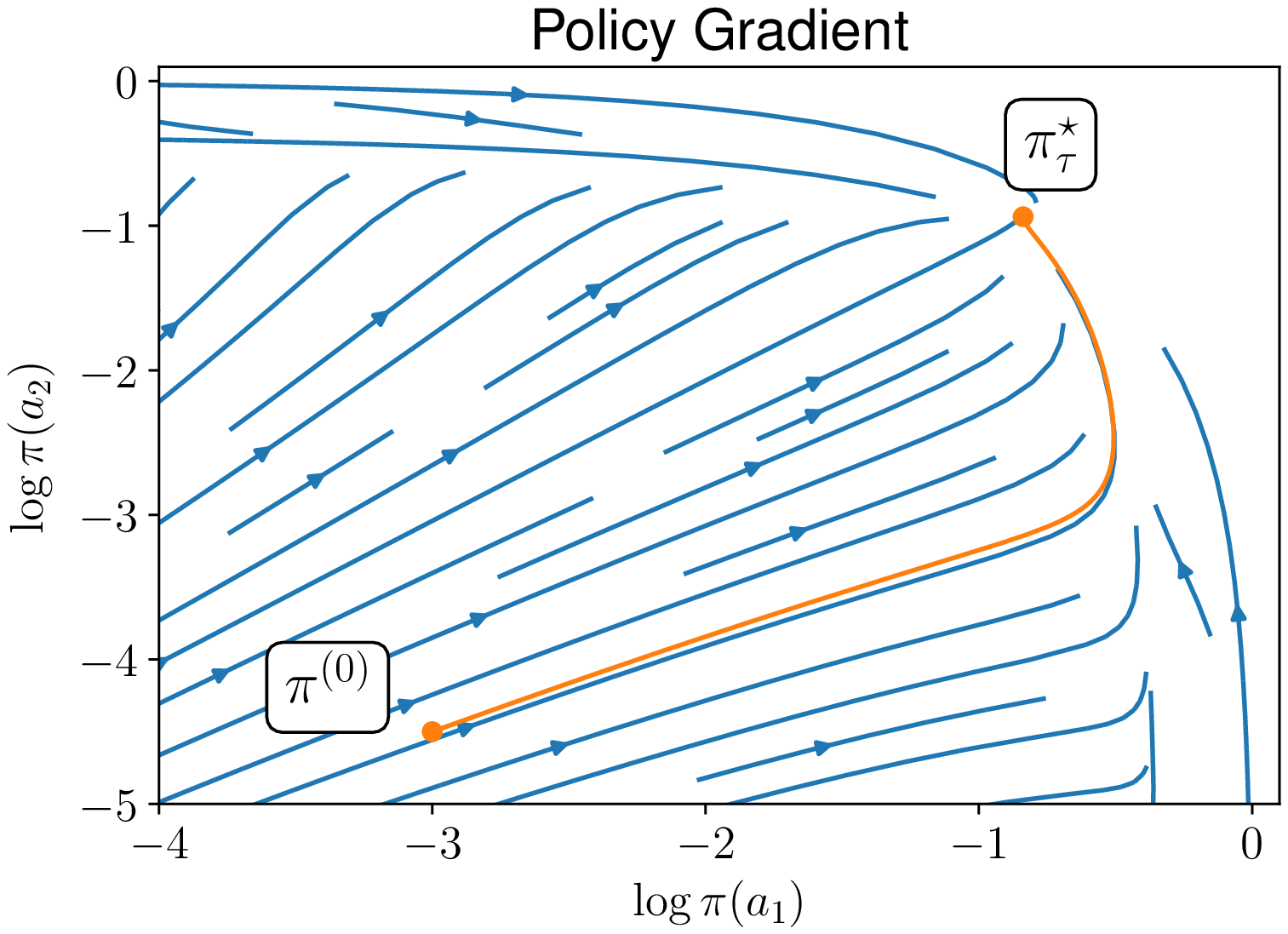}} & \hspace{-2ex}\includegraphics[width=0.33\textwidth]{{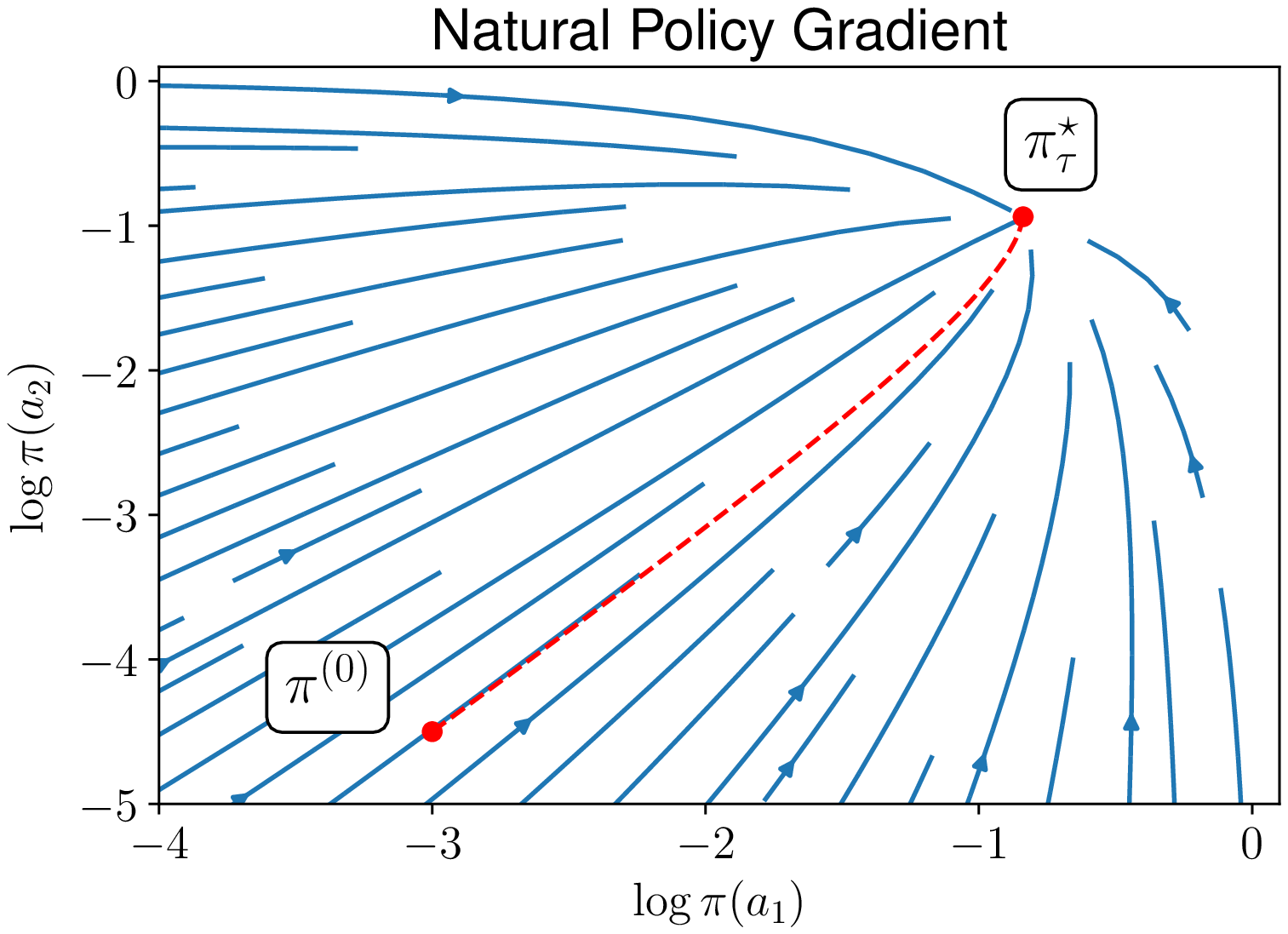}} &\hspace{-2ex}\includegraphics[width=0.33\textwidth]{{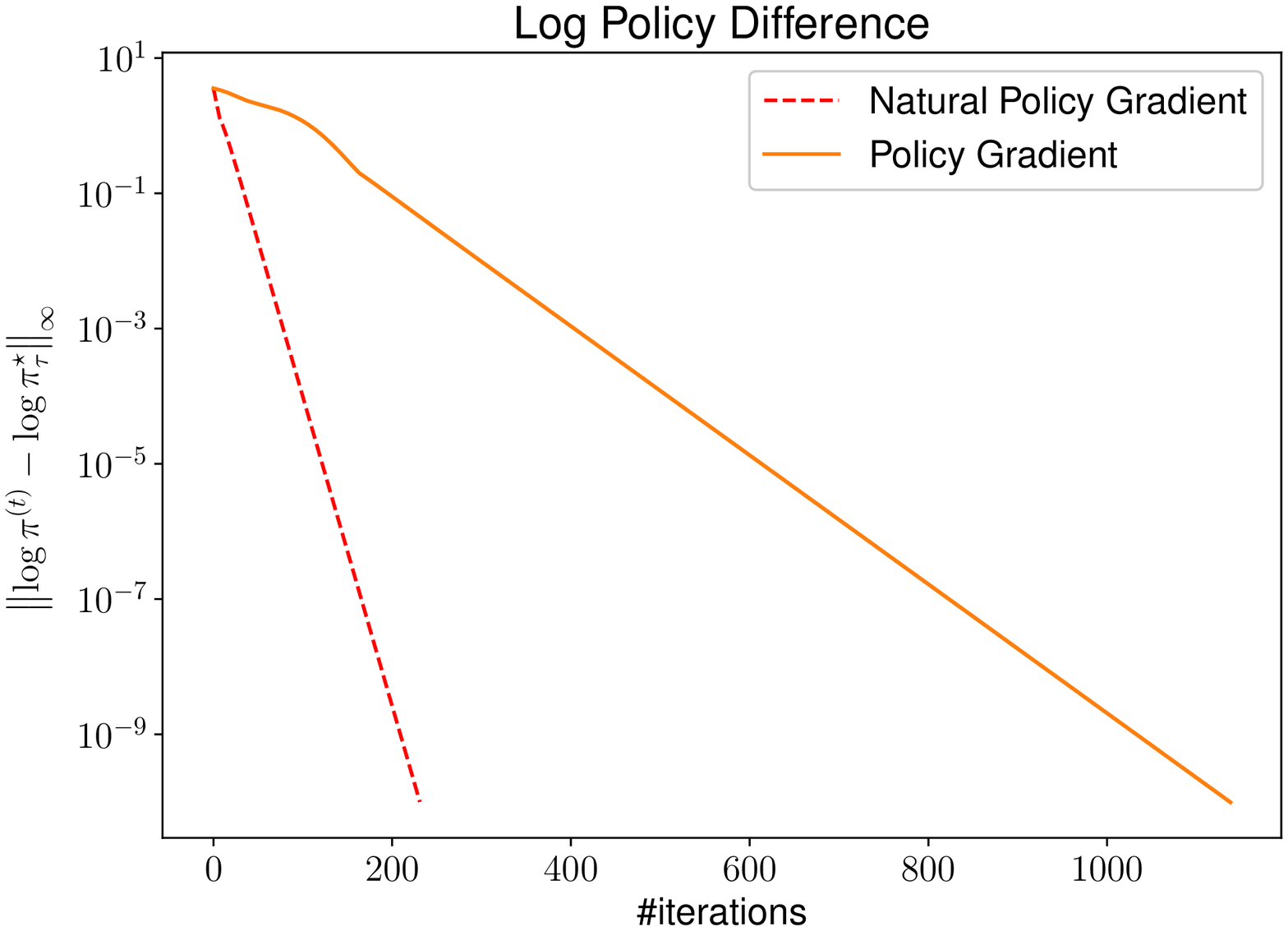}} \\
(d) regularized PG with $\tau=1$ & (e) regularized NPG with $\tau=1$  & (f) error contraction with $\tau=1$
\end{tabular}
\end{center}
\caption{Comparisons of PG and NPG methods with entropy regularization for a bandit problem ($\gamma=0$) with $3$ actions, whose corresponding rewards are $1.0$, $0.9$ and $0.1$, respectively. The regularization parameter is set as $\tau=0.1$ for the first row and $\tau=1$ for the second row. In (a) and (d), the policy paths of $(\log \pi(a_1), \log \pi(a_2))$ following the PG method are plotted in orange, with the blue lines indicating the gradient flow; in (b) and (e), the policy paths of $(\log \pi(a_1), \log \pi(a_2))$ following the NPG method are depicted in red, with the blue lines indicating the natural gradient flow. The error contractions of both PG and NPG methods with $\eta=0.1$ are shown in  (c) and (f).}\label{fig:pg_vs_npg}
\end{figure}

	\subsection{Other related works}


There has been a flurry of recent activities in studying theoretical behaviors of policy optimization methods. For example, \cite{fazel2018global,jansch2020convergence,tu2019gap,zhang2019policy,mohammadi2019convergence} established the global convergence of policy optimization methods for a couple of control problems; \cite{bhandari2019global} identified structural properties that guarantee the global optimality of PG methods without parameterization; \cite{karimi2019non} studied the convergence of PG methods to an approximate first-order stationary point, and \cite{zhang2019global} proposed a variant of PG methods that converges to locally optimal policies leveraging saddle-point escaping algorithms in nonconvex optimization. Beyond the tabular setting, the convergence of PG methods with function approximations has been studied in \cite{agarwal2019optimality,wang2019neural,liu2019neural}. In particular, \cite{cai2019provably} developed an optimistic variant of NPG that incorporates linear function approximation. We do not elaborate on this line of works since our focus is on understanding the performance of entropy-regularized NPG in the tabular setting; we also do not elaborate on PG methods that involve sample-based estimates, since we primarily consider exact gradients or black-box gradient estimators.

Regarding entropy regularization, \cite{neu2017unified,geist2019theory} provided unified views of entropy-regularized MDPs from an optimization perspective by connecting them to algorithms such as mirror descent \citep{nemirovsky1983problem} and dual averaging \citep{nesterov2009primal}. The soft policy iteration algorithm has been identified as a special case of entropy-regularized NPG, highlighting again the link between policy gradient methods and soft Q-learning \citep{schulman2017equivalence}. The asymptotic convergence of soft policy iteration was established in \cite{haarnoja2017reinforcement}, which fell short of providing explicit convergence rate guarantees. Additionally, \citet{grill2019planning} developed planning algorithms for entropy-regularized MDPs, and \citet{mei2020global} showed that the sub-optimality gap of soft policy iteration is small if the policy improvement is small in consecutive iterations.


\subsection{Notation}

We denote by $\Delta(\cS)$ (resp.~$\Delta(\cA)$) the probability simplex over the set $\cS$ (resp.~$\cA$).
When scalar functions such as $|\cdot|$, $\exp(\cdot)$ and $\log(\cdot)$ are applied to vectors, their applications should be understood in an entry-wise fashion. 
For instance, given any vector $z=[z_i]_{1\leq i\leq n}\in \mathbb{R}^n$, the notation $|\cdot|$ denotes $|z| \coloneqq [|z_i|]_{1\leq i\leq n}$; other functions are defined analogously. For any vectors $z=[z_i]_{1\leq i\leq n}$ and $w=[w_i]_{1\leq i\leq n}$, the notation $z \geq w$ (resp.~$ z \leq  w$) means $z_i \geq w_i$ (resp.~$z_i\leq w_i$) for all $1\leq i\leq n$. 
The softmax function $\mathsf{softmax}: \mathbb{R}^n \mapsto \mathbb{R}^n$ is defined such that
$ [\mathsf{softmax}(\theta) ]_i \coloneqq  \exp(\theta_i) / \big( \sum_{i} \exp(\theta_i)  \big) $ for a vector $\theta=[\theta_i]_{1\leq i\leq n}\in \mathbb{R}^n$. Given two probability distributions $\pi_1$ and $\pi_2$ over $\cA$,  the Kullback-Leibler (KL) divergence from $\pi_2$ to $\pi_1$ is defined by $\KL(\pi_1 \,\|\, \pi_2)  \coloneqq \sum_{a\in\cA}\pi_1(a) \log\frac{\pi_1(a)}{\pi_2(a)}$. Given two probability distributions $p$ and $q$ over $\cS$, we introduce the notation $\big\| \frac{p}{q} \big\|_{\infty} \coloneqq \max_{s\in \cS} \frac{p(s)}{q(s)}$ and $\big\| \frac{1}{q} \big\|_{\infty} \coloneqq \max_{s\in \cS} \frac{1}{q(s)}$ .

	\section{Model and algorithms}
\label{sec:models}

\subsection{Problem settings}
\label{sec:MDP}

\paragraph{Markov decision processes.} 
The current paper studies 
 a discounted Markov decision process (MDP) \citep{puterman2014markov} denoted by $\mathcal{M} = (\cS,\cA, P, r,\gamma)$, where $\cS$ is the state space, $\cA$ is the action space,  $\gamma\in (0,1)$ indicates the discount factor,  
$P:\cS\times\cA \rightarrow \Delta(\cS)$ is the transition kernel, and $r:  \cS\times\cA \rightarrow [0,1]$ stands for the reward function.\footnote{For the sake of simplicity, we assume throughout that the reward resides within $[0,1]$. Our results can be generalized in a straightforward manner to other ranges of bounded rewards.}  
To be more specific,  for each state-action pair $(s,a)\in \cS\times \cA$ and any state $s'\in \cS$,  
we denote by $P(s' | {s,a})$ the transition probability from state $s$ to state $s'$ when action $a$ is taken, and  $r(s,a)$ the instantaneous reward received in state $s$ due to action $a$.
 A policy $\pi: \cS \rightarrow \Delta(\cA)$ represents a (randomized) action selection rule, namely,  $\pi(a | s)$  specifies the probability of executing action $a$ in state $s$ for each $(s,a)\in \cS\times \cA$.

\paragraph{Value functions and Q-functions.} For any given policy $\pi$, 
we denote by $V^{\pi}: \cS \rightarrow \real$ the corresponding value function, namely, the expected discounted cumulative reward with an initial state $s_0=s$, given by
\begin{align}
	\label{defn:value-function}
 \forall s\in \cS: \qquad V^{\pi}(s) := \mathbb{E} \left[ \sum_{t=0}^{\infty} \gamma^t r(s_t,a_t ) \,\big|\, s_0 =s \right] ,
\end{align} 
where the action $a_t \sim \pi(\cdot |s_t)$ follows the policy $\pi$ and $s_{t+1}\sim P(\cdot | s_t, a_t)$  is generated by the MDP $\mathcal{M}$ for all $t\geq 0$.
We also overload the notation $\V^\plcy(\rho)$ to indicate the expected value function of a policy $\pi$ when the initial state is drawn from a distribution $\rho$ over $\cS$, namely,
	\begin{align}
		\label{defn:V-pi-rho}
		\V^\plcy(\rho) := \ex{\s\sim \rho}{\V^\plcy(\s)}.
	\end{align}
Additionally, the Q-function $Q^{\pi}: \cS \times \cA \rightarrow \real$ of a policy $\pi$ --- namely, the expected discounted cumulative reward with an initial state $s_0=s$ and an initial action $a_0=a$ --- is defined by 
\begin{equation}
	\label{defn:Q-function}
 	\forall (s,a)\in \cS \times \cA: \qquad Q^{\pi}(s,a) := \mathbb{E} \left[ \sum_{t=0}^{\infty} \gamma^t r(s_t,a_t ) \,\big|\, s_0 =s, a_0 = a \right],
\end{equation} 
where the action  $a_t \sim \pi(\cdot |s_t)$ follows the policy $\pi$ for all $t\geq 1$, and  $s_{t+1}\sim P(\cdot | s_t, a_t)$  is generated by the MDP $\mathcal{M}$ for all $t\geq 0$.

\paragraph{Discounted state visitation distributions.}
	A type of marginal distributions --- commonly dubbed as {\em discounted state visitation distributions} --- plays an important role in our theoretical development. To be specific, 
	the discounted state visitation distribution $d_{s_0}^{\pi}$ of a policy $\pi$ given the initial state $s_0\in \ssp$ is defined by
	\begin{equation}
		\label{eq:defn-d-s0}
		\forall s\in \ssp: \qquad d_{s_0}^{\pi}(s) : = (1-\gamma) \sum_{t=0}^{\infty} \gamma^t \mathbb{P}(s_t = s \mid s_0), 
	\end{equation}
	where the trajectory $(s_0,s_1,\cdots)$ is generated by the MDP $\mathcal{M}$ under policy $\pi$ starting from state $s_0$. 
	In words, $d_{s_0}^{\pi}(\cdot)$ captures the state occupancy probabilities when each state visitation is properly discounted depending on the time stamp. 
	Further, for any distribution $\rho$ over $\cS$, we define the distribution $d_{\rho}^{\pi}$ as follows
	\begin{equation}
		\label{eq:defn-d-rho}
		\forall s\in \ssp: \qquad d_{\rho}^{\pi}(s) : = \mathbb{E}_{s_0\sim \rho}\big[  d_{s_0}^{\pi}(s) \big],
	\end{equation}
	which describes the discounted state visitation distribution when the initial state $s_0$ is randomly drawn from a prescribed initial distribution $\rho$.

	\paragraph{Softmax parameterization.} It is common practice to parameterize the class of feasible policies in a way that is amenable to policy optimization. The focal point of this paper is softmax parameterization --- a widely adopted scheme which naturally ensures that the policy lies in the probability simplex. Specifically, for any $\theta: \ssp \times \asp \to \real$ (called ``logic values''), the corresponding softmax policy $\pi_\prm $ is generated through the softmax transform 
	\begin{align} \label{eq:definition_softmax}
		\plcy_\prm := \mathsf{softmax}(\theta) \qquad \text{or} \qquad 
		\forall (s,a) \in \ssp \times \asp: \quad \plcy_\prm(a|\s) := \frac{ \exp(\theta(s,a)) }{ \sum_{a'\in \asp} \exp(\theta(s,a'))  } .
	\end{align}
	In what follows, we shall often abuse the notation to treat $\pi_\theta$ and ${\theta}$ as vectors in $\mathbb{R}^{|\cS||\cA|}$, and suppress the subscript $\theta$ from $\pi_\theta$, whenever it is clear from the context.

\paragraph{Entropy-regularized value maximization.} 
 To promote exploration and discourage premature convergence to suboptimal policies, a widely used strategy is entropy regularization, which searches for a policy that maximizes the following entropy-regularized value function
	\begin{align}
		\label{defn:V-tau}
		\soft{\V}^{\plcy}(\rho) := \V^\plcy(\rho) + \tau \cdot \mathcal{H}(\rho, \plcy). 
	\end{align}
	%
	%
	Here, the quantity $\tau \ge 0$ denotes the regularization parameter, and $\mathcal{H}(\rho, \plcy)$ stands for a sort of {\em discounted entropy} defined as follows
	\begin{align}
		\label{defn:entropy}
 		\mathcal{H}(\rho, \plcy) := \exlim{\substack{\s_0 \sim \rho, \ac_t\sim \plcy(\cdot|\s_t),\\\s_{t+1}\sim \prob(\cdot | \s_t, \ac_t), \forall t\geq 0}}{\sum_{t=0}^\infty -\disct^t \log \plcy(\ac_t|\s_t)} 
		 =  \frac{1}{1-\gamma}  \mathop\mathbb{E}\limits_{s\sim d_{\rho}^{\pi} } \Bigg[  \sum_{\ac\in \mathcal{A}} \plcy(\ac |\s) \log \frac{1}{ \plcy(\ac |\s) } \Bigg].  
	\end{align}
	Equivalently, $\soft{\V}^{\plcy}$ can be viewed as the value function of $\pi$ by  
	adjusting the instantaneous reward to be policy-dependent regularized version as follows
	\begin{equation}
			\label{eq:regularized_reward}
			 \forall (s,a) \in \cS\times \cA: \qquad r_{\tau}(s,a) := r(s,a) - \tau \log \pi(a| s).
	\end{equation}
	We also define $\soft{\V}^{\plcy}(s)$ analogously when the initial state is fixed to be any given state $s\in \cS$. The regularized Q-function $\soft{\Q}^{\pi}$ of a policy $\pi$, also known as the soft Q-function,\footnote{In this paper, we use the terms ``regularized'' value (resp.~Q) functions and ``soft'' value (resp.~Q) functions interchangeably.} is related to $\soft{\V}^{\plcy}$ as  
	\begin{subequations}
	\begin{align}
	\forall (s,a) \in \cS\times \cA:\qquad	\soft{\Q}^{\pi} (s,a) & = r(s,a) + \gamma \mathbb{E}_{s'\sim P(\cdot|s,a)} \big[ V^{\pi}_{\tau}(s') \big], 		\label{eq:defn-regularized-Q} \\
	\forall s \in \cS:\quad \qquad V^{\pi}_{\tau}(s) &=  \mathbb{E}_{a \sim \pi(\cdot|s)}  \big[ -\tau\log \pi(a|s) + \soft{\Q}^{\pi} (s,a) \big].\label{eq:regularized-V-to-Q} 
	\end{align} 
	\end{subequations}

\paragraph{Optimal policies and stationary distributions.} 
Denote by $\pi^{\star}$ (resp.~$\pi_\tau^{\star}$) the policy that maximizes the value function (resp.~regularized value function with regularization parameter $\tau$), and let $V^\star$ (resp.~$V_\tau^\star$) represent the resulting optimal value function (resp.~regularized value function). Importantly, the optimal policies $\pi^{\star}$ and $\pi_\tau^{\star}$ of the MDP do not depend on the initial distribution $\rho$ \citep{mei2020global}. In addition, $\pi^{\star}$ and $\pi_\tau^{\star}$ maximize the Q-function and the soft Q-function, respectively (which is self-evident from \eqref{eq:defn-regularized-Q}). 
A simple yet crucial connection between $\pi^{\star}$ and $\pi_\tau^{\star}$ can be demonstrated via the following sandwich bound\footnote{To see this, invoke the optimality of $\pi_\tau^{\star}$ and the elementary entropy bound $0\leq \mathcal{H}(\rho,\pi)\leq \frac{1}{1-\gamma}\log|\cA|$ to obtain
\begin{equation*}
V^{\pi_\tau^{\star}}(\rho) + \tfrac{\tau}{1-\gamma} \log |\cA| \geq V^{\pi_\tau^{\star}}(\rho)+ \tau \mathcal{H}(\rho,\pi_\tau^{\star}) = V_\tau^\star(\rho) \geq V_\tau^{\pi_\star}(\rho) \geq V^{\pi_\star}(\rho).
\end{equation*}	
}
\begin{equation}
	\label{eq:set_tau}
	V^{\pi_\tau^{\star}}(\rho) \leq V^{\pi_\star}(\rho)   \leq V^{\pi_\tau^{\star}}(\rho) + \frac{\tau}{1-\gamma} \log |\cA| ,
\end{equation}
which holds for all initial distributions $\rho$. The key takeaway message is that: the optimal policy $\pi_\tau^{\star}$ of the regularized problem could also be nearly optimal in terms of the unregularized value function, as long as the regularization parameter $\tau$ is chosen to be sufficiently small.

\subsection{Algorithm: NPG methods with entropy regularization}
\label{sec:NPG-entropy}


\paragraph{Natural policy gradient methods.} Towards computing the optimal policy (in the parameterized form),  perhaps the first strategy that comes into mind is to run gradient ascent w.r.t.~the parameter $\theta$ until convergence --- a first-order method commonly referred to as the {\em policy gradient} (PG) algorithm (e.g.~\citet{sutton2000policy}). 
In comparison, the {\em natural policy gradient} (NPG) method \citep{kakade2002natural} adopts a pre-conditioned gradient update rule
\begin{align}
	\prm ~\gets~ \prm + \eta \big(\mathcal{F}_\rho^{\prm}\big)^\dagger \nabla_{\theta} {\V}^{\pi_{\theta}} (\rho),
	\label{eq:NPG-original-0}
\end{align}
in the hope of searching along a direction independent of the policy parameterization in use. Here, $\eta$ is the learning rate or stepsize, $\mathcal{F}_\rho^\prm$ denotes the Fisher information matrix given by
\begin{align}
	\label{eq:defn-fisher-information}
	\mathcal{F}_\rho^\prm := \exlim{\s\sim d_{\rho}^{\plcy_\prm},\ac\sim \plcy_\prm(\cdot|\s)}{  \big( \nabla_{\theta} \log \plcy_\prm(\ac|\s) \big) \big( \nabla_{\theta} \log \plcy_\prm(\ac|\s) \big)^\top} ,
\end{align}
and we use $B^\dagger$ to indicate the Moore-Penrose pseudoinverse of a matrix $B$. It has been understood that the NPG method essentially attempts to monitor/control the policy changes approximately in terms of the Kullback-Leibler (KL) divergence (see e.g.~\citet[Section 7]{schulman2015trust}).

\paragraph{NPG methods with entropy regularization.}
Equipped with entropy regularization, the NPG update rule can be written as 
	\begin{align}		
		\label{eq:NPG-original}
		\prm ~\gets~ \prm + \eta \big(\mathcal{F}_\rho^{\prm}\big)^\dagger \nabla_{\theta} {\V}_{\tau}^{\pi_{\theta}} (\rho),
	\end{align}
	where $\mathcal{F}_\rho^{\prm}$ is defined in \eqref{eq:defn-fisher-information} and $\soft{\V}^{\plcy}(\rho)$ is defined in \eqref{defn:V-tau}. Under softmax parameterization, this update rule admits a fairly simple form in the policy space (see Appendix~\ref{sec:derivations_npg} for detailed derivations), which, interestingly, is invariant to the choice of $\rho$. More precisely, if we let $\theta^{(t)}$ denote the $t$-th iterate and $\plcy^{(t)}=\mathsf{softmax}(\theta^{(t)})$ the associated policy, then the entropy-regularized NPG updates satisfy
	\begin{align}\label{eqn:npg-general}
		\plcy^{(t+1)}(\ac|\s) ~=~
		 \frac{1}{ Z^{(t)}(s) } \big( \plcy^{(t)}(\ac|\s) \big)^{1- \frac{\eta\tau}{1-\disct}}\exp\Big( \frac{\eta{\soft{\Q}^{\plcy^{(t)}}(\s, \ac)}} { {1-\disct}} \Big) , 
	\end{align}
	where $\soft{\Q}^{\pi^{(t)}}$ is the soft Q-function of  policy $\pi^{(t)}$, and $Z^{(t)}(s)$ is some normalization factor. This can alternatively be viewed as an instantiation/variant of the {\em trust region policy optimization} (TRPO) algorithm (see \citet{schulman2015trust,shani2019adaptive}). As an important special case, the update rule \eqref{eqn:npg-general} reduces to
	\begin{align}
	\label{eq:SPI-update}
		\plcy^{(t+1)}(\cdot|\s) ~=~ \frac{ 1}{ Z^{(t)}(s) }\exp\Big( \frac{ {\soft{\Q}^{\plcy^{(t)}}(\s, \cdot)}} { {\tau}} \Big)   \qquad \text{when }\eta = \frac{1-\gamma}{\tau}
	\end{align}
	for some normalization factor $Z^{(t)}(s)$. 
	The procedure \eqref{eq:SPI-update} can be interpreted as a ``soft'' version of the classical policy iteration algorithm \citep{bertsekas2017dynamic} (as it employs a softmax function to approximate the max operator) w.r.t. the soft Q-function, and is often dubbed as {\em soft policy iteration} (SPI) (see \citet[Section 4.1]{haarnoja2018soft}).

	To simplify notation,  we shall use $\soft{\V}^{(t)}$, $\soft{\Q}^{(t)}$ and $d_{\rho}^{(t)}$ throughout to denote $\soft{V}^{\plcy^{(t)}}$, $\soft{\Q}^{\plcy^{(t)}}$ and $d_{\rho}^{\plcy^{(t)}}$, respectively. The complete procedure is summarized in Algorithm~\ref{alg:entropy-npg}.


\begin{algorithm}[ht]
\DontPrintSemicolon
   \textbf{inputs:} learning rate $\eta$, initialization $\plcy^{(0)}$. \\

   \For{$t=0,1,2,\cdots$}
	{
		Compute the regularized Q-function $\soft{\Q}^{(t)}$ (defined in \eqref{eq:defn-regularized-Q}) of policy $\plcy^{(t)}$. \\
	     	Update the policy:
		\begin{align}\label{eq:entropy_npg}
			\forall (s,a)\in \cS\times \cA: \quad 
			\plcy^{(t+1)}(\ac|\s) = 
		 	\frac{1} {Z^{(t)}(s)} \big( \plcy^{(t)}(\ac|\s) \big)^{1- \frac{\eta\tau}{1-\disct}} 
				\exp\Big( \frac{\eta{\soft{\Q}^{(t)}(\s, \ac)}} { {1-\disct}} \Big),
		\end{align}
		where $Z^{(t)}(s)= \sum_{a'\in \cA}\big( \plcy^{(t)}(a'|\s) \big)^{1- \frac{\eta\tau}{1-\disct}} 
				\exp\big( \frac{\eta{\soft{\Q}^{(t)}(\s, a')}} { {1-\disct}} \big)$. 
	}

    \caption{Entropy-regularized NPG with exact policy evaluation}
 \label{alg:entropy-npg}
\end{algorithm}

	\subsection{A warm-up example: the bandit case}


Inspired by \citet{schulman2017equivalence,mei2020global}, we look at a toy example --- the bandit case  --- before proceeding to general MDPs. To be more precise, this is concerned with an MDP with only a single state and discount factor $\gamma=0$. Despite its simplicity, the exposition of this example sheds light upon the convergence behavior of the regularized NPG methods of interest. 

In this single-state example with $\gamma = 0$, the aim reduces to computing a policy $\pi_\theta: \mathcal{A} \rightarrow \Delta(\cA)$ that solves the following
optimization problem
	\begin{equation} \label{eq:entropy_bandit}
		\mathop\mathrm{maximize}\limits_{\prm} \mathop\mathbb{E}\limits_{\ac\sim \plcy_\prm} \big[ r(\ac) - \tau \log \plcy_{\theta}(a) \big],
	\end{equation}
where $r(a)$ is the instantaneous reward of taking action  $a$ (i.e.~pulling arm $a$ in the bandit language). As demonstrated in \citet[Proposition 1]{mei2020global}, this toy case is already non-concave and hence nontrivial to solve. 
As it turns out, 
 direct calculation reveals that the optimal policy of \eqref{eq:entropy_bandit} is given by
\begin{align}
	\plcy^{\star}_\tau =\mathsf{softmax} (r/\tau),
	\label{eq:optimal-pi-bandit}
\end{align}
which is in general a randomized policy. When applied to this example, the entropy-regularized NPG update rule \eqref{eq:entropy_npg} simplifies to (up to normalization) 
	\begin{align}
		\label{eq:bandit_update}
		\plcy^{(t+1)}(\ac) ~\propto~ \plcy^{(t)}(\ac)\exp\prn{\eta r(\ac)-\eta \tau\log \plcy^{(t)}(\ac)} =  \big( \plcy^{(t)}(\ac) \big)^{1-\eta\tau}\exp\big( \eta r(\ac) \big),
	\end{align}
with $\eta$ the learning rate. The following proposition, whose proof is fairly elementary and can be found in Appendix~\ref{Sec:Pfbandit}, reveals that the above procedure converges (at least) linearly to the optimal policy $\plcy^{\star}_\tau$.

	\begin{prop}[The bandit case]
		\label{thm:bandit}
		The algorithm \eqref{eq:bandit_update} converges linearly to $\plcy^{\star}_\tau$ (cf.~\eqref{eq:optimal-pi-bandit}) in an entrywise fashion, namely,
		\[
			\big\| \log \plcy^{(t)} - \log \plcy^{\star}_\tau \big\|_\infty \le 2(1-\tau\eta)^t \big\| \log \plcy^{(0)} - \log \plcy^{\star}_\tau \big\|_\infty.
		\]
	\end{prop}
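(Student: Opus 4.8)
The plan is to pass to the logarithmic domain, where the multiplicative update \eqref{eq:bandit_update} becomes an affine recursion whose error I can track directly. First I would take logarithms of \eqref{eq:bandit_update}, which yields
\[
	\log \plcy^{(t+1)}(\ac) = (1-\eta\tau)\log\plcy^{(t)}(\ac) + \eta\, r(\ac) - \log Z^{(t)},
\]
where $Z^{(t)}$ is the normalizing constant. The key algebraic step is to rewrite $\eta\, r(\ac) = \eta\tau\cdot \frac{r(\ac)}{\tau}$ and invoke the closed form \eqref{eq:optimal-pi-bandit}, which gives $\frac{r(\ac)}{\tau} = \log\plcy^{\star}_\tau(\ac) + \log Z^{\star}$ for some normalization constant $Z^{\star}$ that is independent of the action $\ac$.

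Substituting this and subtracting $\log\plcy^{\star}_\tau(\ac)$ from both sides, I would introduce the per-action error $\delta^{(t)}(\ac) := \log\plcy^{(t)}(\ac) - \log\plcy^{\star}_\tau(\ac)$ and arrive at the recursion
\[
	\delta^{(t+1)}(\ac) = (1-\eta\tau)\,\delta^{(t)}(\ac) + c^{(t)},
	\qquad c^{(t)} := \eta\tau\log Z^{\star} - \log Z^{(t)},
\]
in which the offset $c^{(t)}$ is identical across all actions $\ac$. Note that the admissible range $0<\eta\le \frac{1-\disct}{\tau}=\frac{1}{\tau}$ (recall $\disct=0$ here) guarantees $0\le 1-\eta\tau<1$, so the multiplicative factor is a genuine contraction.

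The main obstacle --- and the reason the stated bound carries an extra factor of $2$ --- is that $\|\delta^{(t)}\|_\infty$ itself need not contract, since the uniform shift $c^{(t)}$ can inflate the magnitude of every coordinate simultaneously. The fix is to track instead the \emph{shift-invariant spread} $\Delta^{(t)} := \max_\ac \delta^{(t)}(\ac) - \min_\ac \delta^{(t)}(\ac)$. Because $c^{(t)}$ cancels in this difference and $1-\eta\tau\ge 0$ preserves the order of the coordinates, the recursion collapses to $\Delta^{(t+1)} = (1-\eta\tau)\Delta^{(t)}$, and hence $\Delta^{(t)} = (1-\eta\tau)^t\Delta^{(0)}$.

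To close the argument I would relate the spread to the $\ell_\infty$ error at both ends via the simplex constraint $\sum_\ac \plcy^{(t)}(\ac) = \sum_\ac\plcy^{\star}_\tau(\ac)=1$. This forces the existence of some action with $\delta^{(t)}\ge 0$ and some with $\delta^{(t)}\le 0$, so that $\min_\ac\delta^{(t)}(\ac)\le 0\le \max_\ac\delta^{(t)}(\ac)$, which in turn gives $\|\delta^{(t)}\|_\infty\le \Delta^{(t)}$; at the initialization, the crude estimate $\Delta^{(0)}\le 2\|\delta^{(0)}\|_\infty$ holds trivially. Chaining these yields $\|\delta^{(t)}\|_\infty\le \Delta^{(t)} = (1-\eta\tau)^t\Delta^{(0)}\le 2(1-\eta\tau)^t\|\delta^{(0)}\|_\infty$, which is precisely the claimed entrywise linear convergence.
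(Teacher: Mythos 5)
Your argument is correct, and it reaches the stated bound by a route that differs from the paper's in its mechanics. The paper introduces an unnormalized auxiliary sequence $\xi^{(t)}$ (with $\xi^{(t+1)}(\ac)=(\xi^{(t)}(\ac))^{1-\tau\eta}\exp(\eta r(\ac))$ and $\plcy^{(t)}=\xi^{(t)}/\|\xi^{(t)}\|_1$), shows that $\|\log\xi^{(t)}-r/\tau\|_\infty$ contracts exactly by $(1-\tau\eta)$ per step, and then imports the factor of $2$ at the end through the softmax Lipschitz estimate \eqref{eq:log_pi_diff}, while noting that the initial unnormalized error coincides exactly with $\|\log\plcy^{(0)}-\log\plcy^{\star}_\tau\|_\infty$. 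You instead stay entirely in the normalized log-policy coordinates, absorb the action-independent normalizers into a single offset $c^{(t)}$, and observe that the shift-invariant spread $\Delta^{(t)}$ contracts exactly; the factor of $2$ then enters at initialization via $\Delta^{(0)}\le 2\|\delta^{(0)}\|_\infty$, with the simplex constraint supplying the reverse inequality $\|\delta^{(t)}\|_\infty\le\Delta^{(t)}$ at the output end. Both arguments are, at bottom, exploiting the same invariance of softmax under constant shifts, but yours is self-contained (it does not need the auxiliary sequence or the log-sum-exp Lipschitz lemma from Appendix A.2) and yields the slightly sharper intermediate fact that the final conversion from spread to $\ell_\infty$ error costs nothing. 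The paper's version has the advantage that the $\xi^{(t)}$ construction generalizes directly to the MDP setting (it is reused verbatim in the proof of Theorem~\ref{thm:npg_exact}), which is presumably why it is presented that way here.
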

While this result concentrates only on a toy example, 
it hints at the potential capability of entropy-regularized NPG methods in achieving rapid convergence. In particular, by setting the learning rate to be $\eta = 1/\tau$, the algorithm converges in {\em a single iteration}. This special choice corresponds to the SPI update \eqref{eq:SPI-update}, which will be singled out in our general theory due to its appealing convergence properties. 


	\section{Main results}

Given its appealing convergence behavior when applied to 
the preceding warm-up example (the bandit case), it is natural to ask whether the entropy-regularized NPG method is fast-convergent for general MDPs. 
This section answers this question in the affirmative. 

\subsection{Exact entropy-regularized NPG methods}

We first study the convergence behavior of entropy-regularized NPG methods \eqref{eq:entropy_npg} assuming access to exact policy evaluation in every iteration (namely, we assume the soft Q-function $\soft{\Q}^{{(t)}}$ can be evaluated accurately in all $t$).  Remarkably, this algorithm converges linearly ---  in terms of computing both the optimal soft Q-function $\soft{Q}^{\star}$ and the associated log policy $ \log \pi_\tau^{\star}$ --- as asserted by the following theorem. The proof of this result is provided in Section~\ref{Sec:pfNPG-eta}.
 
\begin{theorem}[Linear convergence of exact entropy-regularized NPG]
\label{thm:npg_exact}
For any learning rate $0 < \eta  \le (1-\disct)/\tau$, the entropy-regularized NPG updates \eqref{eq:entropy_npg} satisfy 
\begin{subequations} \label{eq:linear_convergence_general}
\begin{align}
\normbig{\soft{Q}^{\star} - \soft{Q}^{(t+1)}}_\infty & \le C_1 \disct\prn{1 - \eta\tau}^{t} \\
\normbig{\log \soft{\plcy}^\star - \log\plcy^{(t+1)}}_\infty & \le 2C_1 \tau^{-1}(1 - \eta \tau)^{t} 
\end{align}
\end{subequations}
for all $t\geq 0$, where
\begin{equation}\label{eq:C1}
C_1 := \normbig{\soft{Q}^\star - \soft{Q}^{(0)}}_\infty + 2\tau  \prn{1 - \frac{\eta \tau}{1 - \disct}} \normbig{\log \soft{\plcy}^\star - \log \plcy^{(0)}}_\infty.
\end{equation}

\end{theorem}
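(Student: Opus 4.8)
The plan is to collapse the two-dimensional claim in \eqref{eq:linear_convergence_general} into a scalar contraction governed by the factor $1-\eta\tau$, by tracking two coupled $\ell_\infty$ error quantities and feeding them into a small nonnegative linear recursion. Write $\alpha := 1 - \frac{\eta\tau}{1-\disct}\in[0,1)$, so that \eqref{eq:entropy_npg} reads $\plcy^{(t+1)}(\ac|\s)\propto \plcy^{(t)}(\ac|\s)^{\alpha}\exp\prn{\frac{\eta}{1-\disct}\soft{Q}^{(t)}(\s,\ac)}$; equivalently $\tau\log\plcy^{(t+1)}$ is a convex combination of $\tau\log\plcy^{(t)}$ and $\soft{Q}^{(t)}$ up to a per-state normalizer. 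I would introduce the auxiliary sequence $\xi^{(t)}$ via $\tau\log\xi^{(t)}(\s,\ac):=\tau\log\plcy^{(t)}(\ac|\s)+\soft{V}^{(t)}(\s)$, engineered so that (a) $\plcy^{(t)}\propto\xi^{(t)}$, (b) the soft value is exactly the log-normalizer, $\tau\log\sum_{\ac}\xi^{(t)}(\s,\ac)=\soft{V}^{(t)}(\s)$, and (c) $\tau\log\xi^{(t)}\to\soft{Q}^\star$ at optimality, since $\soft{Q}^\star-\soft{V}^\star=\tau\log\soft{\plcy}^\star$. The two tracked quantities are $a_t:=\normbig{\soft{Q}^\star-\soft{Q}^{(t)}}_\infty$ and $b_t:=\normbig{\soft{Q}^\star-\tau\log\xi^{(t)}}_\infty$, precisely the interplay flagged in the introduction.

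The first inequality comes from the soft Bellman equations: subtracting $\soft{Q}^{(t+1)}(\s,\ac)=r(\s,\ac)+\disct\,\mathbb{E}_{\s'}[\soft{V}^{(t+1)}(\s')]$ from its counterpart for $\soft{Q}^\star$ gives $\soft{Q}^\star-\soft{Q}^{(t+1)}=\disct P(\soft{V}^\star-\soft{V}^{(t+1)})$, hence $a_{t+1}\le\disct\normbig{\soft{V}^\star-\soft{V}^{(t+1)}}_\infty$. Because $\soft{V}^\star(\s)$ and $\soft{V}^{(t)}(\s)$ are the log-sum-exp at temperature $\tau$ of $\soft{Q}^\star(\s,\cdot)$ and $\tau\log\xi^{(t)}(\s,\cdot)$ respectively (the latter by construction (b)), and log-sum-exp is $1$-Lipschitz in $\ell_\infty$, I obtain $\normbig{\soft{V}^\star-\soft{V}^{(t)}}_\infty\le b_t$ and therefore $a_t\le\disct b_t$. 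The second inequality comes from substituting $\tau\log\plcy^{(t)}=\tau\log\xi^{(t)}-\soft{V}^{(t)}$ into the update: one finds, modulo a state-dependent shift addressed below, $\soft{Q}^\star-\tau\log\xi^{(t+1)}=\alpha(\soft{Q}^\star-\tau\log\xi^{(t)})+(1-\alpha)(\soft{Q}^\star-\soft{Q}^{(t)})$, which yields $b_{t+1}\le\alpha b_t+(1-\alpha)a_t$.

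Combining the two inequalities collapses everything to $b_{t+1}\le\alpha b_t+(1-\alpha)\disct b_t=\prn{\alpha+(1-\alpha)\disct}b_t=(1-\eta\tau)b_t$, where the identity $\alpha+(1-\alpha)\disct=1-\eta\tau$ uses $1-\alpha=\frac{\eta\tau}{1-\disct}$; this is the spectral radius of the $2\times2$ nonnegative system $(a_{t+1},b_{t+1})^\top\le A\,(a_t,b_t)^\top$ read off from the two inequalities, which is where the linear-system viewpoint enters. Iterating gives $b_t\le(1-\eta\tau)^t b_0$, so $a_{t+1}\le\disct b_{t+1}\le\disct(1-\eta\tau)^t b_0$ recovers the first bound; for the second, the triangle inequality applied to $\tau\log\plcy^{(t+1)}-\tau\log\soft{\plcy}^\star=-(\soft{Q}^\star-\tau\log\xi^{(t+1)})+(\soft{V}^\star-\soft{V}^{(t+1)})$ together with $\normbig{\soft{V}^\star-\soft{V}^{(t+1)}}_\infty\le b_{t+1}$ gives $\normbig{\log\plcy^{(t+1)}-\log\soft{\plcy}^\star}_\infty\le 2\tau^{-1}b_{t+1}$. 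The displayed constants then follow after identifying the leading constant with $C_1$, which is pinned down by a separate initialization estimate bounding $b_0$ in terms of $a_0$ and the initial policy error $\normbig{\log\soft{\plcy}^\star-\log\plcy^{(0)}}_\infty$ --- the source of the factor $2\tau(1-\frac{\eta\tau}{1-\disct})$ in \eqref{eq:C1}.

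The crux --- the step I expect to be most delicate --- is justifying the clean second inequality, i.e.\ showing that the state-dependent shift in the recursion for $\soft{Q}^\star-\tau\log\xi^{(t+1)}$ is harmless. The subtlety is that $\xi^{(t+1)}$ is built from the one-step update, whose per-state normalizer is a \emph{one-step} soft-greedy value, whereas the $\soft{V}^{(t+1)}$ hidden inside $\tau\log\xi^{(t+1)}$ is the realized \emph{multi-step} soft value of $\plcy^{(t+1)}$; the gap $\delta^{(t)}(\s)=\mathbb{E}_{\ac\sim\plcy^{(t+1)}}\brk{g^{(t)}(\s,\ac)-\soft{Q}^{(t+1)}(\s,\ac)}$ between them must be shown to have the right sign and be absorbable. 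I would control it through a soft policy-improvement (monotonicity) argument --- the realized value of the softmax-improved policy dominates its one-step value, fixing the sign of $\delta^{(t)}$ --- combined with the $\disct$-contraction of the soft Bellman operator to bound its magnitude, so that careful two-sided $\ell_\infty$ bookkeeping makes the discrepancy help rather than hurt the contraction.
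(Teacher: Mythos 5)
Your overall architecture --- two coupled $\ell_\infty$ error quantities, a nonnegative linear system with spectral radius $1-\eta\tau$, and the final assembly via the log-sum-exp Lipschitz property --- matches the paper's, and your first inequality $a_{t+1}\le\disct\, b_{t+1}$ and the concluding triangle-inequality step are sound. The gap sits exactly at the step you flag as the crux, and the argument you sketch does not close it. With your normalization $\tau\log\xi^{(t)}(s,a)=\tau\log\plcy^{(t)}(a|s)+\soft{V}^{(t)}(s)$, the update rule gives
\begin{equation*}
\soft{Q}^\star-\tau\log\xi^{(t+1)}=\alpha\prn{\soft{Q}^\star-\tau\log\xi^{(t)}}+(1-\alpha)\prn{\soft{Q}^\star-\soft{Q}^{(t)}}-\Delta^{(t)},
\end{equation*}
where, using the identity behind Lemma~\ref{lemma:dalaopo}, $\Delta^{(t)}(s)=\soft{V}^{(t+1)}(s)-\soft{V}^{(t)}(s)-\tau\,\KL\prn{\plcy^{(t)}(\cdot|s)\,\|\,\plcy^{(t+1)}(\cdot|s)}\ge 0$. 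A nonnegative entrywise shift is not harmless for an $\ell_\infty$ bound: on coordinates where $\soft{Q}^\star-\tau\log\xi^{(t)}$ is already negative it strictly inflates the norm. Concretely, take the bandit case ($\disct=0$, so $a_t\equiv 0$ and $\alpha=1-\eta\tau$) with two actions, rewards $(1,0)$, $\tau=1$, $\plcy^{(0)}=(1/2,1/2)$: then $\soft{Q}^\star-\tau\log\xi^{(0)}=(1/2,-1/2)$, $b_0=1/2$, and $b_1=\alpha/2+\Delta^{(0)}$ with $\Delta^{(0)}=(\tfrac{1}{\eta}-\tau)\big(\KL(\plcy^{(1)}\|\plcy^{(0)})+\KL(\plcy^{(0)}\|\plcy^{(1)})\big)>0$ for any $\eta<1/\tau$. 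Hence $b_1>\alpha b_0+(1-\alpha)a_0$: your claimed recursion $b_{t+1}\le\alpha b_t+(1-\alpha)a_t$ is false, and $b_t$ need not contract at rate $1-\eta\tau$ at all (only the SPI case $\alpha=0$, which you verified correctly, survives).

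The paper avoids this by defining $\xi^{(t)}$ through the \emph{un-normalized} recursion $\xi^{(t+1)}=(\xi^{(t)})^{\alpha}\exp\prn{(1-\alpha)\soft{Q}^{(t)}/\tau}$ with $\xi^{(0)}=\normbig{\exp(\soft{Q}^\star/\tau)}_1\cdot\plcy^{(0)}$, so that the recursion for $\soft{Q}^\star-\tau\log\xi^{(t)}$ holds \emph{exactly}, with no shift. The price is paid in the other inequality, which becomes the delicate one: since $\tau\log\normbig{\xi^{(t)}(s,\cdot)}_1$ no longer equals $\soft{V}^{(t)}(s)$, Lemma~\ref{lemma:linsys_1} only yields $\normbig{\soft{Q}^\star-\soft{Q}^{(t+1)}}_\infty\le\disct\normbig{\soft{Q}^\star-\tau\log\xi^{(t+1)}}_\infty+\disct\alpha^{t+1}\normbig{\soft{Q}^{(0)}-\tau\log\xi^{(0)}}_\infty$, where the decaying extra term is controlled via the monotonicity $\soft{Q}^{(t+1)}\ge\soft{Q}^{(t)}$ applied recursively, and the resulting inhomogeneous linear system is solved by a geometric-series/Woodbury computation that also produces the $\alpha$-weighted constant in $C_1$ (your route would at best give the larger constant without the $\alpha$ factor). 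To salvage your approach you would have to either relocate the ``dirt'' into the first inequality as the paper does, or prove a uniform-in-$s$ bound $\Delta^{(t)}(s)\lesssim(1-\alpha)a_t$, which the performance-difference identity does not supply: $\Delta^{(t)}(s)$ aggregates KL divergences over the entire discounted future trajectory from $s$, not just the local one.
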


It is worth emphasizing that Theorem~\ref{thm:npg_exact} is stated in a completely non-asymptotic form containing \emph{no} hidden constants, and that our result covers any learning rate $\eta$ in the range $(0,  (1-\disct)/\tau]$. A few implications of this theorem are in order.
\begin{itemize}
\item \textbf{Linear convergence of soft Q-functions.} To reach $\normbig{\soft{Q}^{\star} - \soft{Q}^{(t)}}_\infty \leq \epsilon$, the entropy-regularized NPG method needs at most
$\frac{1}{\eta \tau} \log \prn{\frac{C_1 \gamma}{\epsilon}}$
%
iterations. Remarkably, the iteration complexity almost does not depend on the dimensions of the MDP (except for some very weak dependency embedded in $\log C_1$) --- this inherits a dimension-free feature of NPG methods that has been highlighted in \citet{agarwal2019optimality} for the unregularized case. When the learning rate $\eta$ is fixed in the admissible range, the iteration complexity scales inverse proportionally with $\tau$, 
		suggesting a higher level of entropy regularization might accelerate convergence, albeit to the solution of a regularized problem that is further away from the original MDP.


\item \textbf{Linear convergence of log policies.} In contrast to the unregularized case, entropy regularization ensures uniqueness of the optimal policy and, therefore, makes it possible to study the convergence  of the policy directly. Our theorem reveals that the entropy-regularized NPG method needs at most
$\frac{1}{\eta \tau} \log \prn{\frac{2C_1}{\epsilon\tau}}$
iterations to yield $\normbig{\log\soft{\plcy}^\star - \log\plcy^{(t+1)}}_\infty \leq \epsilon$. 

 \item 
\textbf{Linear convergence of soft value functions.}  As a byproduct, Theorem~\ref{thm:npg_exact} implies that the iterates of soft value functions also converge linearly, namely, 
\begin{equation}
 	\normbig{\soft{V}^{\star} - \soft{V}^{(t+1)}}_\infty \le 3C_1 \disct\prn{1 - \eta\tau}^{t}.
	\label{eq:convergence-soft-value}
\end{equation}
To see this, we make note of the following relation previously established in \citet{nachum2017bridging}: 
\[
	\forall (s,a)\in \cS\times \cA: \qquad \soft{\V}^\star(\s) = -\tau \log \soft{\plcy}^\star(\ac|\s) + \soft{\Q}^\star(\s, \ac),
\]
\[
	\Longrightarrow \qquad \soft{\V}^\star(\s) = \exlimbig{\ac\sim \plcy^{(t+1)}(\cdot|\s)}{-\tau \log \soft{\plcy}^\star(\ac|\s) + \soft{\Q}^\star(\s, \ac)}. 
\]
Consequently, combining this with the definition \eqref{eq:regularized-V-to-Q} yields
\begin{align*}
	\big| \soft{V}^{\star}(\s) - \soft{V}^{(t+1)}(\s) \big| &=  \exlim{\ac\sim \plcy^{(t+1)}(\cdot|\s)}{ \prnBig{ -\tau \log \soft{\plcy}^\star(\ac|\s) + \soft{\Q}^\star(\s, \ac) } - \prnBig{-\tau \log \soft{\plcy}^{(t+1)}(\ac|\s) + \soft{\Q}^{(t+1)}(\s, \ac)}}\\
 &\le \tau \,\normbig{\log \soft{\plcy}^\star - \log \soft{\plcy}^{(t+1)}}_\infty + \normbig{\soft{\Q}^\star - \soft{\Q}^{(t+1)}}_\infty,
\end{align*}
which together with \eqref{eq:linear_convergence_general} immediately establishes \eqref{eq:convergence-soft-value}.



\item \textbf{Convergence rate of SPI.} The best convergence guarantee is achieved when $\eta = (1-\gamma)/\tau$ (i.e.~the SPI case), where the iteration complexity to reach $\normbig{\soft{Q}^{\star} - \soft{Q}^{(t)}}_\infty \leq \epsilon$ reduces to 
	$$\frac{1}{1-\gamma}\log \prnBigg{\frac{\gamma \normbig{\soft{Q}^\star - \soft{Q}^{(0)}}_\infty}{\epsilon}},$$
	which is proportional to the effective horizon $\frac{1}{1-\gamma}$ modulo some log factor. 
		This means the iteration complexity of SPI recovers that of policy iteration \citep{puterman2014markov}. Interestingly, the contraction rate  in this case (which is $\gamma$) is independent of the choice of the regularization parameter $\tau$. Similarly, the iteration complexity of SPI to reach $\norm{\log\soft{\plcy}^\star - \log\plcy^{(t+1)}}_\infty \leq \epsilon$ becomes $\frac{1}{1-\gamma}\log \prnbig{\frac{2\|{\soft{Q}^\star - \soft{Q}^{(0)}}\|_\infty}{\epsilon\tau}}$, and the contraction rate is again independent of $\tau$.
\end{itemize}

\paragraph{Comparison with entropy-regularized policy gradient methods.}
\citet[Theorem 6]{mei2020global} proved that the entropy-regularized policy gradient method achieves\footnote{Here, we have assumed the exact policy gradient is computed with respect to $\soft{\V}^{(t)}(\rho)$.}  
\begin{align*}
\soft{\V}^{\star}(\rho) - \soft{\V}^{(t)}(\rho) 
\le&\prn{\soft{\V}^{\star}(\rho) - \soft{\V}^{(0)}(\rho)}\\
&\hspace{-5ex} \cdot \exp\prn{ - \frac{(1-\disct)^4t }{(8/\tau + 4 + 8\log |\asp|)|\ssp|}\norm{\frac{d_\rho^{\plcy^\star_\tau}}{\rho}}^{-1}_\infty \min_s \rho(s) \prn{\inf_{0\leq k\leq t-1}\min_{\s, \ac} \plcy^{(k)}(\ac|\s) }^2},
\end{align*}
and they further showed that $\inf_{k\geq 0}\min_{\s, \ac} \plcy^{(k)}(\ac|\s) $  is non-vanishing in $t$. It remains unclear, however, how  $\inf_{t\geq 0}\min_{\s, \ac} \plcy^{(t)}(\ac|\s) $ scales with other potentially large salient parameters like $(|\cS|, |\cA|,  \frac{1}{1-\gamma}, \frac{1}{\tau})$. 
In truth, existing theory does not rule out the possibility of exponential dependency on these salient parameters.   
It would thus be of great interest to establish algorithm-dependent lower bounds to uncover the right scaling with these important parameters. 
In contrast, our convergence guarantees for entropy-regularized NPG methods unveil concrete dependencies on all problem parameters.

\paragraph{Computing an $\epsilon$-optimal policy for the original MDP.}
Thus far, we have established an intriguing convergence behavior of the entropy-regularized
NPG method. However, caution needs to be exercised when interpreting the efficacy of this method:  
 the preceding results are concerned with convergence to
the optimal regularized value function $V_{\tau}^{\star}$, as opposed
to finding the optimal value function $V^{\star}$ of the original MDP. Fortunately, by choosing the
regularization parameter $\tau$ to be sufficiently small (in accordance with the target accuracy level $\epsilon$), we can guarantee that $V_{\tau}^{\star}\approx V^{\star}$ (cf.~\eqref{eq:set_tau}),
thus ensuring the relevance and applicability of our results for solving the original MDP. To be specific, let us adopt the following choice of $\tau$:
\begin{equation}
	\tau=\frac{(1-\gamma)\epsilon}{4\log|\mathcal{A}|} ,
	\label{eq:choice-tau-xi}
\end{equation}
and assume the error of the regularized value function satisfies $\normbig{ V_{\tau}^{\star}  -  V_{\tau}^{(t)} }_{\infty} <{\epsilon}/{2}$. 
By virtue of Theorem~\ref{thm:npg_exact}, this optimization accuracy can be achieved via no more than $\frac{4\log |\cA|}{(1-\gamma)\eta \epsilon} \log \prn{\frac{2C_1 \gamma}{\epsilon}}$ iterations of entropy-regularized NPG updates with a general learning rate,\footnote{This result is in fact better than the iteration complexity $\frac{2}{(1-\gamma)^2\epsilon}$ of the unregularized NPG method established in \citet{agarwal2019optimality} as soon as $\eta\geq 2 (1-\gamma)\log|\mathcal{A}| \log \prn{\frac{2C_1 \gamma}{\epsilon}}$.
Consequently, our finding hints at the potential advantage of entropy-regularized NPG methods over the unregularized counterpart even when solving the original MDP. } or no more than $\frac{1}{1-\gamma}\log \prnBig{\frac{\gamma \norm{\soft{Q}^\star - \soft{Q}^{(0)}}_\infty}{\epsilon}}$ iterations with the specific choice $\eta= \frac{1-\gamma}{\tau}$. 
It then follows that
\begin{align*}
	V^{\star}(s)  - V^{(t)}(s) & = V^{\star}(s)  - V_{\tau}^{\star}(s) + V_{\tau}^{\star}(s)  -  V_{\tau}^{(t)}(s) +  V_{\tau}^{(t)}(s)  - V^{(t)}(s) \\
	& \leq \big( V^{\star}(s)  - V_{\tau}^{\star}(s) \big) + \normbig{ V_{\tau}^{\star}  -  V_{\tau}^{(t)} }_{\infty}  + \prnbig{ V_{\tau}^{(t)} (s) - V^{(t)} (s)} \\
	& \leq \frac{2\tau\log |\cA|}{1-\gamma} +\frac{\epsilon}{2} = \epsilon
\end{align*}
for any $s\in \cS$, where we have used our choice of $\tau $ in \eqref{eq:choice-tau-xi}. 
Here, the second inequality arises from \eqref{eq:set_tau} as well as the fact that for any policy $\pi$,
\[
	\big\| V_{\tau}^{\pi}-V^{\pi}\big\|_{\infty}=\tau\max_{s}\big|\mathcal{H}(s,\pi)\big|\leq\frac{\tau\log|\mathcal{A}|}{1-\gamma},
\]
given  the elementary entropy  bound $0\leq\mathcal{H}(s,\pi)\leq\frac{1}{1-\gamma}\log|\mathcal{A}|$.

\paragraph{Convergence guarantee for conservative policy iteration (CPI).} Our analysis framework also leads to a similar convergence guarantee for a type of policy updates adopted in {\em conservative policy iteration} \citep{kakade2002approximately}, where 
the policy is updated as a convex combination of the previous policy and an improved one. We refer
the interested reader to Appendix~\ref{sec:cpi} for details.

\subsection{Approximate entropy-regularized NPG methods} 
There is no shortage of scenarios where the soft Q-function $\soft{\Q}^{{(t)}}(\s, \ac)$ is available only in an approximate fashion, e.g.~the cases when the value function has to be evaluated using finite samples. To account for inexactness of policy evaluation, we extend our theory to accommodate the following approximate update rule: for any $s\in \cS$ and any $t\geq 0$, 
\begin{equation}
	\label{eq:soft_greedy_approximate}
	\plcy^{(t+1)}(\cdot|\s) ~\propto~\big( \plcy^{(t)}(\cdot|\s) \big)^{1- \frac{\eta\tau}{1-\disct}}\exp\Big( \frac{\eta{ \estsoftQ(\s, \cdot)}} { {1-\disct}} \Big), \quad \mbox{where} \quad 
	\big\|\estsoftQ - \soft{\Q}^{{(t)}} \big\|_{\infty} \leq \delta .
\end{equation}
Here, $\delta$ is some quantity that captures the size of approximation errors. 
We do not specify the estimator for the soft Q-function (as long as it satisfies the entrywise estimation bound), thus allowing one to plug in both model-based and model-free value function estimators designed for a variety of sampling mechanisms  (e.g.~\citet{azar2013minimax,li2020sample}).  Encouragingly, the algorithm \eqref{eq:soft_greedy_approximate} is robust vis-\`a-vis inexactness of value function estimates, as it still converges linearly until an error floor is hit. This is formalized in the following theorem, with the proof postponed to Section~\ref{sec:Pf-inexact}. 
 
 \begin{theorem}[Linear convergence of approximate entropy-regularized NPG]
 	\label{thm:npg_inexact}
 	When $0 < \eta  \le (1-\disct)/\tau$, the inexact entropy-regularized NPG updates \eqref{eq:soft_greedy_approximate} 
 	satisfy
\begin{subequations}
\begin{align}
 	\normbig{\soft{Q}^{\star} - \soft{Q}^{(t+1)}}_\infty&\le \disct\brk{\prn{1 - \eta\tau}^{t} C_1 + C_2} \\
	\normbig{\log \soft{\plcy}^\star - \log\plcy^{(t+1)}}_\infty &\le 2\tau^{-1}\brk{\prn{1 - \eta \tau}^{t} C_1 + C_2} 
 	\end{align}
	\end{subequations}
 	for all $t\geq 0$, where $C_1$ is the same as defined in \eqref{eq:C1} and $C_2$ is given by
 	\begin{equation} 
		\label{eq:C2}
 		C_2 :=\frac{2\delta}{1-\gamma}\prn{ 1+\frac{\gamma}{\eta\tau}} = \frac{2\delta}{(1-\disct)^2} \brk{1+\disct \prn{\frac{1-\disct}{\eta\tau}-1}}.
 	\end{equation}
	 %
 \end{theorem}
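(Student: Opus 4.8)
The plan is to treat this as the inexact counterpart of Theorem~\ref{thm:npg_exact}, re-running the same two-sequence argument while carrying the evaluation error $\delta$ through every step. Setting $\alpha := \eta\tau/(1-\disct)\in(0,1]$ (the hypothesis $\eta\le(1-\disct)/\tau$ is exactly what forces $\alpha\le 1$), the update \eqref{eq:soft_greedy_approximate} is a convex combination in log-space, namely $\tau\log\plcy^{(t+1)}(a|s) = (1-\alpha)\,\tau\log\plcy^{(t)}(a|s) + \alpha\,\estsoftQ(s,a) - \tau\log Z^{(t)}(s)$. I would introduce the unnormalized iterate $\xi^{(t+1)}(s,a) := \big(\plcy^{(t)}(a|s)\big)^{1-\alpha}\exp\!\big(\eta\estsoftQ(s,a)/(1-\disct)\big)$, so that $\plcy^{(t+1)}=\xi^{(t+1)}/Z^{(t)}$ and $\tau\log\xi^{(t+1)}$ is precisely this convex combination before normalization. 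The two quantities I would monitor, exactly as in the exact proof, are $\soft{\Q}^\star-\soft{\Q}^{(t)}$ and $\soft{\Q}^\star-\tau\log\xi^{(t)}$.

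Next I would derive two coupled one-step inequalities. First, since $\soft{\Q}^\star$ and $\soft{\Q}^{(t+1)}$ both obey the soft Bellman equation \eqref{eq:defn-regularized-Q}, we have $\soft{\Q}^\star-\soft{\Q}^{(t+1)}=\disct\,\Exs_{s'}\!\big[\soft{\V}^\star-\soft{\V}^{(t+1)}\big]$, which supplies the leading factor $\disct$; bounding $\soft{\V}^\star-\soft{\V}^{(t+1)}$ via the regularized value-difference identity (expressing it through a nonnegative $\KL$ term between $\plcy^{(t+1)}$ and $\soft{\plcy}^\star$ together with a discounted self-recursion) ties it to $\normbig{\soft{\Q}^\star-\tau\log\xi^{(t+1)}}_\infty$. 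Second, subtracting the log-linear update from $\soft{\Q}^\star$ and using $\soft{\Q}^\star=(1-\alpha)\soft{\Q}^\star+\alpha\soft{\Q}^\star$ gives $\soft{\Q}^\star-\tau\log\xi^{(t+1)}=(1-\alpha)\big(\soft{\Q}^\star-\tau\log\xi^{(t)}\big)+\alpha\big(\soft{\Q}^\star-\estsoftQ\big)+(1-\alpha)\tau\log Z^{(t-1)}$, after which I split $\soft{\Q}^\star-\estsoftQ=\big(\soft{\Q}^\star-\soft{\Q}^{(t)}\big)+\big(\soft{\Q}^{(t)}-\estsoftQ\big)$ and control the last term by the assumed bound $\delta$.

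I would then dispose of the state-dependent normalizers (the $\tau\log Z^{(t)}(s)$ factor and the gap between $\tau\log\xi$ and $\soft{\V}^\star$) using that $\plcy^{(t+1)}(\cdot|s)$ and $\soft{\plcy}^\star(\cdot|s)$ are both normalized distributions; this is exactly the step that converts an $\ell_\infty$ bound on $\soft{\Q}^\star-\tau\log\xi^{(t+1)}$ into the factor-$2$ bound on $\normbig{\log\soft{\plcy}^\star-\log\plcy^{(t+1)}}_\infty$. Assembling the two inequalities into a two-dimensional linear recursion $\mathbf{u}_{t+1}\le A\,\mathbf{u}_{t}+\delta\,\mathbf{b}$ with a nonnegative matrix $A$ and nonnegative forcing $\mathbf{b}$, and invoking the same linear-system analysis as in the exact case, the relevant contraction rate is $1-\eta\tau=(1-\alpha)+\alpha\disct$; summing the resulting geometric series yields $(1-\eta\tau)^t C_1+C_2$, where the transient is governed by the same initial-condition constant $C_1$ of \eqref{eq:C1} and the error floor equals the fixed point $C_2$ of the perturbed recursion. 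The prefactors $\disct$ and $2\tau^{-1}$ in the two displayed bounds come from the soft-Bellman step and the normalization step, respectively.

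The main obstacle I anticipate is the exact constant bookkeeping required to reproduce $C_2=\frac{2\delta}{1-\disct}\big(1+\frac{\disct}{\eta\tau}\big)$ with \emph{no} hidden constants. Two features make this delicate: the $1/(\eta\tau)$ amplification indicates that $\delta$ enters not only through the direct substitution of $\estsoftQ$ for $\soft{\Q}^{(t)}$ but also through the perturbed soft value and normalization terms, so each per-step error must be tracked separately and then summed against the $1-\eta\tau$ contraction; and the coupling in the first inequality is implicit, since the bound on $\soft{\V}^\star-\soft{\V}^{(t+1)}$ itself involves $\soft{\Q}^\star-\soft{\Q}^{(t+1)}$ through the discounted recursion. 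Closing this loop cleanly — isolating the clean rate $1-\eta\tau$ while absorbing all $\disct$-dependent residual errors into the single floor $C_2$ — is where the care lies.
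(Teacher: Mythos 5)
Your high-level strategy is the right one --- mirror the exact proof with an auxiliary unnormalized sequence and a nonnegative linear system contracting at rate $1-\eta\tau$ --- but as written the proposal has a genuine gap at the step where the first coupled inequality is closed. To upper-bound $\soft{Q}^\star - \soft{Q}^{(t+1)}$ you must bound $\soft{\V}^\star(s') - \soft{\V}^{(t+1)}(s')$, and after writing $\soft{\V}^{(t+1)}(s') = \mathbb{E}_{a'\sim \plcy^{(t+1)}}[\soft{\Q}^{(t+1)}(s',a') - \tau\log\plcy^{(t+1)}(a'|s')]$ this requires a \emph{lower} bound on $\soft{\Q}^{(t+1)} - \tau\log\widehat{\xi}^{(t+1)}$. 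In the exact case this lower bound telescopes to $-\alpha^{t+1}\normbig{\soft{Q}^{(0)}-\tau\log\xi^{(0)}}_\infty$ using the monotonicity $\soft{\Q}^{(t+1)}\ge\soft{\Q}^{(t)}$, but with inexact evaluation monotonicity fails, and the paper must first prove an inexact performance-improvement lemma giving $\soft{Q}^{(t)}-\soft{Q}^{(t+1)}\le \frac{2\disct\delta}{1-\disct}$ entrywise, and then promote $-\min_{s,a}\prnbig{\soft{Q}^{(t)}(s,a)-\tau\log\widehat{\xi}^{(t)}(s,a)}$ to a \emph{third} state variable of the linear system, with its own recursion at rate $\alpha$ and forcing $(1-\alpha)\delta\prn{1+\frac{2\disct}{\eta\tau}}$. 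This third coordinate is exactly where the $\frac{\disct}{\eta\tau}$ amplification in $C_2$ originates; your two-dimensional system cannot see it, and folding it into the two tracked norms via the triangle inequality would inflate the system matrix and destroy the $1-\eta\tau$ contraction. Your closing remarks gesture at this difficulty ("the coupling in the first inequality is implicit") but do not supply the two missing ingredients: the approximate monotonicity lemma and the extra tracked coordinate.

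A secondary, fixable issue: you define the auxiliary iterate with the \emph{normalized} policy as the base, $\xi^{(t+1)} = (\plcy^{(t)})^{1-\alpha}\exp(\eta\estsoftQ/(1-\disct))$, which is why a stray $\tau\log Z^{(t-1)}$ survives in your recursion for $\soft{Q}^\star-\tau\log\xi^{(t+1)}$. The paper instead recurses on the unnormalized sequence itself, $\widehat{\xi}^{(t+1)} = (\widehat{\xi}^{(t)})^{1-\alpha}\exp(\eta\estsoftQ/(1-\disct))$ (in your convention for $\alpha$), so that $\soft{Q}^\star - \tau\log\widehat{\xi}^{(t+1)} = (1-\alpha)\prnbig{\soft{Q}^\star - \tau\log\widehat{\xi}^{(t)}} + \alpha\prnbig{\soft{Q}^\star - \estsoftQ}$ holds exactly with no normalizer; the normalizers are reconciled only at the very end through the factor-two softmax inequality. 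Without that change the second recursion does not close cleanly either.
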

Apparently, Theorem~\ref{thm:npg_inexact} reduces to Theorem~\ref{thm:npg_exact} when $\delta=0$. As implied by this theorem, if the $\ell_\infty$ error of the soft-Q function estimates does not exceed
$$ \delta \leq   \frac{(1-\gamma)^2\epsilon}{2\gamma \brk{1+\disct \prn{\frac{1-\disct}{\eta\tau}-1}}} ,$$ 
then the algorithm \eqref{eq:soft_greedy_approximate} achieves $2\epsilon$-accuracy (i.e.~$\normbig{\soft{Q}^{\star} - \soft{Q}^{(t)}}_\infty \leq 2\epsilon$)
within $\frac{1}{\eta\tau} \log\big(\frac{C_1\gamma}{\epsilon}\big)$ iterations. 
%
%
In particular, in the case of soft policy iteration (i.e.~$\eta =\frac{1-\gamma}{\tau}$), the tolerance level $\delta$ can be up to $\frac{(1-\gamma)^2\epsilon}{2\gamma }$, which matches the theory of approximate policy iteration in \citet{agarwal2019reinforcement}.

\begin{remark}
It is straightforward to combine Theorem~\ref{thm:npg_inexact} with known sample complexities for approximate policy evaluation to obtain a crude sample complexity bound. For instance, assuming access to a generative model, \citet{li2020breaking} asserts that for any fixed policy $\pi$, model-based policy evaluation 
achieves $\big\|\widehat{Q}_\tau^\pi - {Q}_\tau^\pi\big\|_\infty \le \delta$ with high probability, as long as the number of samples per state-action pair exceeds the order of
	\[
		\frac{1}{ (1-\gamma)^3 \delta^2}  
	\]
	 up to some logarithmic factor. By employing fresh samples for each policy evaluation, we can set $\delta = \frac{(1-\gamma)^2\epsilon}{2\gamma }$ and invoke the union bound over 
$\widetilde{O}\big(\frac{1}{1-\gamma}\big)$ iterations to demonstrate that: 
SPI with model-based policy evaluation needs at most
	\[
		\widetilde{O}\left(\frac{|\cS|\, |\cA|}{ (1-\gamma)^{8} \epsilon^2}  \right)
	\]
	samples to find an $\epsilon$-optimal policy. Here, $\widetilde{O}(\cdot)$ hides any logarithmic factor. We note, however, that the above sample analysis is extremely crude and might be improvable by, say, allowing sample reuses across iterations. It remains an interesting open question as to whether NPG with entropy regularization is minimax-optimal with a generative model, where the minimax lower bound is on the order of $ \frac{|\cS|\, |\cA|}{ (1-\gamma)^{3} \epsilon^2}  $  \citep{azar2013minimax} and achievable by model-based plug-in estimators \citep{agarwal2019model,li2020breaking} but not by vanilla Q-learning \citep{li2021tightening}. 
\end{remark}


\subsection{Quadratic convergence in the small-$\epsilon$ regime}
 
Somewhat remarkably, the regularized NPG method with $\eta = \frac{1-\gamma}{\tau}$ achieves super-linear convergence in computing $\soft{\V}^{\star}$, once the algorithm enters a sufficiently small local neighborhood surrounding the optimizer. 
 
Before presenting the result, we need to introduce the stationary distribution over $\mathcal{S}$ of the MDP $\mathcal{M}$ under policy $\pi_{\tau}^{\star}$, denoted by $\mu_{\tau}^\star \in \Delta(\cS)$. It is straightforward to verify the following basic property
\begin{equation} \label{eq:stationary_dist_disc}
d_{\mu_{\tau}^\star}^{\plcy^{\star}_\tau} = \mu_{\tau}^\star,
\end{equation}
given that the state visitation distribution remains unchanged if the initial state is already in a steady state. Throughout this paper, we assume that $\min_s \mu_{\tau}^\star(s) > 0$. Our finding is stated in the following theorem, with the proof deferred to Section~\ref{sec:quadratic-sketch}.

\begin{theorem}[Quadratic convergence of exact regularized NPG] 
\label{thm:spi_sp}
Suppose that the algorithm \eqref{eq:SPI-update} with $\eta = \frac{1-\gamma}{\tau}$ (or SPI) satifies
\begin{equation}
\label{eqn:pit-condition}
	\normbig{\log \plcy^{(t)} - \log \soft{\plcy}^\star}_\infty \le 1.
\end{equation}
%
%
for all $t\geq 0$, then one has
	\begin{equation*}
		\soft{V}^\star(\rho) - \soft{V}^{(t)}(\rho) \le 
		\norm{\frac{\rho}{\soft{\mu}^\star}}_\infty 
		\frac{(1 - \disct)\tau}{4\disct^2} \norm{\frac{1}{\soft{\mu}^\star}}_\infty ^{-1}
		 \prn{\frac{4\disct^2}{(1 - \disct)\tau} \norm{\frac{1}{\soft{\mu}^\star}}_\infty  \prn{\soft{V}^\star(\soft{\mu}^\star) - \soft{V}^{(0)}(\soft{\mu}^\star)}}^{2^t} .
	\end{equation*}
\end{theorem}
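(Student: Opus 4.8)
The plan is to collapse the entire statement into a single quadratic recursion for the $\soft{\mu}^\star$-averaged optimality gap and then transfer it to an arbitrary $\rho$ by a change of measure. Write $\Delta_t := \soft{V}^\star - \soft{V}^{(t)} \ge 0$ (nonnegative entrywise by optimality of $\soft{V}^\star$). Because $\Delta_t\ge 0$, for any $\rho$ one has $\soft{V}^\star(\rho)-\soft{V}^{(t)}(\rho)=\ex{s\sim\rho}{\Delta_t(s)}\le \norm{\frac{\rho}{\soft{\mu}^\star}}_\infty\,\ex{s\sim\soft{\mu}^\star}{\Delta_t(s)}$, so it suffices to track $\ex{s\sim\soft{\mu}^\star}{\Delta_t(s)}$. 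The target reduces to the one-step bound
\[
	\ex{s\sim\soft{\mu}^\star}{\Delta_{t+1}(s)} \;\le\; \frac{4\disct^2}{(1-\disct)\tau}\norm{\tfrac{1}{\soft{\mu}^\star}}_\infty\prn{\ex{s\sim\soft{\mu}^\star}{\Delta_t(s)}}^2 ,
\]
since setting $z_t := \frac{4\disct^2}{(1-\disct)\tau}\norm{\frac{1}{\soft{\mu}^\star}}_\infty\,\ex{s\sim\soft{\mu}^\star}{\Delta_t(s)}$ turns this into $z_{t+1}\le z_t^2$, hence $z_t\le z_0^{2^t}$, which upon undoing the substitution and applying the change of measure is exactly the claimed inequality.

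The engine for the recursion is the soft performance difference lemma: for any $\pi',\pi$, $\soft{V}^{\pi'}(\rho)-\soft{V}^{\pi}(\rho)=\frac{1}{1-\disct}\ex{s\sim d_\rho^{\pi'}}{\sum_a\pi'(a|s)\big(\soft{\Q}^{\pi}(s,a)-\tau\log\pi'(a|s)\big)-\soft{V}^{\pi}(s)}$, which I would first record by writing $\soft{V}^{\pi'}-\soft{V}^{\pi}=(I-\disct P^{\pi'})^{-1}(\mathcal{T}_\tau^{\pi'}\soft{V}^{\pi}-\soft{V}^{\pi})$. I apply it with $\pi'=\soft{\plcy}^\star$, $\pi=\plcy^{(t+1)}$ and $\rho=\soft{\mu}^\star$; crucially the property $d_{\soft{\mu}^\star}^{\soft{\plcy}^\star}=\soft{\mu}^\star$ in \eqref{eq:stationary_dist_disc} turns the visitation measure back into $\soft{\mu}^\star$. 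I then decompose the bracket. With $\Phi_Q(\pi):=\sum_a\pi(a)Q(a)-\tau\sum_a\pi(a)\log\pi(a)$, the bracket equals $\Phi_{\soft{\Q}^{(t+1)}}(\soft{\plcy}^\star)-\Phi_{\soft{\Q}^{(t+1)}}(\plcy^{(t+1)})$; splitting $\soft{\Q}^{(t+1)}=\soft{\Q}^{(t)}+(\soft{\Q}^{(t+1)}-\soft{\Q}^{(t)})$ and using that $\plcy^{(t+1)}$ is the \emph{exact} maximizer of $\Phi_{\soft{\Q}^{(t)}}$ (the SPI update \eqref{eq:SPI-update}) gives
\[
	\text{bracket}(s)= -\tau\,\KL\big(\soft{\plcy}^\star(\cdot|s)\,\big\|\,\plcy^{(t+1)}(\cdot|s)\big)+\big\langle \soft{\plcy}^\star(\cdot|s)-\plcy^{(t+1)}(\cdot|s),\,\soft{\Q}^{(t+1)}(s,\cdot)-\soft{\Q}^{(t)}(s,\cdot)\big\rangle .
\]

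To finish, I discard the nonpositive $-\tau\KL$ term and bound the inner product using two ingredients. First, $\soft{\Q}^{(t+1)}(s,a)-\soft{\Q}^{(t)}(s,a)=\disct\,\ex{s'\sim P(\cdot|s,a)}{g_t(s')}$ with $g_t:=\soft{V}^{(t+1)}-\soft{V}^{(t)}\in[0,\Delta_t]$ (policy improvement together with $\Delta_{t+1}\ge0$). Second, the softmax forms of $\soft{\plcy}^\star$ and $\plcy^{(t+1)}$, combined with $0\le \soft{V}^\star-\mathcal{T}_\tau\soft{V}^{(t)}\le\disct\norm{\Delta_t}_\infty$ (monotone $\disct$-contraction of the soft Bellman operator), give $\big|\log\soft{\plcy}^\star(a|s)-\log\plcy^{(t+1)}(a|s)\big|\le\frac{\disct}{\tau}\norm{\Delta_t}_\infty$; since the assumption \eqref{eqn:pit-condition} forces $\norm{\log\plcy^{(t+1)}-\log\soft{\plcy}^\star}_\infty\le 1$, the elementary bound $|1-e^{-x}|\le e|x|$ for $|x|\le1$ yields $|\soft{\plcy}^\star(a|s)-\plcy^{(t+1)}(a|s)|\le e\frac{\disct}{\tau}\norm{\Delta_t}_\infty\,\soft{\plcy}^\star(a|s)$. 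The single most important move is to bound the policy gap by a multiple of $\soft{\plcy}^\star(a|s)$ rather than $\plcy^{(t+1)}(a|s)$: after taking $\ex{s\sim\soft{\mu}^\star}{\cdot}$ the surviving factor $\sum_a\soft{\plcy}^\star(a|s)\,\ex{s'\sim P(\cdot|s,a)}{g_t(s')}=(P^{\soft{\plcy}^\star}g_t)(s)$ is averaged against the stationary $\soft{\mu}^\star$, and $\soft{\mu}^\star P^{\soft{\plcy}^\star}=\soft{\mu}^\star$ collapses it exactly to $\ex{s\sim\soft{\mu}^\star}{g_t(s)}\le\ex{s\sim\soft{\mu}^\star}{\Delta_t(s)}$. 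Combining this with $\norm{\Delta_t}_\infty\le\norm{\frac{1}{\soft{\mu}^\star}}_\infty\ex{s\sim\soft{\mu}^\star}{\Delta_t(s)}$ (valid since $\Delta_t\ge0$) produces the displayed recursion with constant $e\disct^2\le 4\disct^2$.

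The main obstacle, and the reason a naive argument fails, is producing the correct \emph{single} power of $\norm{\frac{1}{\soft{\mu}^\star}}_\infty$. A symmetric estimate that replaces both the policy gap and the value improvement by their $\ell_\infty$ norms would yield $\norm{\frac{1}{\soft{\mu}^\star}}_\infty^2$ and an inadmissible constant; threading the stationarity identity $\soft{\mu}^\star P^{\soft{\plcy}^\star}=\soft{\mu}^\star$ through exactly one of the two factors is what recovers the advertised rate. I expect the supporting facts—monotone improvement $g_t\ge0$, the contraction estimate $0\le\soft{V}^\star-\mathcal{T}_\tau\soft{V}^{(t)}\le\disct\norm{\Delta_t}_\infty$, and the pointwise log-ratio bound—to be routine but indispensable bookkeeping that must be verified before the asymmetric averaging step can be invoked.
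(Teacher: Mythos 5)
Your proposal is correct, and it reaches the paper's quadratic recursion $\Delta_{t+1}(\soft{\mu}^\star)\le \frac{4\gamma^2}{(1-\gamma)\tau}\|\frac{1}{\soft{\mu}^\star}\|_\infty \Delta_t(\soft{\mu}^\star)^2$ (in fact with the slightly better constant $e$ in place of $4$, which still implies the stated bound since the normalizing constant only enters with nonnegative exponents) by a genuinely different route. The paper combines its ascent lemma (Lemma~\ref{lemma:dalaopo}) with the sub-optimality-gap lemma (Lemma~\ref{lemma:xiaoqie}) to obtain the contraction \eqref{eq:core} with factor $1-\|d_\rho^{\plcy_\tau^\star}/d_\rho^{(t+1)}\|_\infty^{-1}$, and then spends Lemma~\ref{lemma:d_rho-difference-bound} on a perturbation analysis of discounted visitation distributions to show this factor is $O(\|\log\plcy^{(t+1)}-\log\soft{\plcy}^\star\|_\infty)=O(\frac{\gamma}{\tau}\|\Delta_t\|_\infty)$. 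You instead apply the soft performance-difference identity once, between $\soft{\plcy}^\star$ and $\plcy^{(t+1)}$ at $\rho=\soft{\mu}^\star$, and exploit that $\plcy^{(t+1)}$ is the exact maximizer of $\Phi_{\soft{\Q}^{(t)}}$ to split the bracket into a discarded $-\tau\KL$ term plus the manifestly second-order cross term $\langle \soft{\plcy}^\star-\plcy^{(t+1)},\,\soft{\Q}^{(t+1)}-\soft{\Q}^{(t)}\rangle$; each factor is first order in $\Delta_t$ (I checked your one-sided bound $|\log\soft{\plcy}^\star-\log\plcy^{(t+1)}|\le\frac{\gamma}{\tau}\|\Delta_t\|_\infty$ without the usual factor of $2$, and the monotonicity $0\le g_t\le\Delta_t$ from Lemma~\ref{lemma:dalaopo}), and your asymmetric use of $\soft{\mu}^\star P_{\soft{\plcy}^\star}=\soft{\mu}^\star$ on exactly one factor is indeed what keeps the bound to a single power of $\|\frac{1}{\soft{\mu}^\star}\|_\infty$. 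What each approach buys: yours is more self-contained (no analysis of how $d_\rho^{\plcy}$ responds to policy perturbations) and isolates the quadratic term algebraically rather than through a contraction-factor argument; the paper's route produces the reusable intermediate inequality \eqref{eq:core}, which also drives the CPI analysis in Appendix~\ref{sec:cpi}, and makes transparent the intuition that the contraction factor vanishes as the visitation ratio approaches one.
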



\begin{remark}
In view of the convergence guarantees in Theorem~\ref{thm:npg_inexact},  
a suitable initialization of $\pi^{(0)}$ and $\soft{V}^{(0)}$ (such that 
$\frac{4\disct^2}{(1 - \disct)\tau} \normbig{\frac{1}{\soft{\mu}^\star}}_\infty (\soft{V}^\star(\soft{\mu}^\star) - \soft{V}^{(0)}(\soft{\mu}^\star)) < 1$)
can be obtained by running SPI for sufficiently many iterations; further, all subsequent iterations are then guaranteed to satisfy \eqref{eqn:pit-condition} according to Theorem~\ref{thm:npg_inexact}. 
\end{remark}

Under the assumptions of Theorem~\ref{thm:spi_sp},  our result indicates that: when $\epsilon$ is sufficiently small, 
the iteration complexity for SPI to yield an $\epsilon$ optimization accuracy --- that is, $\soft{\V}^\star(\rho) - \soft{\V}^{(t)}(\rho) \le \epsilon$ --- is at most on the order of 
\begin{equation}
 	\log\log\prn{\frac{(1 - \disct)\tau}{4\disct^2} \norm{\frac{1}{\soft{\mu}^\star}}_\infty^{-1}\norm{\frac{\rho}{\soft{\mu}^\star}}_\infty\frac{1}{\epsilon}}.
\end{equation}
This uncovers the faster-than-linear convergence behavior of regularized NPG methods in the high-accuracy regime, 
accommodating a range of {\em optimization} accuracy and all possible choices of the regularization parameter $\tau$. 
It is worth noting, however, that our quadratic convergence result is stated in terms of the optimization accuracy 
(namely, convergence to the soft value function $\soft{\V}^{\star}(\rho)$) as opposed to the accuracy w.r.t.~the original unregularized MDP. 
Thus, interpreting Theorem~\ref{thm:spi_sp} in practice requires caution, since the approximation error $\soft{\V}^{\star}(\rho)-V^{\star}(\rho)$ might sometimes dominate the optimization error in this regime. 


%
%

		\section{Analysis}

	\subsection{Main pillars for the convergence analysis}
	
	Before proceeding, we isolate a few ingredients that provide the main pillars
	for our theoretical development. 
	
	\paragraph{Performance improvement and monotonicity.}
	This lemma is a sort of {\em ascent lemma}, which quantifies the progress made over each iteration  --- measured in terms of the soft value function.	
	\begin{lemma}[Performance improvement] 
	\label{lemma:dalaopo}
		Suppose that $0<\eta \le (1-\disct)/\tau$. For any distribution $\rho$, one has
	\begin{align}
		\soft{\V}^{(t+1)}(\rho) -  \soft{\V}^{(t)}(\rho) 
		=& \exlim{\s \sim d_{\rho}^{(t+1)}} 
		{\prn{\frac{1}{\eta}-\frac{\tau}{1-\disct}} \KL\Big( \plcy^{(t+1)}(\cdot|\s) \,\big\|\, \plcy^{(t)}(\cdot|\s) \Big) 
		+ \frac{1}{\eta} \KL\prn{\plcy^{(t)}(\cdot |\s) \,\big\|\, \plcy^{(t+1)}(\cdot | \s)}}.
		\label{eq:Lemma-1-V-diff-identity}
	\end{align}
	\end{lemma}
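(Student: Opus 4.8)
The plan is to derive \eqref{eq:Lemma-1-V-diff-identity} by pairing a soft (entropy-regularized) performance difference lemma with a direct inversion of the NPG update rule \eqref{eq:entropy_npg}. First I would establish the soft performance difference identity
\[
\soft{\V}^{(t+1)}(\rho) - \soft{\V}^{(t)}(\rho) = \frac{1}{1-\disct}\exlim{\s \sim d_{\rho}^{(t+1)}}{\exlimbig{\ac\sim\plcy^{(t+1)}(\cdot|\s)}{-\tau\log\plcy^{(t+1)}(\ac|\s) + \soft{\Q}^{(t)}(\s,\ac)} - \soft{\V}^{(t)}(\s)}
\]
via the usual telescoping argument: writing $\soft{\V}^{(t+1)}(\s)$ as a discounted sum of the policy-dependent regularized rewards $r_\tau(\s_t,\ac_t) = r(\s_t,\ac_t) - \tau\log\plcy^{(t+1)}(\ac_t|\s_t)$, inserting the telescoping differences $\disct\,\soft{\V}^{(t)}(\s_{t+1}) - \soft{\V}^{(t)}(\s_t)$, and recognizing $r(\s_t,\ac_t) + \disct\,\mathbb{E}[\soft{\V}^{(t)}(\s_{t+1})] = \soft{\Q}^{(t)}(\s_t,\ac_t)$ through \eqref{eq:defn-regularized-Q}. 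The only subtlety is that the regularized reward is policy-dependent, so the entropy term must be taken with respect to $\plcy^{(t+1)}$ (the policy generating the trajectory), which is exactly what lands the $-\tau\log\plcy^{(t+1)}$ inside the expectation.

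Next I would eliminate $\soft{\Q}^{(t)}$ in favour of the log-policies. Taking the logarithm of \eqref{eq:entropy_npg} and solving gives
\[
\soft{\Q}^{(t)}(\s,\ac) = \frac{1-\disct}{\eta}\Big(\log\plcy^{(t+1)}(\ac|\s) + \log Z^{(t)}(\s)\Big) - \Big(\frac{1-\disct}{\eta}-\tau\Big)\log\plcy^{(t)}(\ac|\s).
\]
I would substitute this expression twice: once into the bracketed expectation over $\ac\sim\plcy^{(t+1)}(\cdot|\s)$ above, and once into the representation $\soft{\V}^{(t)}(\s) = \exlimbig{\ac\sim\plcy^{(t)}(\cdot|\s)}{-\tau\log\plcy^{(t)}(\ac|\s) + \soft{\Q}^{(t)}(\s,\ac)}$ coming from \eqref{eq:regularized-V-to-Q}.

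The remaining work is elementary simplification. After substitution, the first expectation collapses to $\big(\frac{1-\disct}{\eta}-\tau\big)\KL\big(\plcy^{(t+1)}(\cdot|\s)\,\|\,\plcy^{(t)}(\cdot|\s)\big) + \frac{1-\disct}{\eta}\log Z^{(t)}(\s)$, while the soft-value term becomes $-\frac{1-\disct}{\eta}\KL\big(\plcy^{(t)}(\cdot|\s)\,\|\,\plcy^{(t+1)}(\cdot|\s)\big) + \frac{1-\disct}{\eta}\log Z^{(t)}(\s)$. The crucial structural observation is that the two $\log Z^{(t)}(\s)$ contributions cancel exactly upon forming the difference, leaving only the two KL divergences; multiplying through by $\frac{1}{1-\disct}$ and folding the constants then yields \eqref{eq:Lemma-1-V-diff-identity} verbatim. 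Note that the stated identity actually holds for any $\eta>0$; the hypothesis $0<\eta\le(1-\disct)/\tau$ is used only to guarantee that the coefficient $\frac{1}{\eta}-\frac{\tau}{1-\disct}$ is nonnegative, so that both KL terms are nonnegative and the lemma genuinely certifies monotone ascent of $\soft{\V}^{(t)}(\rho)$.

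I expect the main obstacle to be pinning down the soft performance difference lemma in exactly the right form — the policy-dependent reward means one cannot simply quote the unregularized identity, and the bookkeeping of the entropy term requires care to place $-\tau\log\plcy^{(t+1)}$ rather than $-\tau\log\plcy^{(t)}$ inside the expectation. Once that identity is secured, the inversion of the update rule and the cancellation of $\log Z^{(t)}(\s)$ are mechanical.
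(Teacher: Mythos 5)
Your proposal is correct and follows essentially the same route as the paper's proof: both hinge on inverting the softmax NPG update to express $\soft{\Q}^{(t)}$ (equivalently $\log Z^{(t)}$) through $\log\plcy^{(t)}$ and $\log\plcy^{(t+1)}$, which produces exactly the two KL terms, and both use the same telescoping/recursive unrolling to convert the per-state identity into an expectation over $d_{\rho}^{(t+1)}$. The only difference is organizational — you factor out a soft performance-difference identity first and then do the algebra, whereas the paper interleaves the two — and your closing observations (the cancellation of $\log Z^{(t)}(\s)$ and the fact that the stepsize restriction is needed only for nonnegativity of the coefficient $\frac{1}{\eta}-\frac{\tau}{1-\disct}$) are both accurate.
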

	\begin{proof} See Appendix~\ref{sec:pflemdalaopo}. \end{proof}

	In a nutshell, Lemma~\ref{lemma:dalaopo} asserts that each iteration of the entropy-regularized NPG method is guaranteed to improve the estimates of the soft value function, with the improvement depending on the KL divergence between the current policy $\pi^{(t)}$ and the updated one $\pi^{(t+1)}$. 
	In fact, the arbitrary choice of $\rho$ readily reveals a sort of {\em pointwise} monotoncity for the above range of learning rates, in the sense that 
	$\soft{\V}^{(t+1)}(s) \geq  \soft{\V}^{(t)}(s)$ for all $s\in \cS$. 
	Indeed, this lemma can be viewed as the counterpart of the performance difference lemma in \citet{kakade2002approximately} for the unregularized form.  Lemma~\ref{lemma:dalaopo} also implies the monotonicity of the soft Q-function in $t$, since for any $(\s, \ac)\in \ssp \times \asp$ one has
	\begin{align}
	\soft{\Q}^{(t+1)}(\s, \ac) &= r(\s, \ac) + \disct\exlim{\s' \sim \prob(\cdot|\s, \ac)}{\soft{\V}^{(t+1)}(\s')}  
	 \ge r(\s, \ac) + \disct\exlim{\s' \sim \prob(\cdot|\s, \ac)}{\soft{\V}^{(t)}(\s')} 	= \soft{\Q}^{(t)} (\s, \ac), \label{eq:Q_mono}
	\end{align}
where the equalities follow from the definition \eqref{eq:defn-regularized-Q}, and the inequality follows since $\soft{\V}^{(t+1)}(\s) \ge \soft{\V}^{(t)}(\s)$ for all $\s \in \ssp$ --- a consequence of 
Lemma~\ref{lemma:dalaopo} and the non-negativity of the KL divergence.

\paragraph{A key contraction operator: the soft Bellman optimality operator.} An operator that plays a pivotal role in the theory of dynamic programming \citep{bellman1952theory} is the renowned 
 Bellman optimality operator ${\mathcal{T}}: \real^{|\ssp|| \asp|} \to \real^{|\ssp|| \asp|}$, defined as follows
 \begin{equation}	
	 \forall (s,a)\in \cS\times \cA: \quad {\mathcal{T}}(\Q)(\s, \ac) := r(\s, \ac) + \disct \exlim{\s' \sim \prob(\cdot|\s, \ac)}{\max_{\ac'}{Q(\s', \ac')}} .
\end{equation}
In order to facilitate analysis for entropy-regularized MDPs, we find it particularly fruitful to introduce a ``soft'' Bellman optimality operator $\soft{\mathcal{T}}: \real^{|\ssp|| \asp|} \to \real^{|\ssp|| \asp|}$ as follows
 \begin{equation}	
	 \label{defn:soft_bellman}
	 \forall (s,a)\in \cS\times \cA: \quad \soft{\mathcal{T}}(\Q)(\s, \ac) := r(\s, \ac) + \disct \exlim{\s' \sim \prob(\cdot|\s, \ac)}{\max_{\plcy(\cdot|\s') \in \Delta(\cA)}\exlimBig{\ac' \sim \plcy(\cdot |\s')}{Q(\s', \ac') - \tau \log \plcy(\ac'|\s')}},
\end{equation}
which reduces to  ${\mathcal{T}}$ when $\tau=0$. To see this, observe that
\begin{align}
	{\mathcal{T}}_0(\Q)(\s, \ac) &= r(\s, \ac) + \disct \exlim{\s' \sim \prob(\cdot|\s, \ac)}{\max_{\plcy(\cdot|\s') \in \Delta(\cA)}\exlimbig{\ac' \sim \plcy(\cdot |\s')}{Q(\s', \ac')}} \notag \\
	&=r(\s, \ac) + \disct \exlim{\s' \sim \prob(\cdot|\s, \ac)}{\max_{\ac'}{Q(\s', \ac')}} = \mathcal{T}(\Q)(\s, \ac), \notag
	\end{align}
where the last line follows since the optimal policy is exactly the greedy policy w.r.t.~$Q$ \citep{puterman2014markov}. The operator  $\soft{\mathcal{T}}$ plays a similar role as does the Bellman optimality operator for the unregularized case, whose key properties are summarized below. Similar results have been derived in \citet[Section 3.1]{dai2018sbeed}.

\begin{lemma}[Soft Bellman optimality operator] \label{lemma:xinhuan}  The operator $\soft{\mathcal{T}}$ defined in \eqref{defn:soft_bellman} satisfies the properties below.
\begin{itemize}
\item $\soft{\mathcal{T}}$ admits the following closed-form expression:
	\begin{equation}
		\soft{\mathcal{T}}(\Q)(\s, \ac) = r(\s, \ac) + \disct \exlimBig{\s' \sim \prob(\cdot|\s, \ac)}{\tau \log\prnbig{ \normbig{\exp\prnbig{\Q(\s', \cdot)/\tau}}_1}}. 
	\label{defn:soft_bellman_cf}
	\end{equation}
\item The optimal soft Q-function $\soft{\Q}^\star$ is a fixed point of $\soft{\mathcal{T}}$, namely, 
\begin{equation} \label{defn:soft_bellman_fixpoint}
	\soft{\mathcal{T}} \prnbig{ \soft{\Q}^\star } =  \soft{\Q}^\star.
\end{equation}
\item $\soft{\mathcal{T}}$ is a $\disct$-contraction in the $\ell_{\infty}$ norm, namely, for any $ \Q_1, \Q_2 \in \real^{|\ssp|| \asp|}$ one has
	\begin{equation} \label{eq:soft_bellman_contraction}
		\normbig{\soft{\mathcal{T}} (\Q_1) - \soft{\mathcal{T}} (\Q_2) }_\infty \le \disct \normbig{\Q_1 - \Q_2}_\infty.
	\end{equation}
\end{itemize}
\end{lemma}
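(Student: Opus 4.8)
The plan is to establish the three bullet points in turn, deriving the closed-form expression \eqref{defn:soft_bellman_cf} first since it underlies the other two. Fixing the next state $\s'$, the inner maximization in \eqref{defn:soft_bellman} reads $\max_{\plcy(\cdot|\s')\in\Delta(\cA)}\sum_{\ac'}\plcy(\ac'|\s')\prn{\Q(\s',\ac')-\tau\log\plcy(\ac'|\s')}$, a strictly concave program over the simplex. Introducing a Lagrange multiplier for the normalization constraint and setting the gradient to zero yields the Gibbs/softmax maximizer $\plcy(\cdot|\s')=\mathsf{softmax}\prnbig{\Q(\s',\cdot)/\tau}$; substituting this back collapses the objective to the optimal value $\tau\log\normbig{\exp\prnbig{\Q(\s',\cdot)/\tau}}_1$, the familiar log-sum-exp surrogate for the hard maximum. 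Plugging this into \eqref{defn:soft_bellman} gives \eqref{defn:soft_bellman_cf}.

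Given the closed-form, the $\disct$-contraction \eqref{eq:soft_bellman_contraction} follows from the observation that the log-sum-exp map $c\mapsto\tau\log\normbig{\exp(c/\tau)}_1$ is $1$-Lipschitz with respect to $\norm{\cdot}_\infty$. Concretely, for any two vectors $c_1,c_2$ one has $c_1\le c_2+\norm{c_1-c_2}_\infty\,\one$ entrywise, hence $\sum\exp(c_1/\tau)\le\exp\prnbig{\norm{c_1-c_2}_\infty/\tau}\sum\exp(c_2/\tau)$; taking $\tau\log(\cdot)$ and symmetrizing yields $\abs{\tau\log\normbig{\exp(c_1/\tau)}_1-\tau\log\normbig{\exp(c_2/\tau)}_1}\le\norm{c_1-c_2}_\infty$. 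Applying this pointwise with $c_i=\Q_i(\s',\cdot)$, the reward terms cancel in the difference $\soft{\mathcal{T}}(\Q_1)-\soft{\mathcal{T}}(\Q_2)$, leaving a $\disct$-discounted expectation of log-sum-exp differences, each bounded by $\norm{\Q_1-\Q_2}_\infty$; taking the supremum over $(\s,\ac)$ delivers the claim.

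For the fixed-point property \eqref{defn:soft_bellman_fixpoint}, I would combine the soft Bellman consistency relations \eqref{eq:defn-regularized-Q}--\eqref{eq:regularized-V-to-Q} specialized to the optimal regularized policy $\soft{\plcy}^\star$, namely $\soft{\Q}^\star(\s,\ac)=r(\s,\ac)+\disct\exlim{\s'\sim\prob(\cdot|\s,\ac)}{\soft{\V}^\star(\s')}$ together with $\soft{\V}^\star(\s')=\exlim{\ac'\sim\soft{\plcy}^\star(\cdot|\s')}{-\tau\log\soft{\plcy}^\star(\ac'|\s')+\soft{\Q}^\star(\s',\ac')}$, with the characterization that $\soft{\plcy}^\star(\cdot|\s')=\mathsf{softmax}\prnbig{\soft{\Q}^\star(\s',\cdot)/\tau}$ is the soft-greedy policy. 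Once the latter is in place, the expectation defining $\soft{\V}^\star(\s')$ coincides exactly with the maximized objective from the closed-form derivation, so $\soft{\V}^\star(\s')=\tau\log\normbig{\exp\prnbig{\soft{\Q}^\star(\s',\cdot)/\tau}}_1$; substituting into the expression for $\soft{\Q}^\star$ reproduces $\soft{\mathcal{T}}(\soft{\Q}^\star)$ in view of \eqref{defn:soft_bellman_cf}.

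The main obstacle is justifying that $\soft{\plcy}^\star$ is precisely the soft-greedy policy with respect to $\soft{\Q}^\star$ --- equivalently, that $\soft{\V}^\star$ satisfies the soft Bellman \emph{optimality} equation rather than merely the consistency equation for a fixed policy. I would settle this by a one-step policy-improvement argument: if at some state the softmax policy w.r.t.\ $\soft{\Q}^\star$ strictly dominated $\soft{\plcy}^\star$ in the regularized one-step objective, then exchanging $\soft{\plcy}^\star$ for this soft-greedy choice at that state (in the spirit of Lemma~\ref{lemma:dalaopo}) would strictly increase the regularized value, contradicting the optimality of $\soft{\plcy}^\star$. Alternatively, since the contraction already guarantees that $\soft{\mathcal{T}}$ has a unique fixed point, one may instead show that the soft-greedy policy induced by that fixed point attains the optimal regularized value and must therefore equal $\soft{\plcy}^\star$, thereby identifying the fixed point with $\soft{\Q}^\star$.
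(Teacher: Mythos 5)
Your proposal is correct and follows the same three-step structure as the paper's proof, with only minor differences in technique. For the closed form, the paper obtains the log-sum-exp value via Jensen's inequality applied to $\tau\sum_a \pi(a|s)\log\big(\exp(Q(s,a)/\tau)/\pi(a|s)\big)$ (with equality at the softmax policy), whereas you solve the inner concave program by Lagrange multipliers; these are interchangeable. For the $\gamma$-contraction, the paper invokes its Lipschitz bound \eqref{eq:bound_logexp_diff}, proved by the mean value theorem using the fact that the gradient of $\theta\mapsto\log\|\exp(\theta)\|_1$ is a probability vector, while your monotonicity argument ($c_1\le c_2+\|c_1-c_2\|_\infty\mathbf{1}$ plus symmetrization) proves the same $1$-Lipschitz property more elementarily; both are fine. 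The one substantive difference is the fixed-point step: the paper simply imports the characterization $\soft{\plcy}^\star(\ac|\s)=\exp\big((\soft{\Q}^\star(\s,\ac)-\soft{\V}^\star(\s))/\tau\big)$ and $\soft{\V}^\star(\s)=\tau\log\|\exp(\soft{\Q}^\star(\s,\cdot)/\tau)\|_1$ from \citet{nachum2017bridging} and substitutes it into \eqref{defn:soft_bellman_cf}, whereas you correctly flag that this soft-greedy characterization is the crux and sketch two self-contained justifications (a soft policy-improvement contradiction, or uniqueness of the fixed point of the contraction followed by identification with $\soft{\Q}^\star$). Either of your routes is standard and sound, and buys a proof that does not lean on the cited prior work, at the cost of a slightly longer argument.
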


\begin{proof} See Appendix~\ref{sec:pflemxinhuan}. \end{proof}

For those familiar with dynamic programming, it should become evident that $\soft{\mathcal{T}}$ inherits many appealing features of the original Bellman optimality operator $\mathcal{T}$. For example, as an immediate application of the $\gamma$-contraction property \eqref{eq:soft_bellman_contraction} and the fixed-point property \eqref{defn:soft_bellman_fixpoint},  the following soft $\Q$-value iteration
	\[
		\Q^{(t+1)}_{\mathsf{svi}} ~=~ \soft{\mathcal{T}}\big(\Q^{(t)}_{\mathsf{svi}}\big), \qquad t\geq 0
	\]
	is guaranteed to converge linearly to the optimal $\soft{\Q}^\star$ with a contraction rate $\gamma$ --- a simple observation consistent with the behavior of  value iteration designed for unregularized MDPs.

\subsection{Analysis of exact entropy-regularized NPG methods}
\label{Sec:pfNPG-eta}

\subsubsection{The SPI case (i.e.~$\eta= (1-\disct)/\tau$)} 

With the help of the soft Bellman optimality operator, we have
  \begin{align}
	  \soft{\Q}^{(t+1)}(\s, \ac) &\overset{\mathrm{(i)}}{=} r(\s, \ac) + \disct \exlim{\s' \sim \prob(\cdot|\s, \ac)}{\soft{\V}^{(t+1)}(\s')} \notag\\
			&\overset{\mathrm{(ii)}}{=} r(\s, \ac) + \disct \exlim{\substack{\s' \sim \prob(\cdot|\s, \ac),\\\ac' \sim \plcy^{(t+1)}(\cdot|\s')}}{-\tau \log\plcy^{(t+1)}(\ac'|\s') + \soft{\Q}^{(t+1)}(\s', \ac')} \notag\\
			&\overset{\mathrm{(iii)}}{\ge} r(\s, \ac) + \disct \exlim{\substack{\s' \sim \prob(\cdot|\s, \ac),\\\ac' \sim \plcy^{(t+1)}(\cdot|\s')}}{-\tau \log\plcy^{(t+1)}(\ac'|\s') + \soft{\Q}^{(t)}(\s', \ac')} \notag\\
			&\overset{\mathrm{(iv)}}{=} r(\s, \ac) + \disct \exlim{\s' \sim \prob(\cdot|\s, \ac)}{\tau \log\prn{ \normbig{\exp\prnbig{\Q^{(t)}(\s', \cdot)/\tau}}_1}} \notag\\
			&\overset{\mathrm{(v)}}{=} \soft{\mathcal{T}} \big( \soft{\Q}^{(t)} \big) (\s, \ac). \label{eq:Q-TQ-mon}
  \end{align}
Here, (i) comes from  the definition \eqref{eq:defn-regularized-Q} of the soft Q-function, (ii) follows from the relation \eqref{eq:regularized-V-to-Q}, (iii) relies on the monotonicity of the soft Q-function (see \eqref{eq:Q_mono}),  (iv) uses the form of $\pi^{(t+1)}$ in \eqref{eq:SPI-update}, whereas (v) makes use of the expression \eqref{defn:soft_bellman_cf}. The inequality \eqref{eq:Q-TQ-mon} further leads to $0\leq \soft{\Q}^\star - \soft{\Q}^{(t+1)} \leq \soft{\Q}^\star - \soft{\mathcal{T}} \big( \soft{\Q}^{(t+1)} )$, and hence
		\begin{align}
			\normbig{\soft{\Q}^\star - \soft{\Q}^{(t+1)}}_\infty & \le \normbig{\soft{\Q}^\star - \soft{\mathcal{T}} \big( \soft{\Q}^{(t)} \big) }_\infty = \normbig{\soft{\mathcal{T}} \big( \soft{\Q}^\star \big) - \soft{\mathcal{T}} \big(\soft{\Q}^{(t)} \big)}_\infty  \le \disct \normbig{\soft{\Q}^\star - \soft{\Q}^{(t)}}_\infty \label{eq:contraction_spi}\\
		 & \leq \disct^{t+1}\normbig{\soft{\Q}^\star - \soft{\Q}^{(0)}}_\infty, \notag
		\end{align}
where the first equality follows from the fixed-point property \eqref{defn:soft_bellman_fixpoint}, and the second inequality is due to the contraction property \eqref{eq:soft_bellman_contraction}. We have thus established linear convergence of $\soft{\Q}^{(t)}$ in $\|\cdot\|_{\infty}$ for this case. 
		
Turning to the log policies, recall that 
\begin{align*}
	\pi^{(t+1)}(\cdot|s) \propto \exp\big( {\soft{\Q}^{(t)}(s,\cdot)}/\tau \big) \qquad  \text{and} \qquad 
	\pi_{\tau}^{\star}(\cdot|s) \propto \exp\big( {\soft{\Q}^{\star}(s,\cdot)}/\tau \big) ,
\end{align*}
where the second relation comes from 
\citet[Eqn.~(12)]{nachum2017bridging}. It then follows from 
 an elementary property of the softmax function (see \eqref{eq:log_pi_diff} in Appendix~\ref{sec:properety_logexp}) that
 \[
 	\big\|\log \plcy^{(t+1)} - \log \plcy_\tau^{\star}\big\|_\infty  \le  \frac{2 }{\tau }\normbig{\soft{\Q}^{(t)} - \soft{\Q}^{\star}  }_\infty \leq \frac{2}{\tau} \disct^{t}\normbig{\soft{\Q}^\star - \soft{\Q}^{(0)}}_\infty,
\]
thus concluding the proof for this case.

\subsubsection{The case with general learning rates} 

We now move to the case with a general learning rate. For the sake of brevity, we shall  denote 
\begin{equation} 
\label{eq:def_alpha}
\alpha := 1 - \frac{\eta \tau}{1 - \disct}.
\end{equation} 
Additionally, it is helpful to introduce an auxiliary sequence $\{\xi^{(t)}\in \mathbb{R}^{|\cS||\cA|}\}$ constructed recursively by
	\begin{subequations}
		\label{defn:xi-t-sequence}
		\begin{align}
		\xi^{(0)}(\s, \ac) :=& \norm{\exp\big( {\soft{\Q}^{\star}(\s, \cdot)}/\tau \big) }_1 \cdot \plcy^{(0)}(\ac | \s), \label{defn:xi-t-sequence-1}\\
			\xi^{(t+1)}(s,a) :=& \big[ \xi^{(t)}(s,a) \big]^{\alpha} \exp\prnBig{\prn{1-\alpha}\frac{\soft{\Q}^{(t)} (s,a)}{\tau} }, \qquad  \forall~(s,a)\in \cS\times \cA, ~t\geq 0. \label{defn:xi-t-sequence-2}
		\end{align}
	\end{subequations}
It is easily seen from the construction \eqref{defn:xi-t-sequence-2} that
\begin{align}
	\soft{Q}^\star - \tau \log \xi^{(t+1)} &  =\soft{Q}^\star - \tau \alpha \log \xi^{(t)} - (1-\alpha)  \soft{Q}^{(t)}	\notag \\
	& = \alpha \prnbig{\soft{Q}^\star - \tau \log \xi^{(t)}} + (1-\alpha) \prnbig{\soft{Q}^\star - \soft{Q}^{(t)}	} \label{eq:linsys_eq2}
\end{align}
and, consequently, 
\begin{align}
	\normbig{ \soft{Q}^\star - \tau \log \xi^{(t+1)} }_{\infty} \leq \alpha \normbig{\soft{Q}^\star - \tau \log \xi^{(t)}}_{\infty} + (1-\alpha) \normbig{\soft{Q}^\star - \soft{Q}^{(t)}	}_{\infty}. \label{eq:linsys_eq22}
\end{align}

\paragraph{Step 1: a linear system that describes the error recursions.} In the case with general learning rates, the estimation error $\big\| \soft{Q}^\star  - \soft{Q}^{(t)}\big\|_{\infty} $ does not contract in the same form as that of soft policy iteration; instead,  it is more succinctly controlled with the aid of an auxiliary quantity $\norm{\soft{Q}^\star - \tau \log \xi^{(t)}}_\infty$. In what follows, we leverage a simple yet powerful technique by describing the dynamics concerning $\big\| \soft{Q}^\star  - \soft{Q}^{(t)}\big\|_{\infty} $ and $\norm{\soft{Q}^\star - \tau \log \xi^{(t)}}_\infty$ via a linear system, whose spectral properties dictate the convergence rate. Towards this, we start with the following key observation, whose proof is deferred to Appendix~\ref{sec:linsys_1_pf}. 
	\begin{lemma} 
	\label{lemma:linsys_1}
	For any learning rate $0 < \eta  \le (1-\disct)/\tau$, the entropy-regularized NPG updates \eqref{eq:entropy_npg} satisfy 
	\begin{align}
	\label{eq:linsys_eq1}
	\normbig{ \soft{Q}^\star  - \soft{Q}^{(t+1)}}_{\infty} \leq \disct \normbig{\soft{Q}^\star - \tau \log \xi^{(t+1)}}_\infty + \disct \alpha^{t+1} \normbig{\soft{Q}^{(0)} - \tau \log \xi^{(0)}}_\infty,	
	\end{align}
	where $\alpha$  is defined in \eqref{eq:def_alpha}. 
	\end{lemma}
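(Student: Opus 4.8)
The plan is to reduce the claimed $\soft{Q}$-error bound to a bound on the soft value functions, and then to split the latter into a ``log-sum-exp'' piece that produces the term $\disct\norm{\soft{Q}^\star - \tau\log\xi^{(t+1)}}_\infty$ together with a residual piece that the monotonicity of $\soft{Q}^{(t)}$ collapses into the $\alpha^{t+1}$-decaying term. For the first reduction I would invoke the Bellman relation \eqref{eq:defn-regularized-Q}, giving
\[
	\soft{Q}^\star(\s,\ac) - \soft{Q}^{(t+1)}(\s,\ac) = \disct\, \mathbb{E}_{\s'\sim \prob(\cdot|\s,\ac)}\big[\soft{V}^\star(\s') - \soft{V}^{(t+1)}(\s')\big],
\]
so that $\norm{\soft{Q}^\star - \soft{Q}^{(t+1)}}_\infty \le \disct\norm{\soft{V}^\star - \soft{V}^{(t+1)}}_\infty$; it then suffices to control $\soft{V}^\star(\s) - \soft{V}^{(t+1)}(\s)$ for each fixed $\s$. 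The crucial structural observation is that the auxiliary sequence $\xi^{(t)}$ is nothing but the \emph{unnormalized} policy: a one-line induction on \eqref{defn:xi-t-sequence} shows $\xi^{(t)}(\s,\ac) = c^{(t)}(\s)\,\plcy^{(t)}(\ac|\s)$ with $c^{(t)}(\s) = \norm{\xi^{(t)}(\s,\cdot)}_1$, because (after substituting $\alpha = 1 - \tfrac{\eta\tau}{1-\disct}$) the recursion \eqref{defn:xi-t-sequence-2} and the NPG update \eqref{eq:entropy_npg} obey the \emph{same} multiplicative rule, differing only by the state-dependent normalizer $Z^{(t)}(\s)$.

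Writing $u^{(t)} := \soft{Q}^{(t)} - \tau\log\xi^{(t)}$ and inserting $-\tau\log\plcy^{(t+1)}(\ac|\s) = \tau\log c^{(t+1)}(\s) - \tau\log\xi^{(t+1)}(\s,\ac)$ into the definition \eqref{eq:regularized-V-to-Q} yields the exact identity
\[
	\soft{V}^{(t+1)}(\s) = \tau\log c^{(t+1)}(\s) + \sum_{\ac}\plcy^{(t+1)}(\ac|\s)\,u^{(t+1)}(\s,\ac).
\]
On the other hand, the soft Bellman fixed-point property \eqref{defn:soft_bellman_cf}--\eqref{defn:soft_bellman_fixpoint} gives $\soft{V}^\star(\s) = \tau\log\norm{\exp(\soft{Q}^\star(\s,\cdot)/\tau)}_1$. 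Since $\tau\log c^{(t+1)}(\s)$ is exactly $\tau$ times the log-sum-exp of $\tau\log\xi^{(t+1)}(\s,\cdot)/\tau$, and log-sum-exp is $1$-Lipschitz in $\ell_\infty$ (the elementary softmax property behind Appendix~\ref{sec:properety_logexp}), I obtain $\soft{V}^\star(\s) - \tau\log c^{(t+1)}(\s) \le \norm{\soft{Q}^\star(\s,\cdot) - \tau\log\xi^{(t+1)}(\s,\cdot)}_\infty$. This already produces the first term of \eqref{eq:linsys_eq1}.

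It remains to control the residual $-\sum_{\ac}\plcy^{(t+1)}(\ac|\s)\,u^{(t+1)}(\s,\ac)$, which is the main obstacle, since $u^{(t+1)}$ still involves the unknown $\soft{Q}^{(t+1)}$. The idea is to track $u^{(t)}$ through its own recursion: substituting $\tau\log\xi^{(t+1)} = \alpha\,\tau\log\xi^{(t)} + (1-\alpha)\soft{Q}^{(t)}$ (i.e.~\eqref{defn:xi-t-sequence-2}) into the definition of $u^{(t+1)}$ gives the clean relation $u^{(t+1)} = \big(\soft{Q}^{(t+1)} - \soft{Q}^{(t)}\big) + \alpha\,u^{(t)}$. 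Unrolling it and using the monotonicity $\soft{Q}^{(t+1)} \ge \soft{Q}^{(t)}$ from \eqref{eq:Q_mono}, so that every telescoped increment is entrywise nonnegative, I conclude $u^{(t+1)} \ge \alpha^{t+1}u^{(0)}$ pointwise, whence
\[
	-\sum_{\ac}\plcy^{(t+1)}(\ac|\s)\,u^{(t+1)}(\s,\ac) \le \alpha^{t+1}\max_{\ac}\big(-u^{(0)}(\s,\ac)\big) \le \alpha^{t+1}\normbig{\soft{Q}^{(0)} - \tau\log\xi^{(0)}}_\infty.
\]
Adding the two pieces bounds $\soft{V}^\star(\s) - \soft{V}^{(t+1)}(\s)$; taking the maximum over $\s$ and multiplying by $\disct$ then recovers \eqref{eq:linsys_eq1}. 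The only delicate point is the sign bookkeeping in this last step: one must exploit that the increments $\soft{Q}^{(k+1)} - \soft{Q}^{(k)}$ point in the favorable direction, so that the \emph{one-sided} estimate $u^{(t+1)} \ge \alpha^{t+1}u^{(0)}$ (rather than an unusable two-sided bound that would reintroduce $\norm{\soft{Q}^\star - \soft{Q}^{(t)}}_\infty$) is what suffices.
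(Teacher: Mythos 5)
Your proposal is correct and follows essentially the same route as the paper's proof: both split $\soft{V}^\star - \soft{V}^{(t+1)}$ (equivalently, the Bellman backup of $\soft{Q}^\star - \soft{Q}^{(t+1)}$) into the log-sum-exp piece controlled by the $1$-Lipschitz property \eqref{eq:bound_logexp_diff}, giving $\disct\normbig{\soft{Q}^\star - \tau\log\xi^{(t+1)}}_\infty$, and the residual $\soft{Q}^{(t+1)} - \tau\log\xi^{(t+1)}$, which is lower-bounded by $\alpha^{t+1}\prnbig{\soft{Q}^{(0)} - \tau\log\xi^{(0)}}$ via the monotonicity \eqref{eq:Q_mono}. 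Your explicit recursion $u^{(t+1)} = \prnbig{\soft{Q}^{(t+1)} - \soft{Q}^{(t)}} + \alpha u^{(t)}$ is just a cleaner packaging of the chain of inequalities the paper unrolls in Appendix~\ref{sec:linsys_1_pf}.
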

	%
If we substitute \eqref{eq:linsys_eq2} into \eqref{eq:linsys_eq1}, it is straightforwardly seen that Lemma~\ref{lemma:linsys_1} is a generalization of the contraction property \eqref{eq:contraction_spi} of soft policy iteration (the case corresponding to $\alpha=0$). Given that Lemma \ref{lemma:linsys_1} involves the interaction of more than one quantities, it is convenient to combine \eqref{eq:linsys_eq22} and \eqref{eq:linsys_eq1}  into the following linear system
\begin{equation} \label{eq:linsys}
x_{t+1} \leq A x_{t} + \disct \alpha^{t+1} y,
\end{equation}
	where
	\begin{align}
		A := \begin{bmatrix}
	\disct (1 - \alpha) & \disct  \alpha \\
	1 - \alpha & \alpha
	\end{bmatrix}, 
		\quad 
		x_{t} := \begin{bmatrix}
	\normbig{\soft{Q}^\star - \soft{Q}^{(t)}}_\infty \\[1ex]
	\normbig{\soft{Q}^\star - \tau \log \xi^{(t)}}_\infty 
	\end{bmatrix} 
		\quad \text{and} \quad 
		y :=  \begin{bmatrix}
	\normbig{\soft{Q}^{(0)} - \tau \log \xi^{(0)}}_\infty \\[0.5ex]
	0
	\end{bmatrix}.
		\label{eq:defn-A-x-y}
	\end{align}
We shall make note of the following appealing features of the rank-1 system matrix $A$:
\begin{equation}
	\label{eq:property_A}
	A = \begin{bmatrix}\disct\\1\end{bmatrix} \begin{bmatrix}1-\alpha,\, \alpha\end{bmatrix}, \qquad\mbox{and}\quad A^t = (1-\eta \tau)^{t-1} A \qquad\forall t\geq 0,
\end{equation}
which relies on the identity $(1-\alpha)\gamma + \alpha = 1-\eta \tau$ (according to the definition \eqref{eq:def_alpha} of $\alpha$).

\begin{remark}
By left multiplying both sides of  \eqref{eq:linsys} by $[1-\alpha, \alpha]$, we obtain
\begin{align*}
&L^{(t+1)} \le (1-\eta\tau) L^{(t)} + \gamma (1-\alpha)\alpha^{t+1}\normbig{\soft{Q}^{(0)} - \tau \log \xi^{(0)}}_\infty,
\end{align*}
where  $L^{(t)} := (1-\alpha)\normbig{\soft{Q}^\star - \soft{Q}^{(t)}}_\infty + \alpha \normbig{\soft{Q}^\star - \tau \log \xi^{(t)}}_\infty$ 
can be viewed as a sort of Lyapunov function. This hints at the intimate connection between our proof and the Lyapunov-type analysis used in system theory. 
\end{remark}

\paragraph{Step 2: characterizing the contraction rate from the linear system.} 
In view of the recursion formula \eqref{eq:linsys} and the non-negativity of $(A, x_t, y)$, it is immediate to deduce that 
	\begin{align}
	x_{t+1} & \leq  A (A x_{t-1} +\gamma \alpha^{t}y) + \gamma \alpha^{t+1} y \notag\\
	& \leq A^{t+1} x_0 + \disct \prn{\alpha^{t+1} I + \alpha^{t}  A + \cdots + \alpha A^t} y  \notag\\
	& = A^{t+1} x_0 + \disct \prn{A^{t+1} - \alpha^{t+1} I }   \prn{\alpha^{-1} A - I}^{-1} y.
		\label{eq:xt-UB-At}
	\end{align}
Here, the last line follows from  the elementary relation
\begin{align*}
\prn{\alpha^{t+1} I + \alpha^{t}  A + \cdots + \alpha A^t} \prn{\alpha^{-1} A - I} & = A^{t+1} - \alpha^{t+1} I 
\end{align*}
and the invertibility of $\alpha^{-1} A - I$ (since $\alpha^{-1} A$ is a rank-1 matrix whose non-zero singular value is larger than 1). 
In addition, the Woodbury matrix inversion formula together with the decomposition \eqref{eq:property_A} yields
	\begin{align}
	\disct \prn{\alpha^{-1} A -  {I}}^{-1} y 
	=
	\gamma\left\{ \left[\begin{array}{cc}
1 & \frac{\alpha}{1-\alpha}\\
\frac{1}{\gamma} & \frac{\alpha}{(1-\alpha)\gamma}
\end{array}\right]-I\right\} y
	= \begin{bmatrix}
	0 & \frac{\disct \alpha}{1-\alpha} \\
	1 & \frac{\disct \alpha + \alpha - \disct }{1 - \alpha}
	\end{bmatrix}  y = \begin{bmatrix}
	0 \\
	\normbig{\soft{Q}^{(0)} - \tau \log \xi^{(0)}}_\infty
	\end{bmatrix},
		\label{eq:inversion-formula-Aalpha}
	\end{align}
which is a non-negative vector. Consequently, this taken together with \eqref{eq:xt-UB-At} gives
	\begin{align}
		{x}_{t+1} & \leq A ^{t+1} \left[ x_0 + \disct \prn{\alpha^{-1} A -  {I}}^{-1} y  \right] -  \alpha^{t+1} \left\{ \gamma \prn{\alpha^{-1} A -  {I}}^{-1} y \right\} \notag \\
		& \leq A ^{t+1} \left[ x_0 + \disct \prn{\alpha^{-1} A -  {I}}^{-1} y  \right] \notag \\
		& = (1 - \eta \tau)^{t} \prn{\begin{bmatrix}\disct\\1\end{bmatrix} \begin{bmatrix}1-\alpha, \alpha\end{bmatrix}}\begin{bmatrix}
		\normbig{\soft{Q}^\star - \soft{Q}^{(0)}}_\infty \\[0.5ex]
		\normbig{\soft{Q}^\star - \tau \log \xi^{(0)}}_\infty + \normbig{\soft{Q}^{(0)} - \tau \log \xi^{(0)}}_\infty
	\end{bmatrix} \notag \\
		& = (1 - \eta \tau)^{t} \left\{(1-\alpha)\normbig{\soft{Q}^\star - \soft{Q}^{(0)}}_\infty + \alpha  \prn{\normbig{\soft{Q}^\star - \tau \log \xi^{(0)}}_\infty + \normbig{\soft{Q}^{(0)} - \tau \log \xi^{(0)}}_\infty} \right\}
		\begin{bmatrix}
		\disct \\
		1
		\end{bmatrix}, \label{eq:intermediate}
	\end{align}
where the third line follows from \eqref{eq:property_A}, \eqref{eq:inversion-formula-Aalpha} and the definition of $x_t$. Further, observe that
	\begin{align}
		&\normbig{\soft{Q}^\star - \tau \log \xi^{(0)}}_\infty + \normbig{\soft{Q}^{(0)} - \tau \log \xi^{(0)}}_\infty - \normbig{\soft{Q}^\star - \soft{Q}^{(0)}}_\infty \notag \\
		& \qquad \qquad \le 2\normbig{\soft{Q}^\star - \tau \log \xi^{(0)}}_\infty 
			 = 2\tau \normbig{\log \soft{\plcy}^\star - \log \plcy^{(0)}}_\infty, 
			 \label{eq:initial_reform}
	\end{align}
where the inequality comes from the triangle inequality, and the last identity follows from \eqref{defn:xi-t-sequence-1}. Substituting this back into \eqref{eq:intermediate}, we obtain
	\begin{equation}
		x_{t+1} \le  (1 - \eta \tau)^{t}  \left\{ \normbig{\soft{Q}^\star - \soft{Q}^{(0)}}_\infty +  2\alpha \tau \normbig{\log \soft{\plcy}^\star - \log \plcy^{(0)}}_\infty \right\} \begin{bmatrix}
	\disct \\
	1
	\end{bmatrix}.
		\label{eq:xtplus1-recursion-bound}
	\end{equation}
	
To finish up, recall that $\pi^{(t)}$ is related to $\xi^{(t)}$ as follows
\begin{align}
	\forall s\in \cS: \qquad \pi^{(t)}(\cdot | s) =  \frac{1}{\|\xi^{(t)} (s,\cdot)\|_1} \xi^{(t)} (s,\cdot) ,
\end{align}
which can be seen by comparing \eqref{defn:xi-t-sequence} with \eqref{eq:entropy_npg}. Therefore, invoking the 
  elementary property of the softmax function (see \eqref{eq:log_pi_diff} in Appendix~\ref{sec:properety_logexp}), we arrive at 
$$\normbig{\log \soft{\plcy}^\star - \log \plcy^{(t+1)}}_\infty \le 2\normbig{\soft{\Q}^\star/\tau - \log \xi^{(t+1)}}_\infty.$$ 
This combined with \eqref{eq:xtplus1-recursion-bound} as well as the definition \eqref{eq:defn-A-x-y} of $x_{t+1}$ 
immediately establishes Theorem~\ref{thm:npg_exact}.

\subsection{Analysis of approximate entropy-regularized NPG methods} \label{sec:Pf-inexact}

We now turn to the convergence properties of approximate entropy-regularized NPG methods --- as claimed in Theorem~\ref{thm:npg_inexact} --- when only inexact policy evaluation $\estsoftQ$ is available (in the sense of \eqref{eq:soft_greedy_approximate}).

\paragraph{Step 1: performance difference accounting for inexact policy evaluation.} 
We first bound the quality of the policy updates~\eqref{eq:soft_greedy_approximate} by examining the difference between $\soft{\V}^{(t+1)}$ and $\soft{\V}^{(t)}$ and how it is impacted by the imperfectness of policy evaluation. This is made precise by the following lemma.

\begin{lemma}[Performance difference of approximate entropy-regularized NPG]
	\label{lemma:inexact_bound}
Suppose that $0<\eta \le (1-\disct)/\tau$. For any state $s_0\in\mathcal{S}$, one has
	\begin{align}
		\label{eq:improvement_inexact}  
		\soft{\V}^{(t)}(s_0) & \le \soft{\V}^{(t+1)}(s_0) + \frac{2}{1 - \disct}\normbig{\estsoftQ - \soft{Q}^{(t)}}_\infty.
	\end{align}
\end{lemma}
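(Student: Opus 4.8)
The plan is to mimic the proof of Lemma~\ref{lemma:dalaopo}, but to carefully track the mismatch between the estimate $\estsoftQ$ that is used to build $\plcy^{(t+1)}$ and the \emph{true} soft Q-function $\soft{\Q}^{(t)}$ that enters the value functions. The starting point is the regularized performance-difference identity (the very same identity that underlies Lemma~\ref{lemma:dalaopo}), applied to the two consecutive iterates $\plcy^{(t)}$ and $\plcy^{(t+1)}$. Writing $T(\s) := \exlim{\ac\sim\plcy^{(t+1)}(\cdot|\s)}{\soft{\Q}^{(t)}(\s,\ac) - \tau\log\plcy^{(t+1)}(\ac|\s)} - \soft{\V}^{(t)}(\s)$ for the per-state soft-advantage gap, one has
\begin{equation*}
\soft{\V}^{(t+1)}(s_0) - \soft{\V}^{(t)}(s_0) = \frac{1}{1-\disct}\exlim{\s\sim d_{s_0}^{(t+1)}}{T(\s)}.
\end{equation*}
Since $d_{s_0}^{(t+1)}$ is a probability distribution, it then suffices to prove the pointwise lower bound $T(\s) \ge -2\normbig{\estsoftQ - \soft{\Q}^{(t)}}_\infty$ for every $\s\in\cS$, after which averaging and dividing by $1-\disct$ delivers~\eqref{eq:improvement_inexact}.

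\textbf{Key step.} To lower bound $T(\s)$, I would first take logarithms in the inexact update~\eqref{eq:soft_greedy_approximate}, which (with $Z^{(t)}(\s)$ the associated normalization factor) yields the closed form
\begin{equation*}
\estsoftQ(\s,\ac) = \frac{1-\disct}{\eta}\log\plcy^{(t+1)}(\ac|\s) - \prn{\frac{1-\disct}{\eta}-\tau}\log\plcy^{(t)}(\ac|\s) + \frac{1-\disct}{\eta}\log Z^{(t)}(\s).
\end{equation*}
The crux is that $\plcy^{(t+1)}$ is \emph{exactly} the NPG update associated with $\estsoftQ$ (not with $\soft{\Q}^{(t)}$), so this identity is clean only for $\estsoftQ$. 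Accordingly, I would replace $\soft{\Q}^{(t)}$ by $\estsoftQ$ in the two spots where the true Q-function appears in $T(\s)$ --- inside $\exlim{\ac\sim\plcy^{(t+1)}}{\soft{\Q}^{(t)}(\s,\ac)}$ and inside $\soft{\V}^{(t)}(\s)=\exlim{\ac\sim\plcy^{(t)}}{-\tau\log\plcy^{(t)}(\ac|\s)+\soft{\Q}^{(t)}(\s,\ac)}$ (cf.~\eqref{eq:regularized-V-to-Q}) --- each substitution costing at most $\normbig{\estsoftQ - \soft{\Q}^{(t)}}_\infty$ in absolute value. Substituting the closed form into the two resulting expectations, the $\frac{1-\disct}{\eta}\log Z^{(t)}(\s)$ contributions cancel between the $\plcy^{(t+1)}$-expectation and the $\plcy^{(t)}$-expectation, and one is left with
\begin{equation*}
\begin{aligned}
T(\s) \ge{}& \prn{\frac{1-\disct}{\eta}-\tau}\KL\prn{\plcy^{(t+1)}(\cdot|\s)\,\|\,\plcy^{(t)}(\cdot|\s)} \\
&{}+ \frac{1-\disct}{\eta}\KL\prn{\plcy^{(t)}(\cdot|\s)\,\|\,\plcy^{(t+1)}(\cdot|\s)} - 2\normbig{\estsoftQ - \soft{\Q}^{(t)}}_\infty.
\end{aligned}
\end{equation*}

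\textbf{Conclusion and main obstacle.} Because the admissible range $0<\eta\le(1-\disct)/\tau$ guarantees $\frac{1-\disct}{\eta}-\tau\ge 0$, and both KL divergences are nonnegative, the two KL terms may be discarded, giving the desired $T(\s)\ge -2\normbig{\estsoftQ-\soft{\Q}^{(t)}}_\infty$; this immediately yields~\eqref{eq:improvement_inexact}, and reduces to the exact identity of Lemma~\ref{lemma:dalaopo} when the error vanishes. I expect the main obstacle to be the error bookkeeping that makes the normalization $\log Z^{(t)}(\s)$ cancel cleanly: the KL identity materializes only after the exact Q-function is swapped for its estimate in \emph{both} expectations before invoking the closed form, since substituting in only one of them would leave a stray $\log Z^{(t)}(\s)$ and break the argument. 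The remaining care is simply to note that each of the two swaps introduces an error bounded by $\normbig{\estsoftQ-\soft{\Q}^{(t)}}_\infty$, so that the total slack is exactly the factor $2$ appearing in the statement.
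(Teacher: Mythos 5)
Your proposal is correct, and the skeleton (regularized performance-difference identity over $d_{s_0}^{(t+1)}$, followed by the KL decomposition extracted from the logarithm of the update rule, followed by discarding the two nonnegative KL terms under $0<\eta\le(1-\gamma)/\tau$) is the same as the paper's. Where you genuinely diverge is in how the inexactness is absorbed. The paper introduces an auxiliary policy $\exctplcy{t+1}$ --- the exact one-step update built from the true $\soft{\Q}^{(t)}$ --- proves $\|\log\plcy^{(t+1)}-\log\exctplcy{t+1}\|_\infty\le\frac{2\eta}{1-\gamma}\|\estsoftQ-\soft{\Q}^{(t)}\|_\infty$ via the softmax perturbation bound \eqref{eq:log_pi_diff}, and then pays the error through a single stray term $-\frac{1-\gamma}{\eta}\,\mathbb{E}_{a\sim\plcy^{(t+1)}}[\log\exctplcy{t+1}-\log\plcy^{(t+1)}]$; the factor $2$ there comes from the Lipschitz constant of the log-softmax map. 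You instead keep only the actually-implemented policy $\plcy^{(t+1)}$, use the exact closed form of $\estsoftQ$ in terms of $\log\plcy^{(t+1)}$, $\log\plcy^{(t)}$ and $\log Z^{(t)}$, and swap $\soft{\Q}^{(t)}$ for $\estsoftQ$ in the two expectations, paying $\|\estsoftQ-\soft{\Q}^{(t)}\|_\infty$ twice; your factor $2$ has a different (and more transparent) origin. Your route is slightly more elementary --- it avoids the auxiliary policy and the softmax Lipschitz lemma altogether --- and your warning about performing the swap in \emph{both} expectations before invoking the closed form is exactly the right point of care: otherwise the $\frac{1-\gamma}{\eta}\log Z^{(t)}(s)$ terms would not cancel. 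The only thing to flesh out in a full write-up is the telescoping derivation of the identity $\soft{\V}^{(t+1)}(s_0)-\soft{\V}^{(t)}(s_0)=\frac{1}{1-\gamma}\,\mathbb{E}_{s\sim d_{s_0}^{(t+1)}}[T(s)]$, which follows the same unrolling as \eqref{eqn:useful}.
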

	\begin{proof}
		See Appendix~\ref{sec:pflemdalaopo_pert}.
	\end{proof}

The careful reader might already realize that the above lemma is a relaxation of Lemma~\ref{lemma:dalaopo}; in particular, the last term of \eqref{eq:improvement_inexact} quantifies the effect of the approximation error (i.e.~the difference between $\estsoftQ$ and $\soft{\Q}^{(t)}$) upon performance improvement. Under the assumption $\normbig{\estsoftQ - \soft{Q}^{(t)}}_\infty \leq \delta$, repeating the argument of \eqref{eq:Q_mono} reveals that the  soft $Q$-function estimates are not far from being monotone in $t$, in the sense that
\begin{align} 
	\label{eq:almost_mono_Q}
	\forall (s,a)\in\mathcal{S}\times\mathcal{A}: \qquad
	\soft{Q}^{(t)}(s,a) - \soft{Q}^{(t+1)}(s,a) & = \disct \exlim{\s' \sim \prob(\cdot|\s, \ac)}{\soft{V}^{(t)}(s') - \soft{V}^{(t+1)}(s')} \leq \frac{2\disct \delta}{1 - \disct} .
\end{align}

\paragraph{Step 2: a linear system accounting for inexact policy evaluation.} With the assistance of \eqref{eq:almost_mono_Q}, it is possible to construct a linear system --- similar to the one built in Section~\ref{Sec:pfNPG-eta} --- that takes into account inexact policy evaluation. Towards this end, we adopt a similar approach as in \eqref{defn:xi-t-sequence}
by introducing the following auxiliary sequence $\widehat{\xi}^{(t)}$ defined recursively using $\estsoftQ$:
	\begin{subequations}
		\label{defn:hat-xi-t-sequence}
		\begin{align}
		\widehat{\xi}^{(0)}(\s, \ac) :=& \norm{\exp\big( {\soft{\Q}^{\star}(\s, \cdot)}/\tau \big) }_1 \cdot \plcy^{(0)}(\s, \ac), \label{defn:hat-xi-t-sequence-1}\\
		\widehat{\xi}^{(t+1)}(s,a) :=& \brk{\widehat{\xi}^{(t)}(s,a)}^{\alpha} \exp\prnBig{\prn{1-\alpha}\frac{\estsoftQ (s,a)}{\tau} }, \qquad  \forall~(s,a)\in \cS\times \cA, ~t\geq 0, \label{defn:hat-xi-t-sequence-2}
		\end{align}
	\end{subequations}
where $\alpha := 1- \frac{\eta\tau}{1-\gamma}$ as before.

We claim that the following linear system tracks the error dynamics of the policy updates:
\begin{equation} \label{eq:inexact_ls}
z_{t+1} \leq  B z_t + b,
\end{equation}
 where
\begin{align} 
	\label{eq:expressions_inexact_ls}
	B := \begin{bmatrix}
		\disct(1-\alpha) & \disct\alpha & \disct \alpha \\
	1 - \alpha & \alpha & 0 \\
	0 & 0 &  \alpha
	\end{bmatrix}, \; 
	z_t := \begin{bmatrix}
	\big\| \soft{Q}^\star - \soft{Q}^{(t)} \big\|_{\infty} \\[1ex] 
	\big\|\soft{Q}^\star - \tau \log \widehat{\xi}^{(t)}\big\|_\infty \\[1ex]  
	-\min_{s,a} \big(\soft{Q}^{(t)}(s,a) - \tau\log \widehat{\xi}^{(t)}(s,a) \big)
	\end{bmatrix}, \; 
	b := (1-\alpha)\delta\begin{bmatrix}
\disct\prn{2+\frac{2\disct}{\eta\tau}} \\
1 \\
1 + \frac{2\disct }{\eta\tau}
\end{bmatrix}.
\end{align}
Here, the system matrix $B$ (in particular its eigenvalues) governs the contraction rate, while the term $b$ captures the error introduced by inexact policy evaluation.  Theorem~\ref{thm:npg_inexact} then follows by carrying out a similar analysis argument as in Section~\ref{Sec:pfNPG-eta} to characterize the error dynamics. Details are postponed to Appendix~\ref{sec:proof_approximate_npg}.

\subsection{Analysis of local quadratic convergence}
\label{sec:quadratic-sketch}

We now sketch the proof of Theorem~\ref{thm:spi_sp}, which establishes local quadratic convergence of SPI.

\paragraph{Step 1: characterization of the sub-optimality gap.} 
Lemma~\ref{lemma:dalaopo} bounds the performance improvement of SPI by the KL divergence between the current policy $\pi^{(t)}$ and the updated policy $\pi^{(t+1)}$. Interestingly, the type of KL divergence can  be further employed to bound  the sub-optimality gap for each iteration.
 	\begin{lemma}[Sub-optimality gap] 
	\label{lemma:xiaoqie}
	Suppose that $\eta = (1-\disct)/\tau$. For any distribution $\rho$, one has
	\begin{align*}
		\soft{\V}^{\star}(\rho) - \soft{\V}^{(t)}(\rho) \le \frac{1}{\eta}\exlim{\s \sim d_{\rho}^{\plcy^\star_\tau}}{ {\KL \Big( \plcy^{(t)}(\cdot|\s) \,\big\|\, {\plcy}^{(t+1)}(\cdot|\s) \Big)}} .
	\end{align*} 
	%
	\end{lemma}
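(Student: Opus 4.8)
The plan is to combine a regularized \emph{performance difference} identity with the special Gibbs structure of the SPI update, linking the sub-optimality gap to a log-sum-exp normalizer that mediates between two KL divergences. First I would establish the soft performance difference lemma: for any two policies $\pi,\pi'$ and any distribution $\rho$,
\[
\soft{\V}^{\pi}(\rho) - \soft{\V}^{\pi'}(\rho) = \frac{1}{1-\disct}\,\exlim{\s\sim d_\rho^{\pi}}{\sum_{\ac} \pi(\ac|\s)\prnbig{\soft{\Q}^{\pi'}(\s,\ac) - \tau\log\pi(\ac|\s)} - \soft{\V}^{\pi'}(\s)}.
\]
This follows from the standard telescoping argument applied to the regularized reward $r(\s,\ac) - \tau\log\pi(\ac|\s)$, using $r(\s,\ac) + \disct\,\Exs_{\s'\sim\sP(\cdot|\s,\ac)}[\soft{\V}^{\pi'}(\s')] = \soft{\Q}^{\pi'}(\s,\ac)$ from \eqref{eq:defn-regularized-Q} and the definition \eqref{eq:defn-d-s0} of $d_\rho^{\pi}$. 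I would then instantiate it with $\pi = \soft{\plcy}^\star$ and $\pi' = \plcy^{(t)}$, so that the state distribution becomes $d_\rho^{\soft{\plcy}^\star}$, exactly as in the target statement.

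Next I would exploit the SPI form of the iterate. Since $\eta = (1-\disct)/\tau$, the update \eqref{eq:SPI-update} reads $\plcy^{(t+1)}(\ac|\s) \propto \exp(\soft{\Q}^{(t)}(\s,\ac)/\tau)$, equivalently
\[
\soft{\Q}^{(t)}(\s,\ac) = \tau\log\plcy^{(t+1)}(\ac|\s) + \tau\log Z^{(t)}(\s), \qquad Z^{(t)}(\s) = \normbig{\exp\prnbig{\soft{\Q}^{(t)}(\s,\cdot)/\tau}}_1.
\]
Substituting this into the bracketed term of the performance difference identity (with $\pi = \soft{\plcy}^\star$), the $\soft{\Q}^{(t)}$ contribution splits into $\tau\log Z^{(t)}(\s)$ plus a $-\tau\,\KL(\soft{\plcy}^\star(\cdot|\s)\,\|\,\plcy^{(t+1)}(\cdot|\s))$ piece; the latter is nonnegative and can be discarded, producing the upper bound $\frac{1}{1-\disct}\,\Exs_{\s\sim d_\rho^{\soft{\plcy}^\star}}\big[\tau\log Z^{(t)}(\s) - \soft{\V}^{(t)}(\s)\big]$.

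The crux is then to identify $\tau\log Z^{(t)}(\s) - \soft{\V}^{(t)}(\s)$ with the desired divergence $\tau\,\KL(\plcy^{(t)}(\cdot|\s)\,\|\,\plcy^{(t+1)}(\cdot|\s))$. For this I would insert the same relation $\soft{\Q}^{(t)}(\s,\ac) = \tau\log\plcy^{(t+1)}(\ac|\s) + \tau\log Z^{(t)}(\s)$ into the definition \eqref{eq:regularized-V-to-Q}, namely $\soft{\V}^{(t)}(\s) = \exlim{\ac\sim\plcy^{(t)}(\cdot|\s)}{-\tau\log\plcy^{(t)}(\ac|\s) + \soft{\Q}^{(t)}(\s,\ac)}$; the factor $\tau\log Z^{(t)}(\s)$ pulls out of the expectation and the remaining terms collapse to $\tau\log Z^{(t)}(\s) - \tau\,\KL(\plcy^{(t)}(\cdot|\s)\,\|\,\plcy^{(t+1)}(\cdot|\s))$. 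Rearranging yields the identity, and combining it with $\frac{\tau}{1-\disct} = \frac{1}{\eta}$ delivers the claimed bound.

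I expect the principal obstacle to be bookkeeping rather than any conceptual difficulty: deriving the regularized performance difference lemma with the correct discount/visitation normalization, and carefully tracking which policy sits in each KL term---the $\soft{\plcy}^\star$-vs-$\plcy^{(t+1)}$ divergence must be dropped for the inequality, whereas the $\plcy^{(t)}$-vs-$\plcy^{(t+1)}$ divergence is precisely the one retained. No deeper difficulty is anticipated, since the normalizer $\tau\log Z^{(t)}(\s)$ mediates cleanly between the optimal-policy comparison and the consecutive-iterate comparison.
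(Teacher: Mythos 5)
Your proposal is correct and follows essentially the same route as the paper: the regularized performance difference identity evaluated at $(\soft{\plcy}^\star,\plcy^{(t)})$, followed by the Gibbs identity $\soft{\Q}^{(t)}(\s,\ac)=\tau\log\plcy^{(t+1)}(\ac|\s)+\tau\log Z^{(t)}(\s)$ applied both to drop the nonnegative $\KL(\soft{\plcy}^\star\,\|\,\plcy^{(t+1)})$ term and to rewrite $\soft{\V}^{(t)}(\s)$ as $\tau\log Z^{(t)}(\s)-\tau\,\KL(\plcy^{(t)}(\cdot|\s)\,\|\,\plcy^{(t+1)}(\cdot|\s))$. The only cosmetic difference is that the paper phrases the dropped-KL step as Jensen's inequality on the log-sum-exp, which is the identical bound.
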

	\begin{proof} This result has appeared in \citet[Eqn.~(486)]{mei2020global}. For completeness we include a proof in Appendix~\ref{sec:pflemxiaoqie}. \end{proof}
	In words, Lemma~\ref{lemma:xiaoqie} formalizes the connection between the sub-optimality gap (w.r.t.~the optimal soft value function) and the proximity of the two consecutive policy iterates. 	As reflected by this lemma,  if the current and the updated policies do not differ by much (which indicates that the algorithm might be close to convergence), then 
the current estimate of the soft value function is close to optimal. 

\paragraph{Step 2: a contraction property.}
The importance of the above two lemmas is made apparent by the following contraction property when $ \eta = (1-\disct)/\tau$: 
	\begin{align}
		\soft{\V}^{\star}(\rho) - \soft{\V}^{(t+1)}(\rho) &= \soft{\V}^{\star}(\rho) - \soft{\V}^{(t)}(\rho) + \Big( \soft{\V}^{(t)}(\rho) - \soft{\V}^{(t+1)}(\rho) \Big) \notag\\
		& \overset{\mathrm{(i)}}{=} \soft{\V}^{\star}(\rho) - \soft{\V}^{(t)}(\rho)  - \frac{1}{\eta} \exlim{\s \sim d_{\rho}^{(t+1)}}{  \KL\prn{\plcy^{(t)}(\cdot |\s) \,\big\|\, \plcy^{(t+1)}(\cdot | \s)}}\notag\\
		& \overset{\mathrm{(ii)}}{\leq} \soft{\V}^{\star}(\rho) - \soft{\V}^{(t)}(\rho)  -  \frac{1}{\eta} \norm{\frac{d_\rho^{\plcy^\star_\tau}}{d_{\rho}^{(t+1)}}}^{-1}_\infty \exlim{\s \sim d_{\rho}^{\plcy^\star_\tau}}{ {\KL \Big( \plcy^{(t)}(\cdot|\s) \,\big\|\, {\plcy}^{(t+1)}(\cdot|\s) \Big)}}  \notag\\
		& \overset{\mathrm{(iii)}}{\le}   \soft{\V}^{\star}(\rho) - \soft{\V}^{(t)}(\rho) - \norm{\frac{d_\rho^{\plcy^\star_\tau}}{d_{\rho}^{(t+1)}}}^{-1}_\infty \Big( \soft{\V}^{\star}(\rho) - \soft{\V}^{(t)}(\rho) \Big) \notag\\
		& =   \left(1 - \norm{\frac{d_\rho^{\plcy^\star_\tau}}{d_{\rho}^{(t+1)}}}^{-1}_\infty  \right) \prn{\soft{\V}^{\star}(\rho) - \soft{\V}^{(t)}(\rho)} .
	\label{eq:core}
	\end{align}
Here, (i) arises from Lemma~\ref{lemma:dalaopo},  (ii) employs the pre-factor $\big\| {d_\rho^{\plcy^\star_\tau}}/{d_{\rho}^{(t+1)}} \big\|^{-1}_\infty $ to accommodate the change of distributions, whereas (iii) follows from Lemma \ref{lemma:xiaoqie}. 


\paragraph{Step 3: super-linear convergence in the small-$\epsilon$ regime.}
The contraction property 
\eqref{eq:core} implies that $\soft{\V}^{(t+1)}(\rho)$ converges {\em super-linearly} to $\soft{\V}^{\star}$, once $\plcy^{(t)}$ gets sufficiently close to $\plcy^{\star}_\tau$. In fact, once the ratio $d_\rho^{(t+1)}/d_{\rho}^{\pi_{\tau}^{\star}}$ becomes sufficiently close to 1, the contraction factor $1 -   \big\| {d_\rho^{\plcy^\star_\tau}}/{d_{\rho}^{(t+1)}} \big\|_\infty^{-1}$ in \eqref{eq:core} is approaching 0, thereby accelerating convergence. This observation underlies Theorem~\ref{thm:spi_sp}, whose complete analysis is postponed to Appendix~\ref{sec:superlinear}.

	\section{Discussions}

This paper establishes non-asymptotic convergence of entropy-regularized natural policy gradient methods, providing theoretical footings for the role of entropy regularization in guaranteeing fast convergence. Our analysis opens up several directions for future research; we close the paper by sampling a few of them.     
 
\begin{itemize}

\item {\em Extended analysis of policy gradient methods with inexact gradients.} It would be of interest to see whether our analysis framework can be applied to improve the theory of policy gradient methods \citep{mei2020global} to accommodate the case with inexact policy gradients.

\item {\em Finite-sample analysis in the presence of sample-based policy evaluation.} Another natural extension is towards understanding the sample complexity of entropy-regularized NPG methods when the value functions are estimated using rollout trajectories (see e.g.~\citet{kakade2002approximately,agarwal2019optimality,shani2019adaptive}), or using bootstrapping (see e.g.~\citet{xu2020non,haarnoja2018soft,wu2020finite}).

\item {\em Function approximation.} The current work has been limited to the tabular setting. It would certainly be interesting, and fundamentally important, to understand entropy-regularized NPG methods in conjunction with function approximation; see \citet{sutton2000policy, agarwal2019reinforcement,agarwal2019optimality} for a few representative scenarios.

	\item {\em Beyond softmax parameterization.} The current paper has been devoted to softmax parameterization, which enables a concise and NPG update rule. 
A couple of other parameterization schemes have been proposed for (vanilla) PG methods as well \citep{agarwal2019reinforcement,agarwal2019optimality,bhandari2019global,bhandari2020note}, e.g.~vanilla parameterization (paired with proper projection onto the probability simplex in each iteration), log-linear parameterization, and neural softmax parameterization. 
Unfortunately, the analysis in our paper relies heavily on the softmax NPG update rule, and does not immediately extend to other parameterization. It would be of great importance to  establish convergence guarantees that accommodate other parameterizations of practical interest.


\end{itemize}

%


	\section*{Acknowledgments}
The authors are grateful to anonymous reviewers for helpful suggestions, particularly for bringing \citet{dai2018sbeed} to our attention.
S.~Cen and Y.~Chi are supported in part by the grants ONR N00014-18-1-2142 and N00014-19-1-2404, ARO W911NF-18-1-0303, NSF CCF-1806154, CCF-1901199 and CCF-2007911.
C.~Cheng is supported by the William R. Hewlett Stanford graduate fellowship.
Y.~Wei is supported in part by the NSF grants CCF-2007911 and DMS-2015447. 
Y.~Chen is supported in part by the grants AFOSR YIP award FA9550-19-1-0030,
ONR N00014-19-1-2120, ARO YIP award W911NF-20-1-0097, ARO W911NF-18-1-0303, NSF CCF-1907661, IIS-1900140 and DMS-2014279,  
and the Princeton SEAS Innovation Award.

	\bibliographystyle{apalike}
	\bibliography{bibfileRL}

\begin{thebibliography}{}

\bibitem[Agarwal et~al., 2019]{agarwal2019reinforcement}
Agarwal, A., Jiang, N., and Kakade, S.~M. (2019).
\newblock Reinforcement learning: Theory and algorithms.
\newblock Technical report.

\bibitem[Agarwal et~al., 2020a]{agarwal2019model}
Agarwal, A., Kakade, S., and Yang, L.~F. (2020a).
\newblock Model-based reinforcement learning with a generative model is minimax
  optimal.
\newblock In {\em Conference on Learning Theory}, pages 67--83. PMLR.

\bibitem[Agarwal et~al., 2020b]{agarwal2019optimality}
Agarwal, A., Kakade, S.~M., Lee, J.~D., and Mahajan, G. (2020b).
\newblock Optimality and approximation with policy gradient methods in {M}arkov
  decision processes.
\newblock In {\em Conference on Learning Theory}, pages 64--66. PMLR.

\bibitem[Ahmed et~al., 2019]{ahmed2019understanding}
Ahmed, Z., Le~Roux, N., Norouzi, M., and Schuurmans, D. (2019).
\newblock Understanding the impact of entropy on policy optimization.
\newblock In {\em International Conference on Machine Learning}, pages
  151--160.

\bibitem[Amari, 1998]{amari1998natural}
Amari, S.-I. (1998).
\newblock Natural gradient works efficiently in learning.
\newblock {\em Neural computation}, 10(2):251--276.

\bibitem[Azar et~al., 2013]{azar2013minimax}
Azar, M.~G., Munos, R., and Kappen, H.~J. (2013).
\newblock Minimax {PAC} bounds on the sample complexity of reinforcement
  learning with a generative model.
\newblock {\em Machine learning}, 91(3):325--349.

\bibitem[Bellman, 1952]{bellman1952theory}
Bellman, R. (1952).
\newblock On the theory of dynamic programming.
\newblock {\em Proceedings of the National Academy of Sciences of the United
  States of America}, 38(8):716.

\bibitem[Bertsekas, 2017]{bertsekas2017dynamic}
Bertsekas, D.~P. (2017).
\newblock {\em Dynamic programming and optimal control (4th edition)}.
\newblock Athena Scientific.

\bibitem[Bhandari and Russo, 2019]{bhandari2019global}
Bhandari, J. and Russo, D. (2019).
\newblock Global optimality guarantees for policy gradient methods.
\newblock {\em arXiv preprint arXiv:1906.01786}.

\bibitem[Bhandari and Russo, 2020]{bhandari2020note}
Bhandari, J. and Russo, D. (2020).
\newblock A note on the linear convergence of policy gradient methods.
\newblock {\em arXiv preprint arXiv:2007.11120}.

\bibitem[Bhatnagar et~al., 2009]{bhatnagar2009natural}
Bhatnagar, S., Sutton, R.~S., Ghavamzadeh, M., and Lee, M. (2009).
\newblock Natural actor-critic algorithms.
\newblock {\em Automatica}, 45(11):2471--2482.

\bibitem[Cai et~al., 2019]{cai2019provably}
Cai, Q., Yang, Z., Jin, C., and Wang, Z. (2019).
\newblock Provably efficient exploration in policy optimization.
\newblock {\em arXiv preprint arXiv:1912.05830}.

\bibitem[Cover, 1999]{cover1999elements}
Cover, T.~M. (1999).
\newblock {\em Elements of information theory}.
\newblock John Wiley \& Sons.

\bibitem[Dai et~al., 2018]{dai2018sbeed}
Dai, B., Shaw, A., Li, L., Xiao, L., He, N., Liu, Z., Chen, J., and Song, L.
  (2018).
\newblock {SBEED}: Convergent reinforcement learning with nonlinear function
  approximation.
\newblock In {\em International Conference on Machine Learning}, pages
  1125--1134. PMLR.

\bibitem[Duan et~al., 2016]{duan2016benchmarking}
Duan, Y., Chen, X., Houthooft, R., Schulman, J., and Abbeel, P. (2016).
\newblock Benchmarking deep reinforcement learning for continuous control.
\newblock In {\em International Conference on Machine Learning}, pages
  1329--1338.

\bibitem[Even-Dar et~al., 2009]{even2009online}
Even-Dar, E., Kakade, S.~M., and Mansour, Y. (2009).
\newblock Online {M}arkov decision processes.
\newblock {\em Mathematics of Operations Research}, 34(3):726--736.

\bibitem[Fazel et~al., 2018]{fazel2018global}
Fazel, M., Ge, R., Kakade, S., and Mesbahi, M. (2018).
\newblock Global convergence of policy gradient methods for the linear
  quadratic regulator.
\newblock In {\em International Conference on Machine Learning}, pages
  1467--1476.

\bibitem[Geist et~al., 2019]{geist2019theory}
Geist, M., Scherrer, B., and Pietquin, O. (2019).
\newblock A theory of regularized {M}arkov decision processes.
\newblock In {\em International Conference on Machine Learning}, pages
  2160--2169.

\bibitem[Grill et~al., 2019]{grill2019planning}
Grill, J.-B., Darwiche~Domingues, O., Menard, P., Munos, R., and Valko, M.
  (2019).
\newblock Planning in entropy-regularized markov decision processes and games.
\newblock In {\em Advances in Neural Information Processing Systems},
  volume~32.

\bibitem[Haarnoja et~al., 2017]{haarnoja2017reinforcement}
Haarnoja, T., Tang, H., Abbeel, P., and Levine, S. (2017).
\newblock Reinforcement learning with deep energy-based policies.
\newblock In {\em International Conference on Machine Learning}, pages
  1352--1361.

\bibitem[Haarnoja et~al., 2018]{haarnoja2018soft}
Haarnoja, T., Zhou, A., Abbeel, P., and Levine, S. (2018).
\newblock Soft actor-critic: Off-policy maximum entropy deep reinforcement
  learning with a stochastic actor.
\newblock {\em arXiv preprint arXiv:1801.01290}.

\bibitem[Hazan et~al., 2019]{hazan2019provably}
Hazan, E., Kakade, S., Singh, K., and Van~Soest, A. (2019).
\newblock Provably efficient maximum entropy exploration.
\newblock In {\em International Conference on Machine Learning}, pages
  2681--2691.

\bibitem[Jansch-Porto et~al., 2020]{jansch2020convergence}
Jansch-Porto, J.~P., Hu, B., and Dullerud, G. (2020).
\newblock Convergence guarantees of policy optimization methods for {M}arkovian
  jump linear systems.
\newblock {\em arXiv preprint arXiv:2002.04090}.

\bibitem[Kakade and Langford, 2002]{kakade2002approximately}
Kakade, S. and Langford, J. (2002).
\newblock Approximately optimal approximate reinforcement learning.
\newblock In {\em Proceedings of the Nineteenth International Conference on
  Machine Learning}, pages 267--274.

\bibitem[Kakade, 2002]{kakade2002natural}
Kakade, S.~M. (2002).
\newblock A natural policy gradient.
\newblock In {\em Advances in neural information processing systems}, pages
  1531--1538.

\bibitem[Karimi et~al., 2019]{karimi2019non}
Karimi, B., Miasojedow, B., Moulines, {\'E}., and Wai, H.-T. (2019).
\newblock Non-asymptotic analysis of biased stochastic approximation scheme.
\newblock {\em arXiv preprint arXiv:1902.00629}.

\bibitem[Konda and Tsitsiklis, 2000]{konda2000actor}
Konda, V.~R. and Tsitsiklis, J.~N. (2000).
\newblock Actor-critic algorithms.
\newblock In {\em Advances in neural information processing systems}, pages
  1008--1014.

\bibitem[Li et~al., 2021a]{li2021tightening}
Li, G., Cai, C., Chen, Y., Gu, Y., Wei, Y., and Chi, Y. (2021a).
\newblock Is {Q}-learning minimax optimal? a tight sample complexity analysis.
\newblock {\em arXiv preprint arXiv:2102.06548}.

\bibitem[Li et~al., 2020a]{li2020breaking}
Li, G., Wei, Y., Chi, Y., Gu, Y., and Chen, Y. (2020a).
\newblock Breaking the sample size barrier in model-based reinforcement
  learning with a generative model.
\newblock {\em arXiv preprint arXiv:2005.12900}.

\bibitem[Li et~al., 2020b]{li2020sample}
Li, G., Wei, Y., Chi, Y., Gu, Y., and Chen, Y. (2020b).
\newblock Sample complexity of asynchronous {Q}-learning: Sharper analysis and
  variance reduction.
\newblock {\em arXiv preprint arXiv:2006.03041}.

\bibitem[Li et~al., 2021b]{li2021softmax}
Li, G., Wei, Y., Chi, Y., Gu, Y., and Chen, Y. (2021b).
\newblock Softmax policy gradient methods can take exponential time to
  converge.
\newblock {\em arXiv preprint arXiv:2102.11270}.

\bibitem[Liu et~al., 2019]{liu2019neural}
Liu, B., Cai, Q., Yang, Z., and Wang, Z. (2019).
\newblock Neural trust region/proximal policy optimization attains globally
  optimal policy.
\newblock In {\em Advances in Neural Information Processing Systems}, pages
  10565--10576.

\bibitem[Mei et~al., 2020]{mei2020global}
Mei, J., Xiao, C., Szepesvari, C., and Schuurmans, D. (2020).
\newblock On the global convergence rates of softmax policy gradient methods.
\newblock {\em arXiv preprint arXiv:2005.06392}.

\bibitem[Mnih et~al., 2016]{mnih2016asynchronous}
Mnih, V., Badia, A.~P., Mirza, M., Graves, A., Lillicrap, T., Harley, T.,
  Silver, D., and Kavukcuoglu, K. (2016).
\newblock Asynchronous methods for deep reinforcement learning.
\newblock In {\em International conference on machine learning}, pages
  1928--1937.

\bibitem[Mnih et~al., 2015]{mnih2015human}
Mnih, V., Kavukcuoglu, K., Silver, D., Rusu, A.~A., Veness, J., Bellemare,
  M.~G., Graves, A., Riedmiller, M., Fidjeland, A.~K., Ostrovski, G., et~al.
  (2015).
\newblock Human-level control through deep reinforcement learning.
\newblock {\em Nature}, 518(7540):529--533.

\bibitem[Mohammadi et~al., 2019]{mohammadi2019convergence}
Mohammadi, H., Zare, A., Soltanolkotabi, M., and Jovanovi{\'c}, M.~R. (2019).
\newblock Convergence and sample complexity of gradient methods for the
  model-free linear quadratic regulator problem.
\newblock {\em arXiv preprint arXiv:1912.11899}.

\bibitem[Nachum et~al., 2017]{nachum2017bridging}
Nachum, O., Norouzi, M., Xu, K., and Schuurmans, D. (2017).
\newblock Bridging the gap between value and policy based reinforcement
  learning.
\newblock In {\em Advances in Neural Information Processing Systems}, pages
  2775--2785.

\bibitem[Nemirovsky and Yudin, 1983]{nemirovsky1983problem}
Nemirovsky, A.~S. and Yudin, D.~B. (1983).
\newblock Problem complexity and method efficiency in optimization.

\bibitem[Nesterov, 2009]{nesterov2009primal}
Nesterov, Y. (2009).
\newblock Primal-dual subgradient methods for convex problems.
\newblock {\em Mathematical programming}, 120(1):221--259.

\bibitem[Neu et~al., 2017]{neu2017unified}
Neu, G., Jonsson, A., and G{\'o}mez, V. (2017).
\newblock A unified view of entropy-regularized {M}arkov decision processes.
\newblock {\em arXiv preprint arXiv:1705.07798}.

\bibitem[Peters et~al., 2010]{peters2010relative}
Peters, J., Mulling, K., and Altun, Y. (2010).
\newblock Relative entropy policy search.
\newblock In {\em Twenty-Fourth AAAI Conference on Artificial Intelligence}.

\bibitem[Peters and Schaal, 2008]{peters2008natural}
Peters, J. and Schaal, S. (2008).
\newblock Natural actor-critic.
\newblock {\em Neurocomputing}, 71(7-9):1180--1190.

\bibitem[Puterman, 2014]{puterman2014markov}
Puterman, M.~L. (2014).
\newblock {\em Markov decision processes: discrete stochastic dynamic
  programming}.
\newblock John Wiley \& Sons.

\bibitem[Schulman et~al., 2017a]{schulman2017equivalence}
Schulman, J., Chen, X., and Abbeel, P. (2017a).
\newblock Equivalence between policy gradients and soft {Q}-learning.
\newblock {\em arXiv preprint arXiv:1704.06440}.

\bibitem[Schulman et~al., 2015]{schulman2015trust}
Schulman, J., Levine, S., Abbeel, P., Jordan, M., and Moritz, P. (2015).
\newblock Trust region policy optimization.
\newblock In {\em International conference on machine learning}, pages
  1889--1897.

\bibitem[Schulman et~al., 2017b]{schulman2017proximal}
Schulman, J., Wolski, F., Dhariwal, P., Radford, A., and Klimov, O. (2017b).
\newblock Proximal policy optimization algorithms.
\newblock {\em arXiv preprint arXiv:1707.06347}.

\bibitem[Shani et~al., 2019]{shani2019adaptive}
Shani, L., Efroni, Y., and Mannor, S. (2019).
\newblock Adaptive trust region policy optimization: Global convergence and
  faster rates for regularized {MDP}s.
\newblock {\em arXiv preprint arXiv:1909.02769}.

\bibitem[Silver et~al., 2016]{silver2016mastering}
Silver, D., Huang, A., Maddison, C.~J., Guez, A., Sifre, L., Van Den~Driessche,
  G., Schrittwieser, J., Antonoglou, I., Panneershelvam, V., Lanctot, M.,
  et~al. (2016).
\newblock Mastering the game of {G}o with deep neural networks and tree search.
\newblock {\em nature}, 529(7587):484--489.

\bibitem[Sutton et~al., 2000]{sutton2000policy}
Sutton, R.~S., McAllester, D.~A., Singh, S.~P., and Mansour, Y. (2000).
\newblock Policy gradient methods for reinforcement learning with function
  approximation.
\newblock In {\em Advances in neural information processing systems}, pages
  1057--1063.

\bibitem[Tu and Recht, 2019]{tu2019gap}
Tu, S. and Recht, B. (2019).
\newblock The gap between model-based and model-free methods on the linear
  quadratic regulator: An asymptotic viewpoint.
\newblock In {\em Conference on Learning Theory}, pages 3036--3083.

\bibitem[Vieillard et~al., 2020]{vieillard2020leverage}
Vieillard, N., Kozuno, T., Scherrer, B., Pietquin, O., Munos, R., and Geist, M.
  (2020).
\newblock Leverage the average: an analysis of regularization in {RL}.
\newblock {\em arXiv preprint arXiv:2003.14089}.

\bibitem[Wang et~al., 2019]{wang2019neural}
Wang, L., Cai, Q., Yang, Z., and Wang, Z. (2019).
\newblock Neural policy gradient methods: Global optimality and rates of
  convergence.
\newblock {\em arXiv preprint arXiv:1909.01150}.

\bibitem[Williams, 1992]{williams1992simple}
Williams, R.~J. (1992).
\newblock Simple statistical gradient-following algorithms for connectionist
  reinforcement learning.
\newblock {\em Machine learning}, 8(3-4):229--256.

\bibitem[Williams and Peng, 1991]{williams1991function}
Williams, R.~J. and Peng, J. (1991).
\newblock Function optimization using connectionist reinforcement learning
  algorithms.
\newblock {\em Connection Science}, 3(3):241--268.

\bibitem[Wu et~al., 2020]{wu2020finite}
Wu, Y., Zhang, W., Xu, P., and Gu, Q. (2020).
\newblock A finite time analysis of two time-scale actor critic methods.

\bibitem[Xiao et~al., 2019]{xiao2019maximum}
Xiao, C., Huang, R., Mei, J., Schuurmans, D., and M{\"u}ller, M. (2019).
\newblock Maximum entropy {M}onte-{C}arlo planning.
\newblock In {\em Advances in Neural Information Processing Systems}, pages
  9520--9528.

\bibitem[Xu et~al., 2020]{xu2020non}
Xu, T., Wang, Z., and Liang, Y. (2020).
\newblock Non-asymptotic convergence analysis of two time-scale (natural)
  actor-critic algorithms.
\newblock {\em arXiv preprint arXiv:2005.03557}.

\bibitem[Zhang et~al., 2019a]{zhang2019policy}
Zhang, K., Hu, B., and Basar, T. (2019a).
\newblock Policy optimization for $\mathcal{H}_2$ linear control with
  $\mathcal{H}_\infty$ robustness guarantee: Implicit regularization and global
  convergence.
\newblock {\em arXiv preprint arXiv:1910.09496}.

\bibitem[Zhang et~al., 2019b]{zhang2019global}
Zhang, K., Koppel, A., Zhu, H., and Ba{\c{s}}ar, T. (2019b).
\newblock Global convergence of policy gradient methods to (almost) locally
  optimal policies.
\newblock {\em arXiv preprint arXiv:1906.08383}.

\end{thebibliography}
		
	\appendix

\section{Preliminaries}
\label{sec:preliminary}

\subsection{Derivation of entropy-regularized NPG methods} \label{sec:derivations_npg}

This subsection establishes the equivalence between the update rules \eqref{eq:NPG-original} and \eqref{eq:entropy_npg}. 
Such derivations are inherently similar to the ones for the NPG update rule (without entropy regularization) (see, e.g., \citet{agarwal2019reinforcement}); we provide the proof here for pedagogical reasons. 

First of all, let us follow the convention to introduce the advantage function $A_\tau^{\pi}:  \cS \times \cA \rightarrow \real$ of a policy $\pi$ w.r.t.~the entropy-regularized value function: 
 \begin{equation}
	\forall (s,a)\in \cS \times \cA: \qquad \soft{\A}^{\plcy}(\s,\ac) := \soft{\Q}^{\plcy}(\s, \ac)-\tau\log \plcy(\ac|\s) - \soft{\V}^{\plcy}(\s) \label{eqn:asofttau}
\end{equation} 
with $\soft{\Q}^{\plcy}$ defined in \eqref{eq:defn-regularized-Q}, 
which reflects the gain one can harvest by executing action $a$  instead of following the policy $\pi$ in state $s$. This advantage function plays a crucial role in the calculation of policy gradients, due to the following fundamental relation (see Appendix~\ref{sec:proof-lemma:derivative-grad} for the proof):
\begin{lemma}
\label{lemma:derivative-grad}
	Under softmax parameterization \eqref{eq:definition_softmax}, the gradient of the regularized value function satisfies
\begin{subequations}
\begin{align}
	\frac{\partial \soft{\V}^{\plcy_\prm}(\rho)}{\partial \prm(\s, \ac)} &= \frac{1}{1-\disct} d_\rho^{\plcy_\prm}(\s)\cdot \plcy_\prm (\ac|\s) \cdot \soft{\A}^{\plcy_\prm}(\s,\ac) ;\label{eqn:derivative} \\
	\left[ \big( \mathcal{F}_\rho^{\prm} \big)^\dagger \nabla_{\theta} \soft{\V}^{\plcy_\prm}(\rho)\right] (\s,\ac) &= \frac{1}{1-\gamma}\soft{\A}^{\plcy_\prm}(\s,\ac) + c(\s)
		\label{eq:search-direction-NPG}
\end{align}
\end{subequations}
	for any $(s,a)\in \cS \times \cA$, where $c(s):=\sum_{a}\pi_{\theta}(a|s)w_{s,a}$ is some function depending only on $s$. 
\end{lemma}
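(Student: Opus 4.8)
The plan is to establish the two identities in order: first derive the raw gradient \eqref{eqn:derivative} via a regularized version of the policy gradient theorem, and then bootstrap it to the natural gradient \eqref{eq:search-direction-NPG} by exploiting the explicit block structure of the Fisher information matrix under softmax parameterization. The starting point is the elementary softmax derivative identity
\[
\frac{\partial \log \plcy_\prm(\ac|\s)}{\partial\prm(\s',\ac')} = \ind\{\s=\s'\}\prn{\ind\{\ac=\ac'\} - \plcy_\prm(\ac'|\s)},
\]
equivalently $\partial\plcy_\prm(\ac|\s)/\partial\prm(\s',\ac') = \ind\{\s=\s'\}\plcy_\prm(\ac|\s)(\ind\{\ac=\ac'\}-\plcy_\prm(\ac'|\s))$. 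The crucial structural consequence is that $\prm(\s',\ac')$ influences only the action distribution at state $\s'$, so every derivative is supported on a single state block; this is precisely what will render the Fisher matrix block diagonal.

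For part (a), I would differentiate the Bellman consistency relation
\[
\soft{\V}^{\plcy_\prm}(\s) = \sum_\ac \plcy_\prm(\ac|\s)\brk{-\tau\log\plcy_\prm(\ac|\s) + r(\s,\ac) + \disct\sum_{\s'}\sP(\s'|\s,\ac)\soft{\V}^{\plcy_\prm}(\s')}
\]
with respect to $\prm(\s',\ac')$. The key simplification is that the explicit entropy term contributes $-\tau\sum_\ac \plcy_\prm(\ac|\s)\,\partial\log\plcy_\prm(\ac|\s)/\partial\prm(\s',\ac') = -\tau\,\partial_{\prm}\sum_\ac\plcy_\prm(\ac|\s) = 0$, since $\sum_\ac\plcy_\prm(\ac|\s)\equiv 1$; this cancellation is what lets the regularized problem inherit the unregularized recursion with the soft advantage $\soft{\A}^{\plcy_\prm}$ in place of the usual advantage. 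After rewriting $-\tau\log\plcy_\prm(\ac|\s)+\soft{\Q}^{\plcy_\prm}(\s,\ac) = \soft{\A}^{\plcy_\prm}(\s,\ac)+\soft{\V}^{\plcy_\prm}(\s)$ and discarding the $\soft{\V}^{\plcy_\prm}(\s)$ piece (annihilated by $\sum_\ac\partial\plcy_\prm/\partial\prm = 0$), I obtain a recursion of the form $g = h + \disct P_\plcy g$ for $g(\s) := \partial\soft{\V}^{\plcy_\prm}(\s)/\partial\prm(\s',\ac')$, where $P_\plcy$ is the state-to-state transition matrix induced by $\plcy_\prm$. Solving and averaging over $\s_0\sim\rho$ through the identity $d_\rho^{\plcy_\prm} = (1-\disct)\rho^\top(I-\disct P_\plcy)^{-1}$ produces the $\frac{1}{1-\disct} d_\rho^{\plcy_\prm}$ prefactor; the final step invokes $\sum_\ac\plcy_\prm(\ac|\s)\soft{\A}^{\plcy_\prm}(\s,\ac)=0$ (immediate from \eqref{eq:regularized-V-to-Q} and \eqref{eqn:asofttau}) to collapse the inner action sum to the single surviving term $\plcy_\prm(\ac'|\s')\soft{\A}^{\plcy_\prm}(\s',\ac')$, yielding \eqref{eqn:derivative}.

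For part (b), the support structure above shows that $\mathcal{F}_\rho^\prm$ is block diagonal across states, with the $\s$-block equal to $d_\rho^{\plcy_\prm}(\s)\prn{\mathrm{diag}(\plcy_\prm(\cdot|\s)) - \plcy_\prm(\cdot|\s)\plcy_\prm(\cdot|\s)^\top}$. I would then verify directly that the candidate vector $u(\s,\ac) := \frac{1}{1-\disct}\soft{\A}^{\plcy_\prm}(\s,\ac) + c(\s)$, for an arbitrary state-dependent $c$, solves $\mathcal{F}_\rho^\prm u = \nabla_\prm\soft{\V}^{\plcy_\prm}(\rho)$: applying the $\s$-block to $u$ produces $d_\rho^{\plcy_\prm}(\s)\plcy_\prm(\ac'|\s)\brk{u(\s,\ac') - \sum_\ac\plcy_\prm(\ac|\s)u(\s,\ac)}$, inside which the constant $c(\s)$ cancels and $\sum_\ac\plcy_\prm(\ac|\s)\soft{\A}^{\plcy_\prm}(\s,\ac)=0$ leaves exactly $\frac{1}{1-\disct}d_\rho^{\plcy_\prm}(\s)\plcy_\prm(\ac'|\s)\soft{\A}^{\plcy_\prm}(\s,\ac')$, matching \eqref{eqn:derivative}. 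Consequently $\nabla_\prm\soft{\V}^{\plcy_\prm}(\rho)$ lies in the range of $\mathcal{F}_\rho^\prm$, so $\prn{\mathcal{F}_\rho^\prm}^\dagger\nabla_\prm\soft{\V}^{\plcy_\prm}(\rho)$ is itself a solution of the same linear system and therefore differs from $u$ only by an element of $\ker\mathcal{F}_\rho^\prm$. Since each block's kernel is spanned by the all-ones (constant-in-action) vector, this discrepancy is precisely a state-dependent function, absorbable into $c(\s)$, which establishes \eqref{eq:search-direction-NPG}.

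The main obstacle I anticipate is the pseudoinverse bookkeeping in part (b): one must argue carefully that $\nabla_\prm\soft{\V}^{\plcy_\prm}(\rho)\in\mathrm{range}(\mathcal{F}_\rho^\prm)$ so that the system is consistent and the Moore--Penrose solution is a genuine solution, and correctly identify the per-state kernel directions that make $c(\s)$ free. The policy-gradient recursion in part (a) is otherwise standard once the entropy-gradient cancellation is spotted, so the regularization introduces little additional difficulty beyond carrying $\soft{\A}^{\plcy_\prm}$ through the argument.
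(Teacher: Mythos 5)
Your proposal is correct and follows essentially the same route as the paper: part (a) differentiates the Bellman consistency relation, cancels the entropy-gradient term via $\sum_a \pi_\theta \nabla_\theta \log \pi_\theta = 0$, and unrolls the resulting recursion into the discounted visitation distribution, while part (b) verifies that $\frac{1}{1-\gamma}A_\tau^{\pi_\theta}(s,a)+c(s)$ achieves zero residual for the system $\mathcal{F}_\rho^\theta w = \nabla_\theta V_\tau^{\pi_\theta}(\rho)$ (the paper phrases this as minimizing the least-squares objective defining the pseudoinverse, you as consistency of the normal equations plus a kernel argument — the same verification). Your explicit identification of the per-state block structure of $\mathcal{F}_\rho^\theta$ and of its kernel as the action-constant directions is a slightly more careful account of why the answer is only determined up to an additive $c(s)$, but it is not a different proof.
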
 


	It is worth highlighting that the search direction of NPG, given in \eqref{eq:search-direction-NPG}, is invariant to the choice of $\rho$. With the above calculations in place, it is seen that for any $s\in \cS$, the regularized NPG update rule \eqref{eq:NPG-original} results in a policy update as follows
	\begin{align*}
		\pi^{(t+1)}(a|s) 
		& \overset{\mathrm{(i)}}{\propto}\, \exp\left(\theta^{(t+1)}(s,a)\right) \overset{\mathrm{(ii)}}{=} \exp\left(\theta^{(t)}(s,a)
		  + \eta\left[\big(\mathcal{F}_{\rho}^{\prm^{(t)}}\big)^{\dagger}\nabla_{\theta}\soft{\V}^{(t)}(\rho)\right](\s,\ac)\right)\\
		& \overset{\mathrm{(iii)}}{\propto} \exp\left(\theta^{(t)}(s,a)+\frac{\eta}{1-\gamma}A_{\tau}^{(t)}(\s,\ac)\right)\\
 		& \overset{\mathrm{(iv)}}{\propto} \pi^{(t)}(a|s)\exp\left(\frac{\eta}{1-\gamma}Q_{\tau}^{(t)}(\s,\ac)-\frac{\eta\tau}{1-\gamma}\log\pi^{(t)}(a|s)\right)\\
 		& =\left(\pi^{(t)}(a|s)\right)^{1-\frac{\eta\tau}{1-\gamma}}\exp\left(\frac{\eta}{1-\gamma}Q_{\tau}^{(t)}(\s,\ac)\right) .
	\end{align*}
	where we use $A_{\tau}^{(t)}$ to abbreviate $A_{\tau}^{\pi^{(t)}}$. 
	Here, (i) uses the definition of the softmax policy, (ii) comes from the update rule \eqref{eq:NPG-original}, (iii) is a consequence of \eqref{eq:search-direction-NPG} (since $c(\cdot)$ does not depend on $a$), whereas (iv) results from the definition \eqref{eqn:asofttau} and the fact that $\soft{\V}^{\plcy}(\cdot)$ is not dependent on $a$. This validates the equivalence between \eqref{eq:NPG-original} and \eqref{eq:entropy_npg}.

\subsection{Basic facts about the function $\log ( \norm{\exp({\theta})}_1 )$} \label{sec:properety_logexp}

In the current paper, we often encounter the function $\log \big( \norm{\exp({\theta})}_1 \big) :=  \log \big( \sum\nolimits_{1\leq a\leq |\cA|} \exp(\theta_a) \big)$
for any vector $\theta = [\theta_a]_{1\leq a\leq |\cA|} \in \mathbb{R}^{|\cA|}$. 
To facilitate analysis, we single out several basic properties concerning this function, which will be used multiple times when establishing our main results. For notational convenience, we denote by $\pi_{\theta}\in \mathbb{R}^{|\cA|}$ the softmax transform of $\theta$ such that
\begin{align}
	\pi_\theta (a) = \frac{\exp(\theta_a)}{\sum_{1\leq j\leq |\cA|} \exp(\theta_j)}, \qquad 1\leq a\leq |\cA|. 
	\label{eq:defn-pi-theta-a}
\end{align}
By straightforward calculations, the gradient of the function $\log \big( \norm{\exp({\theta})}_1 \big)$ is given by
		\begin{align} 
			\nabla_{\theta} \log \big( \norm{\exp({\theta})}_1 \big)  &=  \frac{1}{\norm{\exp({\theta})}_1} \exp({\theta}) = \plcy_\theta; \label{eq:gradient_logexp} .
		\end{align}		 

		
\paragraph{Difference of log policies.}
		In the analysis, we often need to control the difference of two policies, towards which the following bounds prove useful. 
		To begin with, the mean value theorem reveals a Lipschitz continuity property (w.r.t.~the $\ell_{\infty}$ norm): for any $\theta_1,\theta_2\in \mathbb{R}^{|\cA|}$, 	
		\begin{align}
			\abs{\log \big( \|\exp(\theta_1)\|_1 \big) - \log \big( \|\exp(\theta_2)\|_1\big) } &= \abs{\left\langle \theta_1 - \theta_2, \nabla_{\theta} \log \big( \norm{\exp(\theta)}_1  \big) |_{\theta=\theta_c} \right \rangle} \nonumber \\
			& \le \norm{\theta_1 - \theta_2}_\infty\| \nabla_{\theta} \log \big( \norm{\exp(\theta)}_1 \big) |_{\theta=\theta_c} \|_1 =\norm{\theta_1 - \theta_2}_\infty ,\label{eq:bound_logexp_diff}
		\end{align}
		where $\theta_c$ is a certain convex combination of $\theta_1$ and $\theta_2$, and the second line relies on \eqref{eq:gradient_logexp}. In addition, for any two vectors $\plcy_{\prm_1}$ and $\plcy_{\prm_2}$ defined w.r.t.~$\prm_1,\prm_2\in \mathbb{R}^{|\cA|}$ (see~\eqref{eq:defn-pi-theta-a}), one has
		\begin{equation}
			\norm{\log \plcy_{\prm_1} - \log \plcy_{\prm_2}}_\infty \leq 2\norm{\prm_1 - \prm_2}_\infty,
			\label{eq:log_pi_diff}
		\end{equation}
		where $\log(\cdot)$ denotes entrywise operation.   To justify \eqref{eq:log_pi_diff}, we observe from the definition \eqref{eq:defn-pi-theta-a} that
		\begin{align*}
			\norm{\log \plcy_{\prm_1} - \log \plcy_{\prm_2}}_\infty 
			&  \leq   \norm{\prm_1 - \prm_2 }_\infty + \Big| \log \big( \|\exp(\theta_1)\|_1 \big) -\log \big( \|\exp(\theta_2)\|_1 \big) \Big| 
			 \le 2  \norm{\prm_1 - \prm_2 }_\infty ,   
		\end{align*}
where  the last inequality is a consequence of \eqref{eq:bound_logexp_diff}. 

	\section{Proof for the bandit case (Proposition~\ref{thm:bandit})}
\label{Sec:Pfbandit}

We start by defining an auxiliary sequence $\xi^{(t)}\in \real^{|\asp|}$ $(t\geq 0)$ recursively as follows
\begin{align*}
\xi^{(0)} &:= \norm{\exp(r /\tau)}_1 \cdot \plcy^{(0)}, \\
	\xi^{(t+1)}(\ac) &:= \big( \xi^{(t)}(\ac) \big) ^{1-\tau\eta}\exp\big( \eta r(\ac) \big),\qquad a\in \cA.
\end{align*}
When combined with \eqref{eq:bandit_update}, it is easily seen that $\plcy^{(t)}(\cdot) \propto \xi^{(t)}(\cdot)$ and, as a result, 
$\plcy^{(t)} = \xi^{(t)}/ \big\| \xi^{(t)} \big\|_1.$

By construction, the auxiliary sequence satisfies the following property
\begin{align*}
	\log \big( \xi^{(t+1)}(\ac) \big) - r(\ac)/\tau &= \prn{1-\tau \eta} \log \big( \xi^{(t)}(\ac) \big) + \eta r(\ac)- r(\ac)/\tau \\
	& =\prn{1-\tau \eta}\prn{\log \big( \xi^{(t)}(\ac) \big) - r(\ac)/\tau},
\end{align*}
thus indicating that
\begin{equation}
	\label{eq:bandit_xi_contraction}
	\norm{\log  \xi^{(t)}  -  r/\tau }_\infty \le (1-\tau\eta)^t \norm{\log   \xi^{(0)}   -  r/\tau}_\infty.
\end{equation}
This taken together with the optimal policy $\plcy^{\star}_\tau =\mathsf{softmax} (r/\tau) \propto \exp\prn{r/\tau}$ leads to
\begin{align*}
\norm{\log \plcy^{(t)} - \log \plcy^\star_\tau}_\infty \le 2\norm{\log\xi^{(t)}- {r}/\tau}_{\infty}
& \le 2(1-\tau\eta)^t \norm{\log \xi^{(0)} -  {r}/\tau}_\infty\\
	&= 2(1-\tau\eta)^t \norm{\log \plcy^{(0)} + \big( \log \norm{\exp( r/\tau)}_1 \big) \cdot \mathbf{1} -  {r}/\tau}_\infty\\
&= 2(1-\tau\eta)^t \norm{\log \plcy^{(0)} - \log \plcy^\star_\tau}_\infty,
\end{align*}
where the first line follows from the inequality~\eqref{eq:log_pi_diff}, the second line follows from the expression~\eqref{eq:bandit_xi_contraction}, whereas the last line follows from the form of $\plcy^{\star}_\tau$.
We have thus completed the proof of Proposition~\ref{thm:bandit}.

\section{Proof for key lemmas}

\subsection{Proof of Lemma~\ref{lemma:dalaopo}}
\label{sec:pflemdalaopo}

To begin with, the regularized NPG update rule (see \eqref{eq:entropy_npg} in Algorithm~\ref{alg:entropy-npg}) indicates that 
\begin{align}
\log \plcy^{(t+1)}(\ac|\s) = \prn{1-\frac{\eta\tau}{1-\disct}}\log \plcy^{(t)}(\ac|\s) + \frac{\eta}{1-\disct}\soft{\Q}^{(t)}(\s, \ac) - \log Z^{(t)}(\s),
\label{eq:policy-tplus1-t}
\end{align}
where $Z^{(t)}$ is some quantity depending only on the state $\s $ (but not the action $a$). Rearranging terms gives
\begin{equation}
\label{eq:value_decomp_aid}
-\tau \log\plcy^{(t)}(\ac|\s) + \soft{\Q}^{(t)}(\s, \ac) = \frac{1-\disct}{\eta}\prn{\log \plcy^{(t+1)}(\ac|\s) - \log \plcy^{(t)}(\ac|\s)} + \frac{1-\disct}{\eta} \log Z^{(t)}(\s).
\end{equation}
This in turn allows us to express $\soft{\V}^{(t)}(\s _0)$ for any $s_0\in \cS$ as follows
\begin{align}
\soft{\V}^{(t)}(\s _0) 
& = \exlim{{\ac_0 \sim \plcy^{(t)}(\cdot | \s _0)}}{-\tau \log \plcy^{(t)}(\ac_0|\s _0) + \soft{Q}^{(t)}(\s _0, \ac_0)} \notag\\
& = \exlim{{\ac_0 \sim \plcy^{(t)}(\cdot | \s _0)}}{ \frac{1-\disct}{\eta} \log Z^{(t)}(\s_0)} + \exlim{{\ac_0 \sim \plcy^{(t)}(\cdot | \s _0)}}{ \frac{1-\disct}{\eta}\prn{\log \plcy^{(t+1)}(\ac_0|\s_0) - \log \plcy^{(t)}(\ac_0|\s_0)} } \notag\\	
&=  \frac{1-\disct}{\eta} \log Z^{(t)}(\s_0) - \frac{1-\gamma}{\eta} \KL\prn{\plcy^{(t)}(\cdot |\s _0) \,\big\|\, \plcy^{(t+1)}(\cdot | \s _0)} \notag\\
&= \exlim{{\ac_0 \sim \plcy^{(t+1)}(\cdot | \s _0)}}{ \frac{1-\disct}{\eta} \log Z^{(t)}(\s_0)} - \frac{1-\gamma}{\eta} \KL\prn{\plcy^{(t)}(\cdot |\s _0) \,\big\|\, \plcy^{(t+1)}(\cdot | \s _0)}  , \label{eq:proof_another_fancy_middle_step}
\end{align}
where the first identity makes use of the definitions \eqref{defn:V-tau} and \eqref{eq:defn-regularized-Q}, the second line follows from \eqref{eq:value_decomp_aid}, the third line relies on the definition of the KL divergence, and the last line follows since $Z^{(t)}(s)$ does not depend on $a$. Invoking  \eqref{eq:value_decomp_aid} again to rewrite $\log Z^{(t)}(\s_0)$ appearing in the first term of \eqref{eq:proof_another_fancy_middle_step}, we reach 
\begin{align}
\soft{\V}^{(t)}(\s _0) &= \exlim{{\ac_0 \sim \plcy^{(t+1)}(\cdot | \s _0)}}{-\tau \log \plcy^{(t+1)}(\ac_0|\s _0) + \soft{Q}^{(t)}(\s _0, \ac_0) + \prn{\tau - \frac{1-\disct}{\eta}}\prn{\log \plcy^{(t+1)}(\ac_0|\s_0) - \log \plcy^{(t)}(\ac_0|\s_0)}}\notag\\
&\qquad \qquad - \frac{1-\gamma}{\eta} \KL\prn{\plcy^{(t)}(\cdot |\s _0) \,\big\|\, \plcy^{(t+1)}(\cdot | \s _0)}\notag\\
=& \exlim{{\ac_0 \sim \plcy^{(t+1)}(\cdot | \s _0)}}{-\tau \log \plcy^{(t+1)}(\ac_0|\s _0) + \soft{Q}^{(t)}(\s _0, \ac_0)} + \prn{\tau - \frac{1-\disct}{\eta}}\KL\prn{\plcy^{(t+1)}(\cdot |\s _0)  \,\big\|\,  \plcy^{(t)}(\cdot | \s _0)}\notag\\
& \qquad \qquad - \frac{1-\gamma}{\eta} \KL\prn{\plcy^{(t)}(\cdot |\s _0) \,\big\|\, \plcy^{(t+1)}(\cdot | \s _0)}\notag\\
=& \exlim{\substack{\ac_0 \sim \plcy^{(t+1)}(\cdot | \s _0),\\\s _1\sim \prob(\cdot | \s _0, \ac_0)}}{-\tau \log \plcy^{(t+1)}(\ac_0|\s _0) + r(\s _0, \ac_0) + \disct\soft{\V}^{(t)}(\s _1)} \notag\\
& \qquad - \prn{\frac{1-\disct}{\eta}-\tau}\KL\prn{\plcy^{(t+1)}(\cdot |\s _0)  \,\big\|\,  \plcy^{(t)}(\cdot | \s _0)}
- \frac{1-\gamma}{\eta} \KL\prn{\plcy^{(t)}(\cdot |\s _0) \,\big\|\, \plcy^{(t+1)}(\cdot | \s _0)} ,
\label{eq:expansion-Vt-s0}
\end{align}
where the second line uses the definition of the KL divergence, and the third line expands $\soft{Q}^{(t)}$ using the definition \eqref{eq:defn-regularized-Q}.

To finish up, applying the above relation \eqref{eq:expansion-Vt-s0} recursively to expand $V_\tau^{(t)}(s_i)$ ($i\geq 1$), we arrive at
\begin{align}
\soft{\V}^{(t)}(\s _0)	 & = \mathop\mathbb{E}\limits_{\substack{\ac_i \sim \plcy^{(t+1)}(\cdot | \s _i),\\\s _{i+1}\sim \prob(\cdot | \s _i, \ac_i), \forall i\geq 0}} \Bigg[   \sum_{i=0}^{\infty} \gamma^i \left\{ r(s_i, a_i) - {\tau \log \plcy^{(t+1)}(\ac_i|\s _i) } \right\} \notag\\
\qquad & \qquad \qquad  - \sum_{i=0}^{\infty}  \gamma^i  \left\{ \prn{\frac{1-\disct}{\eta}-\tau} \KL\prn{\plcy^{(t+1)}(\cdot |\s _i) \,\big\|\, \plcy^{(t)}(\cdot | \s _i)} + \frac{1-\gamma}{\eta} \KL\prn{\plcy^{(t)}(\cdot |\s _i) \,\big\|\, \plcy^{(t+1)}(\cdot | \s _i)}  \right\} \Bigg] \notag \\	
& = \soft{\V}^{(t+1)}(\s _0) -  \mathop\mathbb{E}\limits_{\s \sim d_{\s _0}^{(t+1)}}\left[ \prn{\frac{1}{\eta}-\frac{\tau}{1-\disct}} \KL(\plcy^{(t+1)}(\cdot|\s) \,\big\|\, \plcy^{(t)}(\cdot|\s)) + \frac{1}{\eta} \KL\prn{\plcy^{(t)}(\cdot |\s) \,\big\|\, \plcy^{(t+1)}(\cdot | \s)} \right],	
\label{eqn:useful}
\end{align}	
where the second line follows since the regularized value function $V^{(t+1)}_{\tau}$ can be viewed as the value function of $\pi^{(t+1)}$ with adjusted rewards $r_{\tau}^{(t+1)}(s,a) := r(s,a) - \tau \log \pi^{(t+1)}(a| s)$. 
Averaging the initial state $s_0$ over the distribution $\rho$ concludes the proof.

\subsection{Proof of Lemma~\ref{lemma:xinhuan}}
\label{sec:pflemxinhuan}
In the sequel, we prove each claim in Lemma~\ref{lemma:xinhuan} in order.
  
	 \paragraph{Proof  of Eqn.~\eqref{defn:soft_bellman_cf}.}
	 Jensen's inequality tells us that: for any $s\in \cS$ one has
	 \begin{align}
	 \exlimBig{\ac \sim \plcy(\cdot|\s)}{{{\Q}(\s, \ac) - \tau \log \plcy(\ac|\s)}}
	 &= \tau\sum_{a}\pi(a|s)\log \left( \frac{\exp\big(Q(s,a)/\tau\big)}{\pi(a|s)} \right) \notag\\
	 &\leq \tau\log\left(\sum_{a}\pi(a|s)\frac{\exp\big(Q(s,a)/\tau\big)}{\pi(a|s)}\right) \notag\\
	 &= \tau\log \left( \sum_{\ac}\exp\prnbig{ {Q}(\s, \ac)/\tau} \right) = \tau\log \big( \normbig{\exp\prnbig{ {Q}(\s, \cdot)/\tau}}_{1} \big),
	 \end{align}
	 where in the second line,  equality is attained if $\plcy(\cdot|\s) \propto \exp(\Q(\s, \cdot)/\tau)$. This immediately gives rise to
	  \begin{align*}
		  \soft{\mathcal{T}}(\Q)(\s, \ac) & = r(\s, \ac) + \disct \exlim{\s' \sim \prob(\cdot|\s, \ac)}{\max_{\plcy(\cdot|\s') \in \Delta(\cA)}\exlimBig{\ac' \sim \plcy(\cdot |\s')}{Q(\s', \ac') - \tau \log \plcy(\ac'|\s')}}\\
	 &= r(\s, \ac) + \disct \exlimBig{\s' \sim \prob(\cdot|\s, \ac)}{\tau \log\prnbig{ \normbig{\exp\prnbig{\Q(\s', \cdot)/\tau}}_1}}.
	 \end{align*}
	 
	 \paragraph{Proof of Eqn.~\eqref{defn:soft_bellman_fixpoint}.}
	 Recall the characterization of  $\soft{\plcy}^\star$ and $\soft{\V}^\star$ established in \cite{nachum2017bridging}:
	 \begin{subequations}
	 \begin{align}
	 	\soft{\plcy}^\star(\ac|\s) &= \exp\prn{\frac{\soft{\Q}^\star(\s, \ac) - \soft{\V}^\star(\s)}{\tau}},\\
		 \soft{\V}^\star (\s) &= \tau \log \prnbig{ \normbig{\exp\prnbig{{\soft{\Q}^\star(\s, \cdot)}/{\tau}}}_1 }.
	 \label{eq:opt_consist}
	 \end{align}
	 \end{subequations}
	Substitution into the expression \eqref{defn:soft_bellman_cf} tells us that for any $(s,a)\in \cS\times \cA$, 
   \begin{align*}
	   \soft{\mathcal{T}} \prnbig{ \soft{\Q}^\star } (\s, \ac) &= r(\s, \ac) + \disct \exlimBig{\s' \sim \prob(\cdot|\s, \ac)}{\tau \log\prnbig{ \normbig{\exp\prnbig{\soft{\Q}^\star(\s', \cdot)/\tau}}_1}}\\
	   &=  r(\s, \ac) + \disct \exlimbig{\s' \sim \prob(\cdot|\s, \ac)}{\soft{\V}^\star (\s')}\\
	   &= \soft{\Q}^\star(\s, \ac),
	\end{align*}	 
where the second line results from \eqref{eq:opt_consist}, and the last line follows from the definition of the soft Q-function.

	\paragraph{Proof of Eqn.~\eqref{eq:soft_bellman_contraction}.} Invoking again the expression \eqref{defn:soft_bellman_cf}, we can demonstrate that for any $Q_1$ and $Q_2$,
		\begin{align*}
			\Big| \soft{\mathcal{T}} (\Q_1) &(\s, \ac) - \soft{\mathcal{T}} (\Q_2) (\s, \ac) \Big|   \\
			&=  \abs{\disct\exlimBig{\s' \sim \prob(\cdot|\s, \ac)}{\tau \log\prnbig{ \normbig{\exp\prn{\Q_1(\s', \cdot)/\tau}}_1}} - \disct \exlimBig{\s' \sim \prob(\cdot|\s, \ac)}{\tau \log\prnbig{ \normbig{\exp\prn{\Q_2(\s', \cdot)/\tau}}_1}}} \\
			&= \disct\tau  \abs{\exlimBig{\s' \sim \prob(\cdot|\s, \ac)}{\log\prnbig{ \normbig{\exp\prn{\Q_1(\s', \cdot)/\tau}}_1} - \log\prnbig{ \normbig{\exp\prn{\Q_2(\s', \cdot)/\tau}}_1}}} \\
			&\le \disct\tau\norm{\Q_1/\tau - \Q_2/\tau}_\infty \\
			&= \disct \norm{\Q_1 - \Q_2}_\infty 
		\end{align*}
holds for all $(s,a)\in \cS\times \cA$, where the inequality follows from the Lipschitz property \eqref{eq:bound_logexp_diff}.

\subsection{Proof of Lemma~\ref{lemma:linsys_1}}
	\label{sec:linsys_1_pf}

For any state-action pair $(s,a) \in \mathcal{S}\times \mathcal{A}$, we observe that
	\begin{align}
	\soft{Q}^\star&(s,a)  - \soft{Q}^{(t+1)}(s,a) \notag \\
	&= r(\s, \ac) + \disct \exlimbig{\s' \sim \prob(\cdot|\s, \ac)}{\soft{\V}^\star(\s')} - \prnBig{r(\s, \ac) + \disct \exlimbig{\s' \sim \prob(\cdot|\s, \ac)}{\soft{\V}^{(t+1)}(\s')}}\notag \\
	& = \disct \exlim{\s' \sim \prob(\cdot|\s, \ac)}{\tau \log \prn{ \norm{\exp\prn{\frac{\soft{Q}^{\star}(\s', \cdot)}{\tau}}}_1}} - \disct \exlimBig{\substack{\s' \sim \prob(\cdot|\s, \ac),\\\ac' \sim \plcy^{(t+1)}(\cdot|\s')}}{ \soft{\Q}^{(t+1)}(\s', \ac') -\tau \log\plcy^{(t+1)}(\ac'|\s')}, \label{eq:linsys_1_intermediate}
	\end{align}
	where the first step invokes the definition \eqref{eq:defn-regularized-Q} of $\soft{\Q}$, and the second step is due to the expression \eqref{eq:opt_consist} of $\soft{\V}^\star$. To continue, recall that $\pi^{(t)}$ is related to $\xi^{(t)}$ as
\begin{align}
	\forall s\in \cS: \qquad \pi^{(t)}(\cdot | s) =  \frac{1}{\|\xi^{(t)} (s,\cdot)\|_1} \xi^{(t)} (s,\cdot)
\end{align}
which can be seen by comparing \eqref{defn:xi-t-sequence} with \eqref{eq:entropy_npg}. This in turn leads to
	\begin{align}
		\log { \plcy^{(t+1)}(a|s) } &= \log { \xi^{(t+1)}(\s, \ac) } - \log  \prnbig{ \normbig{\xi^{(t+1)}(\s, \cdot)}_1 } \notag\\
		&= \alpha  \log { \xi^{(t)}(\s, \ac) } + (1-\alpha)\frac{\soft{\Q}^{(t)}(\s,\ac)}{\tau} - \log \prnbig{ \normbig{\xi^{(t+1)}(\s, \cdot)}_1 },
	\label{eq:log_pi_2_log_xi}
	\end{align}
where the second line comes from \eqref{defn:xi-t-sequence-2}. By plugging  \eqref{eq:log_pi_2_log_xi} into \eqref{eq:linsys_1_intermediate} we obtain
	\begin{align}
\soft{Q}^\star(s,a) - \soft{Q}^{(t+1)}(s,a) 	
		& = \disct \exlim{\s' \sim \prob(\cdot|\s, \ac)}{\tau \log{ \prnbig{ \normbig{\exp\prn{{\soft{Q}^{\star}(\s', \cdot)}/{\tau}}}_1} }
		- \tau \log \prnbig{ \normbig{\xi^{(t+1)}(s',\cdot)}_1} } \notag \\
		&- \disct \mathop\mathbb{E}\limits_{\substack{\s' \sim \prob(\cdot|\s, \ac),\\\ac' \sim \plcy^{(t+1)}(\cdot|\s')}} \Bigg[ \soft{\Q}^{(t+1)}(\s', \ac') 
		- \tau  \underbrace{ \prnBigg{ \alpha  \log \xi^{(t)}(\s', \ac') + (1-\alpha)\frac{\soft{\Q}^{(t)}(\s',\ac')}{\tau} } }_{=\,\log \xi^{(t+1)}(s',a')} \Bigg]
		\label{eq:linsys_1_penguin}
\end{align}
for any $(s,a)\in \cS\times \cA$. 
In the sequel, we bound each term on the right-hand side of \eqref{eq:linsys_1_penguin} separately. 
\begin{itemize}
\item In view of the property \eqref{eq:bound_logexp_diff}, the first term on the right-hand side of \eqref{eq:linsys_1_penguin} can be bounded by 
\[  
			\tau \log{ \prnbig{ \normbig{\exp\prnbig{{\soft{Q}^{\star}(\s', \cdot)}/{\tau}}}_1} } -\tau \log \prnbig{ \normbig{\xi^{(t+1)}(s',\cdot)}_1} 
		\leq \normbig{\soft{Q}^\star - \tau \log \xi^{(t+1)}}_\infty .
\]
\item Regarding the second term, the monotonicity \eqref{eq:Q_mono} of the soft Q-function allows us to derive
\begin{align}
	&
	\soft{\Q}^{(t+1)}(\s, \ac) -\tau \prn{\alpha  \log { \xi^{(t)}(\s, \ac) } + (1-\alpha){\soft{\Q}^{(t)}(\s,\ac)}/{\tau} }
	\notag\\
	&\qquad \geq  
	\soft{\Q}^{(t)}(\s, \ac) -\tau \prn{\alpha  \log \xi^{(t)}(\s, \ac) + (1-\alpha){\soft{\Q}^{(t)}(\s,\ac)}/{\tau} }
	\notag\\	
	&\qquad =\, \alpha 
	\prn{ \soft{\Q}^{(t)}(\s, \ac) -\tau   \log { \xi^{(t)}(\s, \ac) }  
	}  
	\notag\\	
	&\qquad \overset{(\mathrm{i})}{=}\,  \alpha 
	\prn{ \alpha \prn{\soft{\Q}^{(t-1)}(s,a) - \tau \log \xi^{(t-1)}(s,a)} + \soft{\Q}^{(t)}(s,a) - \soft{\Q}^{(t-1)}(s,a) }  
	\notag\\
	&\qquad \overset{(\mathrm{ii})}{\geq}\,  \alpha^2 
	\prn{ \soft{\Q}^{(t-1)}(s,a) - \tau \log \xi^{(t-1)}(s,a)  }  
	\notag\\
	& \qquad \overset{(\mathrm{iii})}{\geq}\,   \alpha^{t+1} 
	\prn{ \soft{\Q}^{(0)}(s,a) - \tau \log \xi^{(0)}(s,a)  }  
	\notag\\
	& \qquad  \overset{(\mathrm{iv})}{\geq}\,  - \alpha^{t+1} \normbig{\soft{Q}^{(0)} - \tau \log \xi^{(0)}}_\infty  \notag
	\end{align}
for any $(s,a)\in \cS\times \cA$. 
		Here, (i) follows by construction \eqref{defn:xi-t-sequence-2}, (ii) invokes  the monotonicity property \eqref{eq:Q_mono} (so that $\soft{\Q}^{(t)}\geq \soft{\Q}^{(t-1)}$), and (iii) follows by repeating the arguments (i) and (ii) recursively.
 \end{itemize}
Combining the preceding two bounds with the expression \eqref{eq:linsys_1_penguin}, we conclude that
\begin{align}
	0\leq 
	\soft{Q}^\star(s,a) - \soft{Q}^{(t+1)}(s,a) 	
	\leq \gamma \normbig{\soft{Q}^\star - \tau \log \xi^{(t+1)}}_\infty + \gamma \alpha^{t+1} \normbig{\soft{Q}^{(0)} - \tau \log \xi^{(0)}}_\infty
\end{align}
for any $(s,a)\in \cS \times \cA$, thus concluding the proof.

\subsection{Proof of Lemma~\ref{lemma:inexact_bound}}
\label{sec:pflemdalaopo_pert}

Recall that, in this scenario, the policies are updated using inexact policy evaluation via \eqref{eq:soft_greedy_approximate}, namely,
\begin{align}
\label{eqn:inexact-update}
\forall (s,a)\in \mathcal{S} \times \cA, \qquad \plcy^{(t+1)}(a|\s) =  \frac{ \big( \plcy^{(t)}(a|\s) \big)^{ 1- \frac{\eta\tau}{1-\disct}}\exp\big( \frac{\eta}{1-\gamma} \estsoftQ(\s, a) \big) }{ \widehat{Z}^{(t)}(s) },
\end{align}
where $\widehat{Z}^{(t)}(\s) := \sum_{a'} \plcy^{(t)}(a'|\s)^{ 1- \frac{\eta\tau}{1-\disct}}\exp\big( \frac{\eta}{1-\gamma} \estsoftQ(\s, a')  \big)$.  
To facilitate analysis, we further introduce another auxiliary policy sequence $\{\breve{\plcy}^{(t)}\}$, which corresponds to the policy update as if we had access to exact soft Q-function of $ \plcy^{(t)}$ in the $t$-th iteration; this is defined as
\begin{align}
\label{eqn:inexact-update-auxiliary}
\forall (s,a)\in \mathcal{S} \times \cA,  \qquad \breve{\plcy}^{(t+1)}(a|\s) = \frac{ \big( \plcy^{(t)}(a|\s) \big)^{1- \frac{\eta\tau}{1-\disct}}\exp\big( \frac{\eta}{1-\gamma}   \soft{\Q}^{(t)}(\s, a)  \big) }{ Z^{(t)} (s) }  ,
\end{align}
where we abuse the notation by letting $ Z^{(t)} (s)  :=  \sum_{a'} \plcy^{(t)}(a'|\s)^{1- \frac{\eta\tau}{1-\disct}}\exp\big( \frac{\eta}{1-\gamma}  \soft{\Q}^{(t)}(\s, a') \big)$. 
It is worth emphasizing that $\breve{\plcy}^{(t+1)}$ is produced on the basis of ${\plcy}^{(t)}$ as opposed to $\breve{\plcy}^{(t)}$; it should be viewed as a one-step perfect update from a given policy ${\plcy}^{(t)}$.

We first make note of the following fact: for any step size $0<\eta \leq  (1-\disct)/\tau$, it follows from \eqref{eq:log_pi_diff} --- together with the construction \eqref{eqn:inexact-update} and \eqref{eqn:inexact-update-auxiliary} --- that
\begin{align}
& \normbig{\log \plcy^{(t+1)} - \log \exctplcy{t+1}}_\infty \notag \\
& \le 2 \norm{\log \left( \plcy^{(t)}(a|\s)^{1-\eta\tau/(1-\disct)}\exp\Big( \frac{\eta}{1-\gamma}{\estsoftQ(\s, a)} \Big) \right) - \log \left( \plcy^{(t)}(a|\s)^{1-\eta\tau/(1-\disct)}\exp\Big( \frac{\eta}{1-\gamma} {\soft{\Q}^{(t)}(\s, a)} \Big) \right) }_\infty  \notag\\
& =  \frac{2\eta}{1 - \disct}\normbig{\estsoftQ - \soft{Q}^{(t)}}_\infty.
\label{eqn:boya}
\end{align}

Next, let us recall the inequality~\eqref{eq:proof_another_fancy_middle_step} in the proof of Lemma~\ref{lemma:dalaopo} under exact policy evaluation $\exctplcy{t+1}(\cdot|\s)$; when applied to the current setting, it essentially indicates that 
\begin{align}
\soft{\V}^{(t)}(\s _0) 
& =\exlim{\substack{\ac_0 \sim \exctplcy{t+1}(\cdot | \s _0)}}{ \frac{1-\disct}{\eta} \log Z^{(t)}(\s_0)} - \frac{1-\disct}{\eta} \KL\prn{\plcy^{(t)}(\cdot |\s _0) \,\big\|\, \exctplcy{t+1}(\cdot | \s _0)} \notag \\
& = \exlim{\substack{\ac_0 \sim \plcy^{(t+1)}(\cdot | \s _0)}}{\frac{1-\disct}{\eta} \log Z^{(t)}(\s_0)} - \frac{1-\disct}{\eta} \KL\prn{\plcy^{(t)}(\cdot |\s _0) \,\big\|\, \exctplcy{t+1}(\cdot | \s _0)}, 
\label{eqn:dalaopo-v}
\end{align}
where the last step follows since the quantity $Z^{(t)}(s)$ does not depend on $a$ at all. 
In order to control the first term of \eqref{eqn:dalaopo-v}, we invoke the definition of $\breve{\plcy}^{(t+1)}(\cdot|\s)$ to show that
\begin{align}
&	\exlim{\substack{\ac_0 \sim \plcy^{(t+1)}(\cdot | \s _0)}}{\frac{1-\disct}{\eta} \log Z^{(t)}(\s_0)} \notag \\
& \overset{\mathrm{(i)}}{=} \exlim{\substack{\ac_0 \sim \plcy^{(t+1)}(\cdot | \s _0)}}{-\tau \log \exctplcy{t+1}(\ac_0|\s _0) + \soft{Q}^{(t)}(\s _0, \ac_0) + \prn{\tau - \frac{1-\disct}{\eta}}\prn{\log \exctplcy{t+1}(\ac_0|\s_0) - \log \plcy^{(t)}(\ac_0|\s_0)}}\notag\\
& = \exlim{\substack{\ac_0 \sim \plcy^{(t+1)}(\cdot | \s _0)}}{-\tau \log \plcy^{(t+1)}(\ac_0|\s _0) + \soft{Q}^{(t)}(\s _0, \ac_0)  } + \prn{\tau - \frac{1-\disct}{\eta}}\KL\prn{\plcy^{(t+1)}(\cdot |\s _0) \,\big\|\, \plcy^{(t)}(\cdot | \s _0)}\notag\\
&  \qquad\qquad	-{\frac{1-\disct}{\eta}}   \mathop\mathbb{E}\limits_{\substack{\ac_0 \sim \plcy^{(t+1)}(\cdot | \s _0)}} \left[\log \exctplcy{t+1}(\ac_0|\s_0) - \log \plcy^{(t)}(\ac_0|\s_0) \right]	\notag\\
& \le \exlim{\substack{\ac_0 \sim \plcy^{(t+1)}(\cdot | \s _0)}}{-\tau \log \plcy^{(t+1)}(\ac_0|\s _0) + \soft{Q}^{(t)}(\s _0, \ac_0)  } + \prn{\tau - \frac{1-\disct}{\eta}}\KL\prn{\plcy^{(t+1)}(\cdot |\s _0) \,\big\|\, \plcy^{(t)}(\cdot | \s _0)}\notag\\
&  \qquad\qquad	+2\normbig{\estsoftQ - \soft{Q}^{(t)}}_\infty,\label{eq:middle_step1}
\end{align}
where the final step results from \eqref{eqn:boya}.
Putting the above bound together with \eqref{eqn:dalaopo-v} guarantees that 
\begin{align}
\soft{\V}^{(t)}(\s_0) 
& \le \exlim{\substack{\ac_0 \sim \plcy^{(t+1)}(\cdot | \s _0)}}{-\tau \log \plcy^{(t+1)}(\ac_0|\s _0) + \soft{Q}^{(t)}(\s _0, \ac_0)}  - \frac{1-\disct}{\eta} \KL\prn{\plcy^{(t)}(\cdot |\s _0) \,\big\|\, \exctplcy{t+1}(\cdot | \s _0)} \notag\\
&\qquad\quad - \prn{\frac{1-\disct}{\eta} - \tau}\KL\prn{\plcy^{(t+1)}(\cdot |\s _0) \,\big\|\, \plcy^{(t)}(\cdot | \s _0)}+ 2\normbig{\estsoftQ - \soft{Q}^{(t)}}_\infty \notag\\
&\le \exlim{\substack{\ac_0 \sim \plcy^{(t+1)}(\cdot | \s _0)}}{-\tau \log \plcy^{(t+1)}(\ac_0|\s _0) + \soft{Q}^{(t)}(\s _0, \ac_0)} + 2\normbig{\estsoftQ - \soft{Q}^{(t)}}_\infty \notag\\
	&\le \exlim{\substack{\ac_0 \sim \plcy^{(t+1)}(\cdot | \s _0)}}{-\tau \log \plcy^{(t+1)}(\ac_0|\s _0) + r(s_0, a_0)  + \gamma \mathop\mathbb{E}\limits_{\s _1\sim \prob(\cdot | \s _0, \ac_0)} 
\big[\soft{\V}^{(t)}(\s_1)\big] } 
	+ 2\normbig{\estsoftQ - \soft{Q}^{(t)}}_\infty. \notag
\end{align}
where the last identity makes use of the relation $\soft{Q}^{(t)}(\s_0, \ac_0) = r(s_0, a_0) + \gamma \mathbb{E}_{\s _1\sim \prob(\cdot | \s _0, \ac_0)} 
\big[\soft{\V}^{(t)}(\s_1)\big]$.   Invoking the above inequality recursively as in the expression~\eqref{eqn:useful} (see Lemma~\ref{lemma:dalaopo}), we can expand it to establish 
\begin{align*}
\soft{\V}^{(t)}(\s _0)  
& \le \soft{\V}^{(t+1)}(\s _0) + 2\normbig{\estsoftQ - \soft{Q}^{(t)}}_\infty \sum_{i=0}^{\infty} \gamma^i = \soft{\V}^{(t+1)}(\s _0) + \frac{2}{{1-\disct}}\normbig{\estsoftQ - \soft{Q}^{(t)}}_\infty .
\end{align*} 

\subsection{Proof of Lemma~\ref{lemma:xiaoqie}}
\label{sec:pflemxiaoqie}

First of all, we follow the definition \eqref{defn:V-tau} of the entropy-regularized value function to deduce that
\begin{align}
& \soft{\V}^{\star}(\rho) - \soft{\V}^{(t)}(\rho) 
= \exlim{\substack{\s _0\sim \rho, a_i \sim \plcy^\star_\tau(\cdot| s_i),\\ \s _{i+1}\sim \prob(\cdot| s_i, a_i), \forall i\geq 0}}{\sum_{i=0}^{\infty}\disct^i \prn{r(s_i, a_i) - \tau \log \plcy^\star_\tau (a_i|s_i)}} - \soft{\V}^{(t)}(\rho)\notag\\
&\quad = \exlim{\substack{\s _0\sim \rho, a_i \sim \plcy^\star_\tau(\cdot| s_i),\\ \s _{i+1}\sim \prob(\cdot| s_i, a_i), \forall i\geq 0}}{\sum_{i=0}^{\infty} \disct^i \prn{r(s_i, a_i) - \tau \log \plcy^\star_\tau (a_i|s_i) + \soft{\V}^{(t)}(s_i) - \soft{\V}^{(t)}(s_i)}} - \soft{\V}^{(t)}(\rho)\notag\\
&\quad= \exlim{\substack{\s _0\sim \rho, a_i \sim \plcy^\star_\tau(\cdot| s_i),\\ \s _{i+1}\sim \prob(\cdot| s_i, a_i), \forall i\geq 0}}{\soft{\V}^{(t)}(s_{0}) + \sum_{i=0}^{\infty}\disct^i \prn{r(s_i, a_i) - \tau \log \plcy^\star_\tau (a_i|s_i) + \disct\soft{\V}^{(t)}(s_{i+1}) - \soft{\V}^{(t)}(s_i) }} - \soft{\V}^{(t)}(\rho) \notag\\
&\quad \overset{\mathrm{(i)}}{=} \exlim{\substack{\s _0\sim \rho, a_i \sim \plcy^\star_\tau(\cdot| s_i),\\ \s _{i+1}\sim \prob(\cdot| s_i, a_i), \forall i\geq 0}}{\sum_{i=0}^{\infty}\disct^i \prn{r(s_i, a_i) - \tau \log \plcy^\star_\tau (a_i|s_i) + \disct\soft{\V}^{(t)}(s_{i+1}) - \soft{\V}^{(t)}(s_i) }}\notag\\
	&\quad \overset{\mathrm{(ii)}}{=} \frac{1}{1-\disct}\exlim{\s\sim d_\rho^{\plcy^\star_\tau}}{\sum_{\ac} \plcy^\star_\tau(\ac|\s) \prn{r(\s, \ac) - \tau \log \plcy^\star_\tau (\ac|\s) + \disct    \exlim{s'\sim P(\cdot |s,a)}{\soft{\V}^{(t)}(\s') }  - \soft{\V}^{(t)}(\s)}}\notag\\
&\quad \overset{\mathrm{(iii)}}{=} \frac{1}{1-\disct}\exlim{\s\sim d_\rho^{\plcy^\star_\tau}}{\sum_{\ac}\plcy^\star_\tau(\ac|\s) \prn{\soft{\Q}^{(t)}(\s, \ac) - \tau \log \plcy^\star_\tau (\ac|\s)} - \soft{\V}^{(t)}(\s)}.
\label{eq:step_1}
\end{align}
Here, (i) is due to the definition $ \soft{\V}^{(t)}(\rho) = \mathbb{E}_{s_0\sim \rho}{\big[\soft{\V}^{(t)}(s_0)\big]}$, (ii) follows by aggregating terms corresponding to the same state-action pair and the definition of $ d_\rho^{\plcy^\star_\tau}$ (cf.~\eqref{eq:defn-d-s0}), whereas (iii) results from the definition \eqref{eq:defn-regularized-Q} of the regularized Q-function.

To continue, we shall attempt to control each part of \eqref{eq:step_1} separately. 
To begin with, observe that  the first part of \eqref{eq:step_1} can be bounded by Jensen's inequality,	namely, 
\begin{align}
	\sum_{\ac}\plcy^\star_\tau(\ac|\s) \prn{\soft{\Q}^{(t)}(\s, \ac) - \tau \log \plcy^\star_\tau (\ac|\s)} 
	&= \tau\sum_{a}\pi_{\tau}^{\star}(a|s)\log \left( \frac{\exp\big(Q_{\tau}^{(t)}(s,a)/\tau\big)}{\pi_{\tau}^{\star}(a|s)} \right) \notag\\
	&\leq \tau\log\left(\sum_{a}\pi_{\tau}^{\star}(a|s)\frac{\exp\big(Q_{\tau}^{(t)}(s,a)/\tau\big)}{\pi_{\tau}^{\star}(a|s)}\right) \notag\\
	&= \tau\log \left( \sum_{\ac}\exp\prn{ \soft{Q}^{(t)}(\s, \ac)/\tau} \right).
\label{eq:bound1_1}
\end{align}
With regards to the second part of \eqref{eq:step_1}, it is seen from the definition of ${\plcy}^{(t+1)}$ (cf.~\eqref{eq:SPI-update}) that
\begin{align}
	Q_{\tau}^{(t)}(s,a)=\tau \log\pi_{\tau}^{(t+1)}(a|s)+\tau\log  \left( \sum_{\ac}\exp\prn{ \soft{Q}^{(t)}(\s, \ac)/\tau} \right),
	\label{eq:Qt-decomposition-123}
\end{align}
thus allowing one to derive
\begin{align}
	V_{\tau}^{(t)}(s) & =\sum_{a}\pi_{\tau}^{(t)}(a|s)\left(Q_{\tau}^{(t)}(s,a)-\tau\log\pi_{\tau}^{(t)}(a|s)\right) \notag\\
	& \overset{(\mathrm{i})}{=} \tau \sum_{a}\pi_{\tau}^{(t)}(a|s)\left\{  \log\pi_{\tau}^{(t+1)}(a|s)+ \log  \left( \sum_{\ac}\exp\prn{ \soft{Q}^{(t)}(\s, \ac)/\tau} \right)- \log\pi_{\tau}^{(t)}(a|s)\right\} \notag\\
		& =\tau \log  \left( \sum_{\ac}\exp\prn{ \soft{Q}^{(t)}(\s, \ac)/\tau} \right)+\tau\sum_{a}\pi_{\tau}^{(t)}(a|s)\left( \log\pi_{\tau}^{(t+1)}(a|s) - \log\pi_{\tau}^{(t)}(a|s)\right) \notag\\
		& =\tau \log  \left( \sum_{\ac}\exp\prn{ \soft{Q}^{(t)}(\s, \ac)/\tau} \right)- \tau \mathsf{KL}\Big(\pi_{\tau}^{(t)}(a|s)\,\big\|\,\pi_{\tau}^{(t+1)}(\cdot|s)\Big) ,
\label{eq:bound1_2}
\end{align}		
where (i) relies on the identity \eqref{eq:Qt-decomposition-123}.
Substituting the inequalities \eqref{eq:bound1_1} and \eqref{eq:bound1_2} into the expression \eqref{eq:step_1}, we can demonstrate with a little algebra that
\begin{align*}
	& \soft{\V}^{\star}(\rho) - \soft{\V}^{(t)}(\rho) \le \frac{1}{\eta} \exlim{\s\sim d_\rho^{\plcy^\star_\tau}}{   \KL\Big(\plcy^{(t)}(\cdot|\s) \,\big\|\, {\plcy}^{(t+1)}(\cdot|\s)\Big)}.
\end{align*}
%


\subsection{Proof of Lemma \ref{lemma:derivative-grad}}
\label{sec:proof-lemma:derivative-grad}

The results of this lemma, or some similar versions, have appeared in prior work (e.g.~\citet[Lemma 10]{mei2020global} and \citet[Lemma 5.6]{agarwal2019optimality}). 
We include the proof here primarily for the sake of self-completeness.

\paragraph{Proof of Eqn.~\eqref{eqn:derivative}.}

	The policy gradient of the unregularized value function $V^{\pi_\theta}(s_0)$ is well-known as the policy gradient theorem~\citep{sutton2000policy}. Here, we deal with a slightly different variant -- an entropy-regularized value function $V^{\pi_\theta}_\tau(s_0)$ in the expression~\eqref{defn:value-function} with the softmax policy parameterization in \eqref{eq:definition_softmax}.  
	Invoking the Bellman equation and recognizing that $V^{\pi_\theta}_\tau(s_0)$ can be viewed as an unregularized value function with instantaneous rewards $r(s,a)-\tau \log \pi_{\theta}(a|s)$ for any $(s,a)$, we obtain
\begin{align*}
	\nabla_{\theta} V^{\pi_\theta}_\tau(s_0) &= \nabla_{\theta} \left[
		\sum_{a_0} \pi_{\theta}(a_0|s_0) \Big(r(s_0, a_0) -\tau\log \plcy_\prm(a_0|s_0) + \disct \mathop\mathbb{E}\limits_{\s' \sim \prob(\cdot|s_0, a_0)}\big[ \soft{\V}^{\plcy_\prm}(\s') \big] \Big)
	\right] \\
	& \overset{(\mathrm{i})}{=} \nabla_{\theta} \left[\sum_{a_0} \pi_{\theta}(a_0|s_0) \Big(Q^{\pi_\theta}_\tau(s_0,a_0)-\tau\log \plcy_\prm(a_0|s_0)\Big) \right] \\
	&= \sum_{a_0}  \big(\nabla_{\theta} \pi_{\theta}(a_0|s_0)\big) \Big(Q^{\pi_\theta}_\tau(s_0,a_0)-\tau\log \plcy_\prm(a_0|s_0)\Big) + 
	\sum_{a_0}  \pi_{\theta}(a_0|s_0) \nabla_{\theta} \Big( Q^{\pi_\theta}_\tau(s_0,a_0)-\tau\log \plcy_\prm(a_0|s_0) \Big) \\
	& \overset{(\mathrm{ii})}{=} \sum_{a_0}  \Big( \pi_{\theta}(a_0|s_0) \nabla_\theta \log \pi_{\theta}(a_0|s_0) \Big) \Big(Q^{\pi_\theta}_\tau(s_0,a_0)-\tau\log \plcy_\prm(a_0|s_0)\Big) \\
	& \hspace{1.5cm} + \sum_{a_0}  \pi_{\theta}(a_0|s_0) \nabla_{\theta} 
	\Big(r(s_0,a_0) + \gamma \sum_{s_1} \prob(s_1|s_0, a_0){\soft{\V}^{\plcy_\prm}(s_1)} 
	-\tau\log \plcy_\prm(a_0|s_0)\Big),
\end{align*}
where (i) relies on the definition \eqref{eq:defn-regularized-Q} of $Q^{\pi_\theta}_\tau$, and (ii) makes use of the identity 
$$\nabla_{\theta} \pi_{\theta}(a_0|s_0)= \pi_{\theta}(a_0|s_0) \nabla_\theta \log \pi_{\theta}(a_0|s_0)$$ as well as the definition  \eqref{eq:defn-regularized-Q}  of $Q^{\pi_\theta}_\tau$. Given that
\begin{align}
	\label{eq:sum-pi-1}
	\sum_{a_0}  \pi_{\theta}(a_0|s_0) \nabla_{\theta} \log \plcy_\prm(a_0|s_0) = \sum_{a_0} \nabla_{\theta}   \pi_{\theta}(a_0|s_0) =\nabla_{\theta} \Big(\sum_{a_0} \pi_{\theta}(a_0|s_0) \Big) = \nabla_{\theta} 1 = 0
\end{align}
and that $r(s,a)$ is independent of $\theta$, 
one can continue the above derivative to reach
\begin{align*}
	\nabla_{\theta} V^{\pi_\theta}_\tau(s_0) 
	&= \sum_{a_0}  \Big( \pi_{\theta}(a_0|s_0)  \nabla_\theta  \log \pi_{\theta}(a_0|s_0) \Big) \Big(Q^{\pi_\theta}_\tau(s_0,a_0)-\tau\log \plcy_\prm(a_0|s_0)\Big) \\
	& \hspace{3cm} + \gamma \sum_{a_0}  \pi_{\theta}(a_0|s_0)  \sum_{s_1} \prob(s_1|s_0, a_0)\nabla_{\theta}{\soft{\V}^{\plcy_\prm}(s_1)} \\
%
	&= \mathop\mathbb{E}\limits_{\substack{\ac_i \sim \plcy_{\theta}(\cdot | \s _i),\\\s _{i+1}\sim \prob(\cdot | \s _i, \ac_i), \forall i\geq 0}} 
	 \Big[ \big( \nabla_{\theta} \log \pi_{\theta}(a_0|s_0)\big) \Big(Q^{\pi_\theta}_\tau(s_0,a_0)-\tau\log \plcy_\prm(a_0|s_0)\Big) + \gamma \nabla_{\theta}{\soft{\V}^{\plcy_\prm}(s_1)}\Big].
\end{align*}
Repeating the above calculations recursively, we arrive at 
\begin{align}
	\nabla_\theta V^{\pi_\theta}_\tau(s_0) 
%
	&= \mathop\mathbb{E}\limits_{\substack{\ac_i \sim \plcy_{\theta}(\cdot | \s _i),\\\s _{i+1}\sim \prob(\cdot | \s _i, \ac_i), \forall i\geq 0}} 
	\left[\sum_{t=0}^{\infty} \gamma^t \big( \nabla_{\theta} \log \pi_{\theta}(a_t|s_t) \big) \Big(Q^{\pi_\theta}_\tau(s_t,a_t)-\tau\log \plcy_\prm(a_t|s_t)\Big)\right] \notag\\
	&= \frac{1}{1-\gamma} \mathop\Exs\limits_{s\sim d^{\pi_\theta}_{s_0}} \mathop\Exs\limits_{ a\sim \pi_{\theta}(\cdot|s)}
	\Big[ \big( \nabla_{\theta} \log \pi_{\theta}(a|s)\big)  \Big( Q^{\pi_\theta}_\tau(s,a)-\tau\log \plcy_\prm(a|s) \Big) \Big] \notag\\
	&= \frac{1}{1-\gamma} \mathop\Exs\limits_{s\sim d^{\pi_\theta}_{s_0}} \mathop\Exs\limits_{ a\sim \pi_{\theta}(\cdot|s)}
	\Big[ \big( \nabla_{\theta} \log \pi_{\theta}(a|s)\big)  \Big( A^{\pi_\theta}_\tau(s,a) + V^{\pi_\theta}_\tau(s) \Big) \Big] \notag\\
	&= \frac{1}{1-\gamma} \mathop\Exs\limits_{s\sim d^{\pi_\theta}_{s_0}} \mathop\Exs\limits_{ a\sim \pi_{\theta}(\cdot|s)}
	\Big[ \big( \nabla_{\theta} \log \pi_{\theta}(a|s)\big)  A^{\pi_\theta}_\tau(s,a)  \Big] ,
	\label{eq:V-identity-123}
\end{align}
	where the second line follows by aggregating the terms corresponding to the same state-action pair, and  the third line invokes the definition \eqref{eqn:asofttau} of $A^{\pi_\theta}_\tau$. To see why the last line holds, invoke \eqref{eq:sum-pi-1} to reach
\begin{align*}
	\mathbb{E}_{a\sim \pi_{\theta}(\cdot|s)} \Big[ \soft{\V}^{\plcy_\prm}(s)\nabla_{\theta} \log \pi_{\theta}(a|s) \Big]
	&=
	\sum_{a} \soft{\V}^{\plcy_\prm}(s)\pi_{\theta}(a|s)\nabla_{\theta} \log \pi_{\theta}(a|s) \\
	&=
	\soft{\V}^{\plcy_\prm}(s) \sum_{a} \pi_{\theta}(a|s)\nabla_{\theta} \log \pi_{\theta}(a|s) = 0.
\end{align*}

%

Further, it is easily seen that under the softmax parametrization in \eqref{eq:definition_softmax},
\begin{align}
	\frac{\partial \log \pi_{\theta}(a'|s')}{\partial \theta(s,a)} = \ind[s'=s] \big(\ind[a'=a] - \pi_\theta(a|s)\big)
	\label{eq:derivative-log-pi}
\end{align}
for any $(s,a),(s',a')\in \cS\times \cA$. 
Combining with~\eqref{eq:V-identity-123}, it further implies that
\begin{align*}
	\frac{\partial V^{\pi_\theta}_\tau(s_0)}{\partial \theta(s,a)} 
	&=
	\frac{1}{1-\gamma} \Exs_{s' \sim d^{\pi_\theta}_{s_0}} \mathbb{E}_{a' \sim \pi_{\theta}(\cdot|s')} \Bigg[  \frac{\partial \log \pi_{\theta}(a'|s')}{\partial \theta(s,a)} \soft{\A}^{\plcy_\prm}(\s',\ac') \Bigg] \\
& = \frac{1}{1-\gamma} \Exs_{s' \sim d^{\pi_\theta}_{s_0}} \mathbb{E}_{a' \sim \pi_{\theta}(\cdot|s')} \Bigg[ \Big( \ind[s'=s] \big(\ind[a'=a] - \pi_\theta(a|s)\big)\Big) \soft{\A}^{\plcy_\prm}(\s',\ac') \Bigg] \\
	&\overset{(\mathrm{i})}{=} \frac{1}{1-\gamma} \Exs_{s'\sim d^{\pi_\theta}_{s_0}} \mathbb{E}_{a'\sim \pi_{\theta}(\cdot|s')}
	\Big[\ind\big[(s',a') = (s,a)\big]\, \soft{\A}^{\plcy_\prm}(\s',\ac') \Big] \\
	&= \frac{1}{1-\gamma}d_{s_0}^{\pi_\theta}(s)\pi_{\theta}(a|s)\soft{\A}^{\plcy_\prm}(\s,\ac).
\end{align*}
where $(\mathrm{i})$ follows from $ \mathbb{E}_{a' \sim \pi_{\theta}(\cdot|s')} \soft{\A}^{\plcy_\prm}(\s',\ac') = \sum_{a'}  \pi_\theta(a'|s') \soft{\A}^{\plcy_\prm}(\s',\ac')=0$ due to the definition \eqref{eqn:asofttau}. The proof regarding $V^{\pi_\theta}_\tau(\rho)$ can be obtained by averaging  the initial state $s_0$ over the distribution $\rho$.

%

\paragraph{Proof of Eqn.~\eqref{eq:search-direction-NPG}.}

In order to establish  \eqref{eq:search-direction-NPG}, a crucial observation is that $w_{\theta} :=  \big(\mathcal{F}_\rho^{\prm}\big)^\dagger \nabla_{\theta} {\V}_{\tau}^{\pi_{\theta}} (\rho)$
is exactly the solution to the following least-squares problem
\begin{align}
	\mathrm{minimize}_{w\in \mathbb{R}^{|\cS||\cA|}} ~~ \big\|\mathcal{F}_\rho^{\prm} w - \nabla_{\theta} {\V}_{\tau}^{\pi_{\theta}} (\rho) \big\|_2^2.
	\label{eq:least-squares-F}
\end{align}
%
%
From the definition \eqref{eq:defn-fisher-information} of the Fisher information matrix,  we have
\begin{align*}
\mathcal{F}_{\rho}^{\theta}w & = \Exs_{s \sim d^{\pi_\theta}_{\rho}} \mathbb{E}_{a \sim \pi_{\theta}(\cdot|s)} \left[\big(\nabla_{\theta}\log\pi_{\theta}(a|s)\big)\big(\nabla_{\theta}\log\pi_{\theta}(a|s)\big)^{\top}w\right].
\end{align*}
for any fixed vector $w=[w_{s,a}]_{(s,a)\in \cS\times \cA}$. 
As a result, for any $(s,a)\in \cS\times \cA$ one has
\begin{align*}
\left(\mathcal{F}_{\rho}^{\theta}w\right)_{s,a} & =\Exs_{s' \sim d^{\pi_\theta}_{\rho}} \mathbb{E}_{a' \sim \pi_{\theta}(\cdot|s')}  \Bigg[\frac{\partial\log\pi_{\theta}(a'|s')}{\partial\theta(s,a)}
	\Bigg(\sum_{\tilde{s},\tilde{a}}\frac{\partial\log\pi_{\theta}(a'|s')}{\partial\theta(\tilde{s},\tilde{a})}w_{\tilde{s},\tilde{a}}\Bigg)\Bigg]\\
	& \overset{(\mathrm{i})}{=}\Exs_{s' \sim d^{\pi_\theta}_{\rho}} \mathbb{E}_{a' \sim \pi_{\theta}(\cdot|s')}  \Bigg[\ind[s'=s]\Big(\ind[a'=a]-\pi_{\theta}(a|s)\Big)\Bigg(\sum_{\tilde{s},\tilde{a}}\ind[\tilde{s}=s']\Big(\ind[\tilde{a}=a']-\pi_{\theta}(\tilde{a}|\tilde{s})\Big)w_{\tilde{s},\tilde{a}}\Bigg)\Bigg]\\
 & =\Exs_{s' \sim d^{\pi_\theta}_{\rho}} \mathbb{E}_{a' \sim \pi_{\theta}(\cdot|s')} \left[\ind[s'=s]\Big( \ind[a'=a]-\pi_{\theta}(a|s)\Big)\Big(w_{s',a'}-\sum_{\tilde{a}}\pi_{\theta}(\tilde{a}|s')w_{s',\tilde{a}}\Big)\right]\\
 & =d_{\rho}^{\pi_{\theta}}(s) \mathbb{E}_{a' \sim \pi_{\theta}(\cdot|s')} \left[\Big(\ind[a'=a]-\pi_{\theta}(a|s)\Big)\big(w_{s,a'}-c(s)\big)\right]\\
 & =d_{\rho}^{\pi_{\theta}}(s) \mathbb{E}_{a' \sim \pi_{\theta}(\cdot|s')}\Big[\ind[a'=a]w_{s,a'}-\pi_{\theta}(a|s)w_{s,a'}-\ind[a'=a]c(s)+\pi_{\theta}(a|s)c(s)\Big]\\
 & =d_{\rho}^{\pi_{\theta}}(s)\Big[\pi_{\theta}(a|s)w_{s,a}-\pi_{\theta}(a|s)c(s)-\pi_{\theta}(a|s)c(s)+\pi_{\theta}(a|s)c(s)\Big]\\
	& =d_{\rho}^{\pi_{\theta}}(s)\pi_{\theta}(a|s)\big[w_{s,a}-c(s)\big],
\end{align*}
where $(\mathrm{i})$ makes use of the derivative calculation (\ref{eq:derivative-log-pi}), and we define $c(s):=\sum_{a}\pi_{\theta}(a|s)w_{s,a}$. Consequently, the objective function of \eqref{eq:least-squares-F} can be written as
\begin{align*}
\big\|\mathcal{F}_{\rho}^{\prm}w-\nabla_{\theta}{\V}_{\tau}^{\pi_{\theta}}(\rho)\big\|_{2}^{2} & =\sum_{s,a}\left(d_{\rho}^{\pi_{\theta}}(s)\pi_{\theta}(a|s)\left[w_{s,a}-c(s)\right]-\frac{1}{1-\gamma}d_{\rho}^{\pi_{\theta}}(s)\pi_{\theta}(a|s)A_{\tau}^{\pi_{\theta}}(s,a)\right)^{2}\\
 & =\sum_{s,a}\left(d_{\rho}^{\pi_{\theta}}(s)\pi_{\theta}(a|s) \left( w_{s,a}- c(s)-\frac{1}{1-\gamma} A_{\tau}^{\pi_{\theta}}(s,a) \right) \right)^{2} ,
\end{align*}
which is  minimized by choosing $w_{s,a}=\frac{1}{1-\gamma}A_{\tau}^{\pi_{\theta}}(s,a) + c(s)$
for all $(s,a)\in\cS\times\cA$. This concludes the proof.

	\section{Convergence guarantees for CPI-style policy updates} \label{sec:cpi}

Employing the SPI update as the improved policy, we arrive at the following CPI-style update
\begin{subequations}
	\label{eq:CPI_spi}
	\begin{equation}	\label{eq:CPI_update}
		\plcy^{(t+1)} = (1-\beta) \plcy^{(t)} + \beta \overline{\plcy}^{(t+1)}.
	\end{equation}
	Here,  $ \overline{\plcy}^{(t+1)}$ corresponds to a one-step SPI update from $\plcy^{(t)}$, namely,
	\begin{equation}
		\forall (s,a)\in \mathcal{S} \times \cA,  \qquad \overline{\plcy}^{(t+1)}(\ac|\s) =\frac{1}{\overline{Z}^{(t)}(\s)} \exp\prn{\soft{\Q}^{(t)}(\s, \ac)/\tau} ,
	\end{equation}
\end{subequations}
where we denote 
\[
	\overline{Z}^{(t)}(\s) = \sum_{\ac \in \cA} \exp\prnbig{\soft{\Q}^{(t)}(\s, \ac)/\tau} \qquad \text{and} \qquad \soft{\Q}^{(t)}=\soft{\Q}^{\pi^{(t)}}
\]
as usual.
Here, $\beta \in (0,1]$ is a parameter that controls the ``conservatism'' of the updates. We  characterize the convergence rate of this update rule \eqref{eq:CPI_spi} in the following theorem.
\begin{theorem}[Linear convergence of CPI-style updates]
	\label{cor:cpi}
	For any $0 < \beta  \le 1$, the update rule \eqref{eq:CPI_spi} satisfies
	\begin{align}
		{\soft{V}^\star(\rho) - \soft{V}^{(t)}(\rho)} & \le \norm{\frac{\rho}{\soft{\mu}^\star}}_\infty  \prnbig{1 - \beta(1-\gamma)}^{t} \prn{\soft{V}^\star(\soft{\mu}^\star) - \soft{V}^{(0)}(\soft{\mu}^\star)}, \qquad \forall t\geq 0,
	\end{align}
	where $\soft{\mu}^\star$ is the stationary distribution defined in \eqref{eq:stationary_dist_disc}. 
\end{theorem}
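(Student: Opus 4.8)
The plan is to establish a one-step contraction of the sub-optimality gap $\soft{\V}^\star(\rho) - \soft{\V}^{(t)}(\rho)$ with a state-dependent rate, then specialize to $\rho = \soft{\mu}^\star$ to obtain the clean rate $1 - \beta(1-\disct)$, and finally transfer back to a general $\rho$. The single workhorse is the regularized performance difference identity: for any two policies $\plcy,\plcy'$ and any distribution $\rho$,
\[
\soft{\V}^{\plcy'}(\rho) - \soft{\V}^{\plcy}(\rho) = \frac{1}{1-\disct}\exlim{\s\sim d_\rho^{\plcy'}}{\sum_\ac \plcy'(\ac|\s)\, A_\tau^{\plcy}(\s,\ac) - \tau\KL\prn{\plcy'(\cdot|\s)\,\|\,\plcy(\cdot|\s)}},
\]
which follows from the same telescoping argument used to prove Lemma~\ref{lemma:dalaopo} and Lemma~\ref{lemma:xiaoqie} (view $\soft{\V}^{\plcy'}$ as the value of $\plcy'$ under the adjusted reward $r(\s,\ac) - \tau\log\plcy'(\ac|\s)$, and telescope $\soft{\V}^{\plcy}$ along trajectories generated by $\plcy'$).

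First I would apply this identity with $\plcy' = \plcy^{(t+1)} = (1-\beta)\plcy^{(t)} + \beta\overline{\plcy}^{(t+1)}$ and $\plcy = \plcy^{(t)}$. Two observations collapse the convex combination: since $\sum_\ac\plcy^{(t)}(\ac|\s)A_\tau^{(t)}(\s,\ac) = 0$, we have $\sum_\ac\plcy^{(t+1)}(\ac|\s)A_\tau^{(t)}(\s,\ac) = \beta\sum_\ac\overline{\plcy}^{(t+1)}(\ac|\s)A_\tau^{(t)}(\s,\ac)$; and joint convexity of $\KL$ in its first argument yields $\KL(\plcy^{(t+1)}(\cdot|\s)\,\|\,\plcy^{(t)}(\cdot|\s)) \le \beta\KL(\overline{\plcy}^{(t+1)}(\cdot|\s)\,\|\,\plcy^{(t)}(\cdot|\s))$. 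Writing $g^{(t)}(\s) := \sum_\ac\overline{\plcy}^{(t+1)}(\ac|\s)\brk{\soft{\Q}^{(t)}(\s,\ac) - \tau\log\overline{\plcy}^{(t+1)}(\ac|\s)} - \soft{\V}^{(t)}(\s)$, these two facts combine to give the improvement bound $\soft{\V}^{(t+1)}(\rho) - \soft{\V}^{(t)}(\rho) \ge \tfrac{\beta}{1-\disct}\exlim{\s\sim d_\rho^{(t+1)}}{g^{(t)}(\s)}$.

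The crux is to lower bound $g^{(t)}$ by the sub-optimality gap. Because $\overline{\plcy}^{(t+1)}(\cdot|\s) \propto \exp(\soft{\Q}^{(t)}(\s,\cdot)/\tau)$ is the exact maximizer of $\plcy\mapsto\sum_\ac\plcy(\ac|\s)[\soft{\Q}^{(t)}(\s,\ac) - \tau\log\plcy(\ac|\s)]$ over $\Delta(\cA)$ (the Jensen step in Lemma~\ref{lemma:xinhuan}), comparing against $\plcy^{(t)}$ gives $g^{(t)}(\s)\ge 0$, while comparing against $\plcy^\star_\tau$ gives $g^{(t)}(\s) \ge h^{(t)}(\s) := \sum_\ac\plcy^\star_\tau(\ac|\s)[\soft{\Q}^{(t)}(\s,\ac) - \tau\log\plcy^\star_\tau(\ac|\s)] - \soft{\V}^{(t)}(\s)$. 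Applying the performance difference identity again with $\plcy'=\plcy^\star_\tau$ shows $\exlim{\s\sim d_\rho^{\plcy^\star_\tau}}{h^{(t)}(\s)} = (1-\disct)\prn{\soft{\V}^\star(\rho) - \soft{\V}^{(t)}(\rho)}$. Since $g^{(t)}\ge 0$, the change of distribution $\exlim{\s\sim d_\rho^{(t+1)}}{g^{(t)}} \ge \norm{d_\rho^{\plcy^\star_\tau}/d_\rho^{(t+1)}}_\infty^{-1}\exlim{\s\sim d_\rho^{\plcy^\star_\tau}}{g^{(t)}} \ge \norm{d_\rho^{\plcy^\star_\tau}/d_\rho^{(t+1)}}_\infty^{-1}\exlim{\s\sim d_\rho^{\plcy^\star_\tau}}{h^{(t)}}$ produces the contraction $\soft{\V}^\star(\rho) - \soft{\V}^{(t+1)}(\rho) \le \prn{1 - \beta\norm{d_\rho^{\plcy^\star_\tau}/d_\rho^{(t+1)}}_\infty^{-1}}\prn{\soft{\V}^\star(\rho) - \soft{\V}^{(t)}(\rho)}$.

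Finally I would set $\rho = \soft{\mu}^\star$. The stationarity $d_{\soft{\mu}^\star}^{\plcy^\star_\tau} = \soft{\mu}^\star$ (cf.~\eqref{eq:stationary_dist_disc}) together with the elementary bound $d_{\soft{\mu}^\star}^{(t+1)}(\s) \ge (1-\disct)\soft{\mu}^\star(\s)$ (immediate from the $t=0$ term in the definition of the discounted visitation distribution) yields $\norm{d_{\soft{\mu}^\star}^{\plcy^\star_\tau}/d_{\soft{\mu}^\star}^{(t+1)}}_\infty \le (1-\disct)^{-1}$, so the per-iteration rate is at most $1 - \beta(1-\disct)$; iterating gives $\soft{\V}^\star(\soft{\mu}^\star) - \soft{\V}^{(t)}(\soft{\mu}^\star) \le (1-\beta(1-\disct))^t\prn{\soft{\V}^\star(\soft{\mu}^\star) - \soft{\V}^{(0)}(\soft{\mu}^\star)}$. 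To pass to a general $\rho$, I use $\soft{\V}^\star(\s) - \soft{\V}^{(t)}(\s)\ge 0$ and bound $\soft{\V}^\star(\rho) - \soft{\V}^{(t)}(\rho) = \sum_\s\rho(\s)\prn{\soft{\V}^\star(\s) - \soft{\V}^{(t)}(\s)} \le \norm{\rho/\soft{\mu}^\star}_\infty\prn{\soft{\V}^\star(\soft{\mu}^\star) - \soft{\V}^{(t)}(\soft{\mu}^\star)}$, which closes the argument. The main obstacle is the third step: legitimizing the change of distribution from $d_\rho^{(t+1)}$ (along which the improvement is measured) to $d_\rho^{\plcy^\star_\tau}$ (along which the sub-optimality gap is measured) requires the nonnegativity of $g^{(t)}$, and choosing $\rho=\soft{\mu}^\star$ is precisely what converts the awkward density ratio into the clean factor $(1-\disct)$.
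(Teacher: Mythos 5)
Your proposal is correct and follows essentially the same route as the paper: your quantity $g^{(t)}(\s)$ is exactly $\tau\,\KL\big(\plcy^{(t)}(\cdot|\s)\,\|\,\overline{\plcy}^{(t+1)}(\cdot|\s)\big)$, so your improvement bound is precisely Lemma~\ref{lemma:cpi} (obtained there via convexity of the KL divergence in its first argument, just as you use convexity to collapse the convex combination), and your comparison $g^{(t)}\ge h^{(t)}$ together with the identity $\exlim{\s\sim d_\rho^{\plcy^\star_\tau}}{h^{(t)}(\s)}=(1-\disct)\big(\soft{\V}^\star(\rho)-\soft{\V}^{(t)}(\rho)\big)$ is exactly Lemma~\ref{lemma:xiaoqie}. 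The subsequent change of distribution, the specialization to $\rho=\soft{\mu}^\star$ using $d_{\soft{\mu}^\star}^{\plcy^\star_\tau}=\soft{\mu}^\star$ and $d_{\soft{\mu}^\star}^{(t+1)}\ge(1-\disct)\soft{\mu}^\star$, and the final transfer via $\norm{\rho/\soft{\mu}^\star}_\infty$ all coincide with the paper's argument.
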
 
According to Theorem~\ref{cor:cpi}, it takes the CPI-style policy update \eqref{eq:CPI_spi} at most
$$\frac{1}{\beta(1-\gamma)} \log \prn{\norm{\frac{\rho}{\soft{\mu}^\star}}_\infty \frac{\soft{V}^\star(\soft{\mu}^\star) - \soft{V}^{(0)}(\soft{\mu}^\star)}{\epsilon}}$$
iterations to reach $\soft{V}^\star(\rho) - \soft{V}^{(t)}(\rho) \leq \epsilon$. 
As it turns out, the CPI-style update rule can be analyzed using our framework through the following performance improvement lemma, which is an adaptation of Lemma~\ref{lemma:dalaopo}. 
	In what follows, we use $\overline{\Q}_\tau^{(t+1)}$ and $\overline{\V}_\tau^{(t+1)}$ to abbreviate $\soft{\Q}^{ \overline{\plcy}^{(t+1)}}$ and $\soft{\V}^{ \overline{\plcy}^{(t+1)}}$, respectively.
	\begin{lemma}[Performance improvement of CPI-style updates] \label{lemma:cpi}
	Consider the policy update rule	\eqref{eq:CPI_update} with any $\beta \in (0,1]$. For any distribution $\rho$, one has  
		\begin{align*}
		\soft{\V}^{(t+1)}(\rho) -  \soft{\V}^{(t)}(\rho) 
		\ge &  \frac{\beta \tau}{1-\disct} \exlim{\s \sim d_{\rho}^{(t+1)}}{ \KL\prn{\plcy^{(t)}(\cdot | \s)\,\|\,\overline{\plcy}^{(t+1)}(\cdot | \s)}}.
		\end{align*}
	\end{lemma}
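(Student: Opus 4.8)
The plan is to adapt the regularized performance-difference machinery underlying Lemma~\ref{lemma:dalaopo} (and reused in the proof of Lemma~\ref{lemma:xiaoqie}) to the CPI iterate $\plcy^{(t+1)} = (1-\beta)\plcy^{(t)} + \beta \overline{\plcy}^{(t+1)}$, and then to exploit the concavity of the one-step entropy-regularized objective. Concretely, I would first establish the regularized performance-difference identity with $\plcy^{(t+1)}$ as the rollout policy and $\plcy^{(t)}$ as the reference: the same telescoping expansion of $\soft{\V}^{(t)}(\s_0)$ along trajectories that produced \eqref{eq:step_1} gives
\[
	\soft{\V}^{(t+1)}(\rho) - \soft{\V}^{(t)}(\rho) = \frac{1}{1-\disct}\exlim{\s\sim d_\rho^{(t+1)}}{\Phi(\s)},
\]
where, writing $g_\s(p) := \sum_\ac p(\ac)\soft{\Q}^{(t)}(\s,\ac) - \tau\sum_\ac p(\ac)\log p(\ac)$ for $p\in\Delta(\cA)$, the per-state gap is $\Phi(\s) = g_\s(\plcy^{(t+1)}(\cdot|\s)) - g_\s(\plcy^{(t)}(\cdot|\s))$. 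Here I use that $\soft{\V}^{(t)}(\s) = g_\s(\plcy^{(t)}(\cdot|\s))$, which is just the definition \eqref{eq:regularized-V-to-Q}.

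The key structural observation is that $g_\s$ is concave in $p$, being the sum of a linear term and $\tau$ times the (concave) Shannon entropy $-\sum_\ac p(\ac)\log p(\ac)$. Applying concavity to the convex combination defining $\plcy^{(t+1)}$ yields
\[
	g_\s(\plcy^{(t+1)}(\cdot|\s)) \ge (1-\beta)\,g_\s(\plcy^{(t)}(\cdot|\s)) + \beta\,g_\s(\overline{\plcy}^{(t+1)}(\cdot|\s)),
\]
so that $\Phi(\s) \ge \beta\big(g_\s(\overline{\plcy}^{(t+1)}(\cdot|\s)) - g_\s(\plcy^{(t)}(\cdot|\s))\big)$. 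It then remains to evaluate this last difference exactly. Using the closed form $\overline{\plcy}^{(t+1)}(\ac|\s)=\exp(\soft{\Q}^{(t)}(\s,\ac)/\tau)/\overline{Z}^{(t)}(\s)$, we have $\soft{\Q}^{(t)}(\s,\ac) = \tau\log\overline{\plcy}^{(t+1)}(\ac|\s) + \tau\log\overline{Z}^{(t)}(\s)$ for every $\ac$; substituting this into both evaluations of $g_\s$ makes the additive constant $\tau\log\overline{Z}^{(t)}(\s)$ cancel, and a direct computation leaves precisely $g_\s(\overline{\plcy}^{(t+1)}(\cdot|\s)) - g_\s(\plcy^{(t)}(\cdot|\s)) = \tau\,\KL\big(\plcy^{(t)}(\cdot|\s)\,\|\,\overline{\plcy}^{(t+1)}(\cdot|\s)\big)$. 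Combining the three displays and taking the expectation over $d_\rho^{(t+1)}$ establishes the claim.

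The one step needing care is the first: re-deriving the performance-difference identity when the rollout policy $\plcy^{(t+1)}$ is a generic distribution rather than a softmax iterate. Unlike in Lemma~\ref{lemma:dalaopo}, the CPI update no longer has the softmax-times-power structure, so the $\log Z^{(t)}$ bookkeeping of that proof does not apply verbatim; instead I would reproduce the trajectory-level telescoping of $\soft{\V}^{(t)}(\s_0)$ exactly as in \eqref{eq:step_1}, which relies only on the Bellman recursion for $\soft{\Q}^{(t)}$ and is insensitive to how $\plcy^{(t+1)}$ is generated. Once this identity is in place, the concavity inequality and the KL computation are entirely elementary.
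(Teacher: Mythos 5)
Your proof is correct and takes essentially the same route as the paper: the paper first rewrites the per-state one-step gap exactly as $\tau\,\KL\prn{\plcy^{(t)}\,\|\,\overline{\plcy}^{(t+1)}} - \tau\,\KL\prn{\plcy^{(t+1)}\,\|\,\overline{\plcy}^{(t+1)}}$ (its Eqn.~\eqref{eq:key_cpi_relation}) and then applies Jensen via convexity of $\KL(\cdot\,\|\,\overline{\plcy}^{(t+1)})$ in its first argument, which is literally the same step as your concavity of $g_\s$, since $g_\s(p) = \tau\log\overline{Z}^{(t)}(\s) - \tau\,\KL\prn{p\,\|\,\overline{\plcy}^{(t+1)}(\cdot|\s)}$. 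The only difference is the order in which the Jensen step and the KL identification are performed, which is cosmetic.
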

	\begin{proof} See Appendix~\ref{proof:lemma_cpi}. \end{proof}
	Combining the above result with Lemma \ref{lemma:xiaoqie} and following a similar approach to \eqref{eq:core} give
	\begin{align}
		\soft{\V}^{\star}(\rho) - \soft{\V}^{(t+1)}(\rho) &= \soft{\V}^{\star}(\rho) - \soft{\V}^{(t)}(\rho) + \Big( \soft{\V}^{(t)}(\rho) - \soft{\V}^{(t+1)}(\rho) \Big) \notag\\
		& \overset{\mathrm{(i)}}{\le} \soft{\V}^{\star}(\rho) - \soft{\V}^{(t)}(\rho)  - \frac{\beta\tau}{1-\disct} \exlim{\s \sim d_{\rho}^{(t+1)}}{  \KL\prn{\plcy^{(t)}(\cdot |\s) \,\big\|\, \overline{\plcy}^{(t+1)}(\cdot | \s)}}\notag\\
		& \overset{\mathrm{(ii)}}{\leq} \soft{\V}^{\star}(\rho) - \soft{\V}^{(t)}(\rho)  -  \frac{\beta\tau}{1-\disct} \norm{\frac{d_\rho^{\plcy^\star_\tau}}{d_{\rho}^{(t+1)}}}^{-1}_\infty \exlim{\s \sim d_{\rho}^{\plcy^\star_\tau}}{ {\KL \Big( \plcy^{(t)}(\cdot|\s) \,\big\|\, \overline{\plcy}^{(t+1)}(\cdot|\s) \Big)}}  \notag\\
		& \overset{\mathrm{(iii)}}{\le}   \soft{\V}^{\star}(\rho) - \soft{\V}^{(t)}(\rho) - \beta\norm{\frac{d_\rho^{\plcy^\star_\tau}}{d_{\rho}^{(t+1)}}}^{-1}_\infty \Big( \soft{\V}^{\star}(\rho) - \soft{\V}^{(t)}(\rho) \Big) \notag\\
		& =   \left(1 - \beta\norm{\frac{d_\rho^{\plcy^\star_\tau}}{d_{\rho}^{(t+1)}}}^{-1}_\infty  \right) \prn{\soft{\V}^{\star}(\rho) - \soft{\V}^{(t)}(\rho)} .
	\end{align}
	Here, (i) arises from Lemma~\ref{lemma:cpi},  (ii) employs the pre-factor $\big\| {d_\rho^{\plcy^\star_\tau}}/{d_{\rho}^{(t+1)}} \big\|^{-1}_\infty $ to accommodate the change of distributions, 
	whereas (iii) follows from Lemma \ref{lemma:xiaoqie} and the constraint that $0\leq \eta\leq \frac{1-\gamma}{\tau}$. By taking $\rho$ to be the stationary distribution $\soft{\mu}^\star$ (cf.~\eqref{eq:stationary_dist_disc}), one has
		\begin{align*}
		\soft{\V}^{\star}(\soft{\mu}^\star) - \soft{\V}^{(t+1)}(\soft{\mu}^\star) &\le   \left(1 - \beta\norm{\frac{d_{\soft{\mu}^\star}^{\plcy^\star_\tau}}{d_{\soft{\mu}^\star}^{(t+1)}}}^{-1}_\infty  \right) \prn{\soft{\V}^{\star}(\soft{\mu}^\star) - \soft{\V}^{(t)}(\soft{\mu}^\star)} \\
		&\le \left(1 - \beta\norm{\frac{{\soft{\mu}^\star}}{(1-\disct){\soft{\mu}^\star}}}^{-1}_\infty  \right) \prn{\soft{\V}^{\star}(\soft{\mu}^\star) - \soft{\V}^{(t)}(\soft{\mu}^\star)}\\
		&= \left(1 - \beta(1-\disct) \right) \prn{\soft{\V}^{\star}(\soft{\mu}^\star) - \soft{\V}^{(t)}(\soft{\mu}^\star)},
	\end{align*}
	where we have used $d_{\soft{\mu}^\star}^{\plcy^\star_\tau} = \soft{\mu}^\star$ (cf.~\eqref{eq:stationary_dist_disc}) and $d_{\soft{\mu}^\star}^{(t+1)} \ge (1-\disct)\soft{\mu}^\star$ in the second step. This immediately concludes the proof.

\subsection{Proof of Lemma~\ref{lemma:cpi}} \label{proof:lemma_cpi}

First of all, we claim that
		\begin{align} \label{eq:key_cpi_relation}
		\soft{\V}^{(t+1)}(\rho) -  \soft{\V}^{(t)}(\rho) 
		=&  \frac{\tau}{1-\disct} \exlim{\s \sim d_{\rho}^{(t+1)}}{ \KL\prn{\plcy^{(t)}(\cdot | \s)\,\|\,\overline{\plcy}^{(t+1)}(\cdot | \s)} - \KL\prn{\plcy^{(t+1)}(\cdot | \s )\,\|\,\overline{\plcy}^{(t+1)}(\cdot | \s)}},
		\end{align}
which we shall establish momentarily. Since the KL divergence $\KL\prn{\plcy(\cdot | \s )\,\|\,\overline{\plcy}^{(t+1)}(\cdot | \s)}$ is convex in $\plcy(\cdot|\s)$ \citep{cover1999elements}, the update rule \eqref{eq:CPI_update} together with Jensen's inequality necessarily implies that
		\begin{align*}
			\KL\prn{\plcy^{(t+1)}(\cdot | \s )\,\|\,\overline{\plcy}^{(t+1)}(\cdot | \s)} 
			&\le \beta \KL\prn{\overline{\plcy}^{(t+1)}(\cdot | \s)\,\|\,\overline{\plcy}^{(t+1)}(\cdot | \s)} 
			+ (1-\beta)\KL\prn{\plcy^{(t)}(\cdot | \s)\,\|\,\overline{\plcy}^{(t+1)}(\cdot | \s)} \\
			&= (1-\beta)\KL\prn{\plcy^{(t)}(\cdot | \s)\,\|\,\overline{\plcy}^{(t+1)}(\cdot | \s)}.
		\end{align*}
		Substituting the above inequality into \eqref{eq:key_cpi_relation} allows us to conclude that
		\begin{align*}
		\soft{\V}^{(t+1)}(\rho) -  \soft{\V}^{(t)}(\rho) 
		\ge &  \frac{\beta \tau}{1-\disct} \exlim{\s \sim d_{\rho}^{(t+1)}}{ \KL\prn{\plcy^{(t)}(\cdot | \s)\,\|\,\overline{\plcy}^{(t+1)}(\cdot | \s)}}.
		\end{align*}

The rest of this proof is then dedicated to establishing the claim \eqref{eq:key_cpi_relation}, which is similar to the proof of Lemma \ref{lemma:dalaopo}. To begin with, we express $\soft{\V}^{(t)}(\s _0) $ as follows
		\begin{align*}
		\soft{\V}^{(t)}(\s _0) 
		& = \exlim{{\ac_0 \sim \plcy^{(t)}(\cdot | \s _0)}}{-\tau \log \plcy^{(t)}(\ac_0|\s _0) + \soft{Q}^{(t)}(\s _0, \ac_0)} \notag\\
		& = \exlim{{\ac_0 \sim \plcy^{(t)}(\cdot | \s _0)}}{-\tau \log \plcy^{(t)}(\ac_0|\s _0) + \tau \log \overline{\plcy}^{(t+1)}(\ac_0|\s_0)} + \tau\log \overline{Z}^{(t)}(\s_0) \notag\\
		& =  \tau \log \overline{Z}^{(t)}(\s_0) -\tau \KL\prn{\plcy^{(t)}(\cdot | \s _0)\,\|\,\overline{\plcy}^{(t+1)}(\cdot | \s _0)} \\
		& =  \tau \exlim{{\ac_0 \sim \plcy^{(t+1)}(\cdot | \s _0)}}{\log \overline{Z}^{(t)}(\s_0)} -\tau \KL\prn{\plcy^{(t)}(\cdot | \s _0)\,\|\,\overline{\plcy}^{(t+1)}(\cdot | \s _0)},
		\end{align*}
	where the first line makes use of the definitions \eqref{defn:V-tau} and \eqref{eq:defn-regularized-Q}, the second line follows from \eqref{eq:CPI_spi}, the third line uses the definition of the KL divergence, and the last line follows since $ \overline{Z}^{(t)}(\s_0)$ does not depend on $a$. To continue, we subtract and add $\tau \KL\prn{\plcy^{(t+1)}(\cdot | \s _0)\,\|\,\overline{\plcy}^{(t+1)}(\cdot | \s _0)}$ to obtain 
		\begin{align*}
	\soft{\V}^{(t)}(\s _0) 	& = \exlim{{\ac_0 \sim \plcy^{(t+1)}(\cdot | \s _0)}}{\tau\log \overline{Z}^{(t)}(\s_0)-\tau \log \plcy^{(t+1)}(\ac_0|\s _0) + \tau \log \overline{\plcy}^{(t+1)}(\ac_0|\s_0)}   \\
		&\qquad\qquad\quad + \tau \KL\prn{\plcy^{(t+1)}(\cdot | \s _0)\,\|\,\overline{\plcy}^{(t+1)}(\cdot | \s _0)}-\tau \KL\prn{\plcy^{(t)}(\cdot | \s _0)\,\|\,\overline{\plcy}^{(t+1)}(\cdot | \s _0)} \notag\\
		&= \exlim{{\ac_0 \sim \plcy^{(t+1)}(\cdot | \s _0)}}{-\tau \log \plcy^{(t+1)}(\ac_0|\s _0) + \soft{Q}^{(t)}(\s _0, \ac_0)}  + \tau \KL\prn{\plcy^{(t+1)}(\cdot | \s _0)\,\|\,\overline{\plcy}^{(t+1)}(\cdot | \s _0)} \\
		&\qquad\qquad\quad -\tau \KL\prn{\plcy^{(t)}(\cdot | \s _0)\,\|\,\overline{\plcy}^{(t+1)}(\cdot | \s _0)} \notag\\
		&=  \soft{\V}^{(t+1)}(\s _0)  + \frac{\tau}{1-\disct} \exlim{\s \sim d_{\s_0}^{(t+1)}}{\KL\prn{\plcy^{(t+1)}(\cdot | \s )\,\|\,\overline{\plcy}^{(t+1)}(\cdot | \s)} - \KL\prn{\plcy^{(t)}(\cdot | \s)\,\|\,\overline{\plcy}^{(t+1)}(\cdot | \s)}}\notag.
		\end{align*}
	Here, the first step relies on the definition of KL divergence, the second step comes from \eqref{eq:CPI_spi}, while the last step is obtained by using the relation $\soft{Q}^{(t)}(\s_0, \ac_0) = r(s_0, a_0) + \gamma \mathbb{E}_{\s _1\sim \prob(\cdot | \s _0, \ac_0)} 
	\big[\soft{\V}^{(t)}(\s_1)\big]$ and then invoking the above equality recursively as in the expression~\eqref{eqn:useful} (see Lemma~\ref{lemma:dalaopo}). Averaging the equality over the initial state distribution $\s_0 \sim \rho$ thus establishes the claim \eqref{eq:key_cpi_relation}.

	\section{Proof for approximate entropy-regularized NPG (Theorem~\ref{thm:npg_inexact})} \label{sec:proof_approximate_npg}

In this section, we complete the proofs of  Theorem~\ref{thm:npg_inexact} in Section~\ref{sec:Pf-inexact}, which consists of (i) establishing the linear system in \eqref{eq:inexact_ls} and (ii) extracting the convergence rate from \eqref{eq:inexact_ls}.

\paragraph{Step 1: establishing the linear system \eqref{eq:inexact_ls}.} In what follows, we shall justify the linear system relation by checking each row separately.

\bigskip

\noindent {(1)} {\bf Bounding $\| \soft{Q}^\star - \tau \log \widehat{\xi}^{(t+1)} \|_{\infty}$.} From the construction \eqref{defn:hat-xi-t-sequence-2} of $\widehat{\xi}^{(t+1)}$,  we have
\begin{align*}
	\soft{Q}^\star - \tau \log \widehat{\xi}^{(t+1)} = \alpha \prnbig{\soft{Q}^\star - \tau \log \widehat{\xi}^{(t)}} + (1-\alpha) \prnbig{\soft{Q}^\star - \soft{Q}^{(t)}} + (1-\alpha)\prnbig{\soft{Q}^{(t)} - \estsoftQ}.
\end{align*}
Taken together with the triangle inequality and  the assumption $\big\|\soft{Q}^{(t)} - \estsoftQ\big\|_\infty\leq \delta$, this gives
\begin{equation} \label{eq:inexact_ls_1}
	\normbig{\soft{Q}^\star - \tau \log \widehat{\xi}^{(t+1)}}_\infty \le \alpha \normbig{\soft{Q}^\star - \tau \log \widehat{\xi}^{(t)}}_\infty + (1-\alpha) \normbig{\soft{Q}^\star - \soft{Q}^{(t)}}_\infty + \prn{1-\alpha}\delta. 
\end{equation}	

\medskip
\noindent {(2)} {\bf Bounding $-\min_{s,a}  \prnbig{  \soft{\Q}^{(t+1)}(\s, \ac) - \tau \log \widehat{\xi}^{(t+1)}(\s, \ac) }$.} Invoking the definition \eqref{defn:hat-xi-t-sequence-2} of $\widehat{\xi}^{(t+1)}$ again implies that for any $(s,a)\in\mathcal{S}\times\mathcal{A}$,
\begin{align*}
 - &\prn{ \soft{\Q}^{(t+1)}(\s, \ac) -\tau \log \widehat{\xi}^{(t+1)}(\s, \ac)} \\
&=  -\prn{ \soft{\Q}^{(t+1)}(\s, \ac) -\tau \prn{\alpha  \log \widehat{\xi}^{(t)}(\s, \ac) + (1-\alpha) {\estsoftQ(\s,\ac)}/{\tau} }} \\
& = - \alpha\prn{ \soft{\Q}^{(t)}(\s, \ac) - \tau   \log \widehat{\xi}^{(t)}(\s, \ac)} + \prn{1-\alpha}\prn{\estsoftQ(\s,\ac) - \soft{\Q}^{(t)}(\s, \ac)}  + \prn{ \soft{\Q}^{(t)}(\s, \ac) -  \soft{\Q}^{(t+1)}(\s, \ac)} \\
&\le - \alpha\prn{ \soft{\Q}^{(t)}(\s, \ac) - \tau   \log \widehat{\xi}^{(t)}(\s, \ac)} + \prn{1-\alpha}\delta + \frac{2\gamma \delta}{1 - \disct} ,
\end{align*}
where the last inequality follows from $\big\|\soft{Q}^{(t)} - \estsoftQ\big\|_\infty\leq \delta$ and \eqref{eq:almost_mono_Q}. Taking the maximum over $(s,a)\in \cS\times \cA$ on both sides  and using the definition $\alpha = 1 - \frac{\eta \tau}{1-\gamma}$ yield
\begin{align} 
\label{eq:inexact_ls_2}
-\min_{s,a}  & \prn{  \soft{\Q}^{(t+1)}(\s, \ac) - \tau \log \widehat{\xi}^{(t+1)}(\s, \ac) } 
\leq  - \alpha\min_{s,a}  \prn{\soft{Q}^{(t)}(s,a) - \tau\log \widehat{\xi}^{(t)}(s,a)}  + \prn{1-\alpha}\delta \prn{1 + \frac{2\disct }{\eta\tau}} . 
\end{align}

\medskip
\noindent {(3)} {\bf Bounding $\big\|\soft{Q}^\star - \soft{Q}^{(t+1)} \big\|_{\infty}$.} Following the same arguments as for \eqref{eq:linsys_1_penguin}, we obtain
\begin{align*}
	\soft{Q}^\star(s,a) - \soft{Q}^{(t+1)}(s,a) &= \disct \exlim{\s' \sim \prob(\cdot|\s, \ac)}{\tau \log\prnbig{ \normbig{\exp\prn{{\soft{Q}^{\star}(\s', \cdot)}/{\tau}}}_1} -\tau \log \prnbig{ \normbig{\widehat{\xi}^{(t+1)}(s',\cdot)}_1}} \notag \\ 
& \qquad\qquad - \disct \exlim{\substack{\s' \sim \prob(\cdot|\s, \ac),\\\ac' \sim \plcy^{(t+1)}(\cdot|\s')}}{ \soft{\Q}^{(t+1)}(\s', \ac') -\tau \log \widehat{\xi}^{(t+1)}(\s', \ac')} \\
& \leq \disct \normbig{\soft{Q}^\star - \tau \log \widehat{\xi}^{(t+1)}}_\infty - \gamma \min_{s,a}  \prn{  \soft{\Q}^{(t+1)}(\s, \ac) - \tau \log \widehat{\xi}^{(t+1)}(\s, \ac) } ,
\end{align*}
where the last line follows from \eqref{eq:bound_logexp_diff}. By plugging \eqref{eq:inexact_ls_1} and \eqref{eq:inexact_ls_2} into the above inequality, we arrive at the claimed bound regarding this term.


\paragraph{Step 2: deducing convergence guarantees from the linear system \eqref{eq:inexact_ls}.} We start by pinning down the eigenvalues and eigenvectors of the matrix $B$. Specifically, the three eigenvalues can be calculated as
\begin{equation} \label{eq:eigenvalues_B}
\lambda_1 = \alpha + \disct(1-\alpha) = 1-\eta\tau,\qquad \lambda_2 = \alpha \qquad\mbox{and} \qquad \lambda_3 = 0,
\end{equation} 
whose corresponding eigenvectors are given respectively by
\begin{equation}\label{eq:eigenvectors_B}
v_1 = \begin{bmatrix}
\gamma \\
1 \\
0
\end{bmatrix}, \qquad v_2 = \begin{bmatrix}
0 \\
-1 \\
1
\end{bmatrix}, \qquad \mbox{and}\qquad v_3 = \begin{bmatrix}
\alpha \\
\alpha - 1 \\
0
\end{bmatrix}.
\end{equation}
With some elementary computation, one can show that $z_0$ and $b$ introduced in \eqref{eq:expressions_inexact_ls} can be related to the eigenvectors of $B$ in the following way:
\begin{align}
z_0 & \leq \begin{bmatrix}
\normbig{\soft{Q}^\star - \soft{Q}^{(0)}}_\infty \\[1ex]
\normbig{\soft{Q}^\star - \tau \log \widehat{\xi}^{(0)}}_\infty \\[1ex]
\normbig{\soft{Q}^{(0)}- \tau\log \widehat{\xi}^{(0)}}_\infty
\end{bmatrix} \notag\\
& = \frac{1}{1-\eta\tau}\brk{(1-\alpha)\normbig{\soft{Q}^\star - \soft{Q}^{(0)}}_\infty + \alpha \prn{\normbig{\soft{Q}^\star - \tau \log \widehat{\xi}^{(0)}}_\infty +\normbig{\soft{Q}^{(0)}- \tau\log \widehat{\xi}^{(0)}}_\infty} } v_1 \notag\\
&\quad\quad\quad + \normbig{\soft{Q}^{(0)}- \tau\log \widehat{\xi}^{(0)}}_\infty v_2 + c_{z}  v_3 \notag\\
& \le \frac{1}{1-\eta\tau}\prn{\normbig{\soft{Q}^\star - \soft{Q}^{(0)}}_\infty + 2\alpha \tau \normbig{\log \soft{\plcy}^\star - \log \plcy^{(0)}}_\infty  } v_1 + \normbig{\soft{Q}^{(0)}- \tau\log \widehat{\xi}^{(0)}}_\infty  v_2 + c_z  v_3,
\label{eq:z-expression-decomposition}
\end{align}
where $c_z$ is some scalar whose value is immaterial since the eigenvalue corresponding to $v_3$ is $\lambda_3=0$, and the last line follows from the same reasoning for \eqref{eq:initial_reform}. Another userful identity is:
\begin{align}
b= (1-\alpha)\delta\begin{bmatrix}
\disct\prn{2+\frac{2\disct}{\eta\tau}} \\
1 \\
1 + \frac{2\disct }{\eta\tau}
\end{bmatrix}  = (1-\alpha)\delta\brk{\prn{2+\frac{2\disct }{\eta\tau}} v_1 + \prn{1 + \frac{2\disct }{\eta\tau}}  v_2}.
	\label{eq:b-expression-decomposition}
\end{align}
  
With these preparations in place, we can now invoke the recursion relationship \eqref{eq:inexact_ls} and the non-negativity of $B$ to obtain
\begin{align*}
z_{t+1} & \le B^{t+1} z_0 + \sum_{s=0}^{t} B^{t-s} b \\
& \le B^{t+1}\brk{\frac{1}{1-\eta\tau}\prn{\normbig{\soft{Q}^\star - \soft{Q}^{(0)}}_\infty + 2\alpha \tau \normbig{\log \soft{\plcy}^\star - \log \plcy^{(0)}}_\infty  } v_1 + \normbig{\soft{Q}^{(0)}- \tau\log \widehat{\xi}^{(0)}}_\infty v_2 + c_z v_3}\\
& \qquad\qquad + (1-\alpha)\delta\sum_{s=0}^{t} B^{t-s} \brk{\prn{2+\frac{2\disct }{\eta\tau}} v_1 + \prn{1 + \frac{2\disct }{\eta\tau}}  v_2}\\
&= \brk{ \lambda_1^t \prn{\normbig{\soft{Q}^\star - \soft{Q}^{(0)}}_\infty + 2\alpha \tau \normbig{\log \soft{\plcy}^\star - \log \plcy^{(0)}}_\infty  } + (1-\alpha)\delta\prn{2+\frac{2\disct }{\eta\tau}}\frac{1-\lambda_1^{t+1}}{1-\lambda_1} }  v_1 \\
&\qquad  + \brk{\lambda_2^{t+1} \normbig{\soft{Q}^{(0)}- \tau\log \widehat{\xi}^{(0)}}_\infty + (1-\alpha)\delta\prn{1 + \frac{2\disct }{\eta\tau}}\frac{1-\lambda_2^{t+1}}{1 - \lambda_2} }  v_2,
\end{align*}
where the eigenvalues and eigenvectors of $B$ are given in \eqref{eq:eigenvalues_B} and \eqref{eq:eigenvectors_B}, respectively, and the second inequality relies on \eqref{eq:z-expression-decomposition} and \eqref{eq:b-expression-decomposition}.
Note that we are only interested in the first two entries of the vector $z_t$. Since the first two entries of the eigenvector $v_2$ are non-positive, we can safely drop the term involving $v_2$ in the above inequality to obtain
\begin{align}
&\begin{bmatrix}
\normbig{\soft{Q}^\star - \soft{Q}^{(t+1)}}_\infty \\[1ex]
\normbig{\soft{Q}^\star - \tau \log \widehat{\xi}^{(t+1)}}_\infty
\end{bmatrix}    \notag\\
	& \le  \Bigg\{ \lambda_1^t \prn{\normbig{\soft{Q}^\star - \soft{Q}^{(0)}}_\infty + 2\alpha \tau \normbig{\log \soft{\plcy}^\star - \log \plcy^{(0)}}_\infty  } + (1-\alpha)\delta\prn{2+\frac{2\disct }{\eta\tau}}\frac{1-\lambda_1^{t+1}}{1-\lambda_1} \Bigg\}   \begin{bmatrix}
\disct\\
1
\end{bmatrix}  \notag\\
&  \le
	\Bigg\{ \prn{1-\eta\tau}^t \prn{\normbig{\soft{Q}^\star - \soft{Q}^{(0)}}_\infty + 2\prn{1-\frac{\eta\tau}{1-\disct}} \tau \normbig{\log \soft{\plcy}^\star - \log \plcy^{(0)}}_\infty  } 
	+ \frac{2\delta}{1-\disct}\prn{1+\frac{\disct }{\eta\tau}} \Bigg\}  \begin{bmatrix}
\disct\\
1
\end{bmatrix} .
	\label{eq:upper-bound-1234}
\end{align}

When it comes to the log policies, we recall again the fact that $\pi^{(t)}$ is related to $\widehat{\xi}^{(t)}$ as
\begin{align}
	\forall s\in \cS: \qquad \pi^{(t)}(\cdot | s) =  \frac{1}{\normbig{\widehat{\xi}^{(t)} (s,\cdot)}_1} \widehat{\xi}^{(t)} (s,\cdot). 
\end{align}
Invoking the elementary property \eqref{eq:log_pi_diff}, we reach
$$\normbig{\log \soft{\plcy}^\star - \log \plcy^{(t+1)}}_\infty \le 2\normbig{\soft{\Q}^\star/\tau - \log \widehat{\xi}^{(t+1)}}_\infty.$$ 
This together with the bound on $\normbig{\soft{Q}^\star - \tau \log \widehat{\xi}^{(t+1)}}_\infty$ in \eqref{eq:upper-bound-1234} establishes our claim for $\normbig{\log \soft{\plcy}^\star - \log \plcy^{(t+1)}}_\infty$.

	\section{Proof for local quadratic convergence (Theorem~\ref{thm:spi_sp})}	
\label{sec:superlinear}

	Assuming that the policy $\pi^{(t)}$ obeys Condition \eqref{eqn:pit-condition}, 
	we can control the difference of the corresponding discounted state visitation probabilities in terms of the sub-optimality gap w.r.t.~the log policy. 
	This is stated in the following lemma, whose proof is deferred to Section~\ref{sec:proof_lemma_drho_diff}. 
	%
		\begin{lemma} 
			\label{lemma:d_rho-difference-bound} 
		Consider any policy $\plcy$ satisfying $\norm{\log \plcy - \log \soft{\plcy}^\star}_\infty \le   1$.  It follows that
		\begin{align*}
			\norm{1 - \frac{d_\rho^{\soft{\plcy}^\star}}{d_\rho^\plcy}}_\infty \le 2 \prn{ \frac{1}{1-\disct} \norm{\frac{d_\rho^{\soft{\plcy}^\star}}{\rho}}_\infty  - 1} 
			\big\| \log \plcy - \log \soft{\plcy}^\star \big\|_\infty.
		\end{align*}
		In particular, by taking $\rho = \soft{\mu}^\star$ one has
		\begin{align*}
		   \Bigg\| 1 - \frac{\soft{\mu}^\star }{d_{\soft{\mu}^\star}^\plcy} \Bigg\|_\infty = \Bigg\|1 - \frac{d_{\soft{\mu}^\star}^{\soft{\plcy}^\star}}{d_{\soft{\mu}^\star}^\plcy} \Bigg\|_\infty 
			\le \frac{2\disct}{1-\disct} \big\| \log \plcy - \log \soft{\plcy}^\star \big\|_\infty.
		\end{align*}
	\end{lemma}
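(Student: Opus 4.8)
The plan is to work directly with the Bellman flow (stationarity) equation satisfied by the discounted state visitation distributions and to reduce the comparison of $d_\rho^\plcy$ and $d_\rho^{\soft{\plcy}^\star}$ to the discrepancy between their one-step state-transition kernels. Writing $P_\plcy$ for the induced state transition matrix with entries $[P_\plcy]_{s,s'} = \sum_{a}\plcy(a|s)P(s'|s,a)$ and viewing $d_\rho^\plcy$ as a row vector, the definition \eqref{eq:defn-d-s0} yields the flow identity $d_\rho^\plcy = (1-\gamma)\rho + \gamma\, d_\rho^\plcy P_\plcy$, and likewise for $\soft{\plcy}^\star$. Subtracting the two identities and then adding and subtracting the cross term $\gamma\, d_\rho^{\soft{\plcy}^\star}P_\plcy$, I would obtain the exact perturbation recursion
\begin{equation*}
	d_\rho^{\soft{\plcy}^\star} - d_\rho^{\plcy} = \gamma\big(d_\rho^{\soft{\plcy}^\star} - d_\rho^{\plcy}\big)P_{\plcy} - \gamma\, d_\rho^{\soft{\plcy}^\star}\big(P_{\plcy} - P_{\soft{\plcy}^\star}\big),
\end{equation*}
which isolates the driving term $M := P_\plcy - P_{\soft{\plcy}^\star}$.

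The crux of the argument is to divide this recursion entrywise by $d_\rho^\plcy$ and evaluate it at the state maximizing the ratio. Setting $\phi(s) := \frac{d_\rho^{\soft{\plcy}^\star}(s)}{d_\rho^\plcy(s)} - 1$ (so that the target quantity is exactly $\norm{\phi}_\infty$), the recursion together with the triangle inequality and non-negativity of $d_\rho^\plcy$ and $P_\plcy$ gives $|\phi(s)|\,d_\rho^\plcy(s) \le \norm{\phi}_\infty\,\gamma[d_\rho^\plcy P_\plcy](s) + \gamma\big|[d_\rho^{\soft{\plcy}^\star}M](s)\big|$. The key point is that, because the recursion is expanded using $P_\plcy$ paired with the matching weights $d_\rho^\plcy$, the flow identity for $\plcy$ supplies the clean simplification $\gamma[d_\rho^\plcy P_\plcy](s) = d_\rho^\plcy(s) - (1-\gamma)\rho(s)$. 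Evaluating at a maximizer $s^\star$ of $|\phi|$, the $\norm{\phi}_\infty\,d_\rho^\plcy(s^\star)$ terms cancel on both sides, leaving
\begin{equation*}
	\norm{\phi}_\infty \,(1-\gamma)\rho(s^\star) \le \gamma\big|[d_\rho^{\soft{\plcy}^\star}M](s^\star)\big| .
\end{equation*}
The factor $d_\rho^\plcy(s^\star)$ has disappeared entirely, and this cancellation is precisely what converts a crude estimate into the sharp coefficient $\frac{1}{1-\gamma}\normbig{d_\rho^{\soft{\plcy}^\star}/\rho}_\infty - 1$.

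It then remains to bound the residual $\gamma|[d_\rho^{\soft{\plcy}^\star}M](s)|$. I would convert the hypothesis into the pointwise bound $|\plcy(a|s) - \soft{\plcy}^\star(a|s)| \le (e^{\varepsilon}-1)\soft{\plcy}^\star(a|s)$ with $\varepsilon := \norm{\log\plcy - \log\soft{\plcy}^\star}_\infty$, which propagates to $|M_{\tilde s,s}| \le (e^\varepsilon-1)[P_{\soft{\plcy}^\star}]_{\tilde s,s}$ and hence $\gamma\big|[d_\rho^{\soft{\plcy}^\star}M](s)\big| \le (e^\varepsilon-1)\,\gamma[d_\rho^{\soft{\plcy}^\star}P_{\soft{\plcy}^\star}](s) = (e^\varepsilon-1)\big(d_\rho^{\soft{\plcy}^\star}(s) - (1-\gamma)\rho(s)\big)$, the last step again invoking the flow identity, now for $\soft{\plcy}^\star$. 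Substituting this, dividing by $(1-\gamma)\rho(s^\star)$, and using the elementary inequality $e^\varepsilon - 1 \le 2\varepsilon$ valid on $[0,1]$ (which produces the leading constant $2$) yields $\norm{\phi}_\infty \le 2\varepsilon\big(\frac{1}{1-\gamma}\normbig{d_\rho^{\soft{\plcy}^\star}/\rho}_\infty - 1\big)$, as claimed; the specialization $\rho = \soft{\mu}^\star$ then follows by plugging in $d_{\soft{\mu}^\star}^{\soft{\plcy}^\star} = \soft{\mu}^\star$ from \eqref{eq:stationary_dist_disc}, so that $\frac{1}{1-\gamma} - 1 = \frac{\gamma}{1-\gamma}$. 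I expect the only delicate point to be the division and the selection of $s^\star$: one must ensure $\rho(s^\star)>0$, equivalently that the mismatch coefficient is finite (otherwise the bound is vacuous, and under the standing assumption $\min_s\soft{\mu}^\star(s)>0$ this holds for $\rho=\soft{\mu}^\star$), together with the observation that closeness of the log-policies forces $\plcy$ and $\soft{\plcy}^\star$ to share the same reachable states, so that $\phi$ is well defined on the support of $d_\rho^\plcy$.
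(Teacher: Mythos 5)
Your proof is correct. The underlying decomposition is essentially the paper's: your perturbation recursion $d_\rho^{\soft{\plcy}^\star} - d_\rho^{\plcy} = \gamma\big(d_\rho^{\soft{\plcy}^\star} - d_\rho^{\plcy}\big)P_{\plcy} - \gamma\, d_\rho^{\soft{\plcy}^\star}\big(P_{\plcy} - P_{\soft{\plcy}^\star}\big)$ is just the un-inverted form of the resolvent identity the paper uses, $\big(d_\rho^\plcy - d_\rho^{\soft{\plcy}^\star}\big)^\top = \gamma \big(d_\rho^{\soft{\plcy}^\star}\big)^\top\big(P_\plcy - P_{\soft{\plcy}^\star}\big)(I - \gamma P_\plcy)^{-1}$, and both arguments rest on the same two estimates: the kernel-difference bound $\big|[P_\plcy - P_{\soft{\plcy}^\star}]_{s,s'}\big| \le 2\varepsilon\, [P_{\soft{\plcy}^\star}]_{s,s'}$ (the paper gets the constant $2$ from $|x|\le(e-1)|\log(1+x)|$, you from $e^\varepsilon - 1 \le 2\varepsilon$ on $[0,1]$; both are valid) and the identity $\gamma \big(d_\rho^{\soft{\plcy}^\star}\big)^\top P_{\soft{\plcy}^\star} = \big(d_\rho^{\soft{\plcy}^\star}\big)^\top - (1-\gamma)\rho^\top$. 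The one genuine difference is how the factor $(I-\gamma P_\plcy)^{-1}$ is disposed of: the paper multiplies the entrywise bound through by this non-negative matrix and recognizes $(1-\gamma)\rho^\top(I-\gamma P_\plcy)^{-1} = \big(d_\rho^\plcy\big)^\top$, whereas you evaluate the recursion at a maximizer of $|\phi|$ and let the $\norm{\phi}_\infty d_\rho^\plcy(s^\star)$ terms cancel, avoiding any appeal to non-negativity of the inverse. Your maximum-principle variant is marginally more elementary; the paper's version yields the stronger entrywise statement $\big|d_\rho^\plcy - d_\rho^{\soft{\plcy}^\star}\big| \le C\, d_\rho^\plcy$ for free, though both deliver the same $\ell_\infty$ conclusion. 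Your handling of the support/division issue (full support of both policies forces the visitation distributions to share a support, and a finite mismatch coefficient guarantees $\rho(s^\star)>0$ at the maximizer) is consistent with, and slightly more careful than, the paper's treatment.
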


First, by virtue of the SPI update rule \eqref{eq:SPI-update} and the inequality \eqref{eq:log_pi_diff}, it is guaranteed that
	\begin{align}
		\normbig{\log \soft{\plcy}^\star - \log \plcy^{(t+1)} }_\infty 
		& \leq \frac{2}{\tau} \normbig{\soft{Q}^\star - \soft{Q}^{(t)} }_\infty \leq \frac{2\gamma}{\tau} \normbig{\soft{V}^\star - \soft{V}^{(t)} }_\infty 
		 \leq  \frac{2\gamma}{\tau} \norm{\frac{1}{\soft{\mu}^\star}}_\infty \prn{\soft{V}^\star(\soft{\mu}^\star) - \soft{V}^{(t)}(\soft{\mu}^\star)},
		\label{eq:log-policy-UB2}
	\end{align}
where the last inequality comes from a change of distributions argument. Armed with Lemma~\ref{lemma:d_rho-difference-bound} and the inequality~\eqref{eq:log-policy-UB2}, we arrive at 
	\begin{align}
		\Bigg\| 1 - \frac{d_{\soft{\mu}^\star}^{\soft{\plcy}^\star}}{d_{\soft{\mu}^\star}^{(t+1)}} \Bigg\|_\infty  
		\le  \frac{2\disct}{1-\disct} \big\| \log \plcy^{(t+1)} - \log \soft{\plcy}^\star \big\|_\infty
		\le \frac{4\disct^2}{(1 - \disct)\tau} \norm{\frac{1}{\soft{\mu}^\star}}_\infty \prn{\soft{V}^\star(\soft{\mu}^\star) - \soft{V}^{(t)}(\soft{\mu}^\star)}.
		\label{eq:diff-d-UB1}
	\end{align}
	%
Substitution into \eqref{eq:core} gives
	\begin{align*}
		\soft{\V}^{\star}({\soft{\mu}^\star}) - \soft{\V}^{(t+1)}({\soft{\mu}^\star}) \le & \left( 1 - \norm{\frac{d_{\soft{\mu}^\star}^{\plcy^\star_\tau}}{d_{{\soft{\mu}^\star}}^{(t+1)}}}^{-1}_\infty\right) \prn{\soft{\V}^{\star}({\soft{\mu}^\star}) - \soft{\V}^{(t)}({\soft{\mu}^\star})} \\
		= & \left( 1 - {\norm{1 + \frac{d_{\soft{\mu}^\star}^{\plcy^\star_\tau} - d_{{\soft{\mu}^\star}}^{(t+1)}}{d_{{\soft{\mu}^\star}}^{(t+1)}}}_\infty^{-1}}\right) \prn{\soft{\V}^{\star}({\soft{\mu}^\star}) - \soft{\V}^{(t)}({\soft{\mu}^\star})} \\
		\le & \left( 1 - \frac{1}{1 + \frac{4\disct^2}{(1 - \disct)\tau} \norm{\frac{1}{{\soft{\mu}^\star}}}_\infty  \prn{\soft{V}^\star({\soft{\mu}^\star}) - \soft{V}^{(t)}({\soft{\mu}^\star})}}\right) \prn{\soft{\V}^{\star}({\soft{\mu}^\star}) - \soft{\V}^{(t)}({\soft{\mu}^\star})} \\
	= & \frac{\frac{4\disct^2}{(1 - \disct)\tau} \norm{\frac{1}{{\soft{\mu}^\star}}}_\infty  \prn{\soft{V}^\star({\soft{\mu}^\star}) - \soft{V}^{(t)}({\soft{\mu}^\star})}^2}{1 + \frac{4\disct^2}{(1 - \disct)\tau} \norm{\frac{1}{{\soft{\mu}^\star}}}_\infty  \prn{\soft{V}^\star({\soft{\mu}^\star}) - \soft{V}^{(t)}({\soft{\mu}^\star})}} 
	\le \frac{4\disct^2}{(1 - \disct)\tau} \norm{\frac{1}{{\soft{\mu}^\star}}}_\infty \prn{\soft{V}^\star({\soft{\mu}^\star}) - \soft{V}^{(t)}({\soft{\mu}^\star})}^2, 
	\end{align*}
	where the second inequality makes use of the bound \eqref{eq:diff-d-UB1}.
	This in turn reveals that
	\begin{align*}
		\frac{4\disct^2}{(1 - \disct)\tau} \norm{\frac{1}{\soft{\mu}^\star}}_\infty \prn{\soft{V}^\star(\soft{\mu}^\star) - \soft{V}^{(t+1)}(\soft{\mu}^\star)} 
		\le \prn{\frac{4\disct^2}{(1 - \disct)\tau} \norm{\frac{1}{\soft{\mu}^\star}}_\infty  \prn{\soft{V}^\star(\soft{\mu}^\star) - \soft{V}^{(t)}(\soft{\mu}^\star)}}^2,
	\end{align*}
	which leads to our claimed result by a standard change of distributions.

	%
	


\subsection{Proof of Lemma~\ref{lemma:d_rho-difference-bound} } 
\label{sec:proof_lemma_drho_diff}

	For any policy $\plcy$, denote by 
	 $\sP_\plcy \in \real^{|\ssp|\times|\ssp|}$ the state transition matrix induced by $\plcy$ as follows
	 \begin{align}
		 \forall s,s'\in \cS:\qquad [\sP_\plcy]_{\s, \s '} := \mathbb{E}_{\ac\sim \plcy(\cdot | \s)} \big[ \prob(\s '|\s, \ac) \big].
		 \label{eq:defn-P-pi}
	\end{align}
	For any policy $\pi$ satisfying $\norm{\log \plcy - \log \soft{\plcy}^\star}_\infty \le  1$ , we develop an upper bound on $\abs{\brk{{\sP_\plcy -  \sP_{\soft{\plcy}^\star}}}_{\s, \s'}}$ as follows
	\begin{align*}
		\abs{\brk{{\sP_\plcy -  \sP_{\soft{\plcy}^\star}}}_{\s, \s'}} = \abs{\sum_{\ac} \prob(\s'|\s,\ac) \big( \plcy(\ac|\s) - \soft{\plcy}^\star(\ac|\s) \big) } 
	& \le\sum_{\ac} \prob(\s'|\s,\ac)\soft{\plcy}^\star(\ac|\s) \abs{{\frac{\plcy(\ac|\s)}{\soft{\plcy}^\star(\ac|\s)} - 1}} \\
	& \overset{\mathrm{(i)}}{\le} (e-1)\sum_{\ac} \prob(\s'|\s,\ac)\soft{\plcy}^\star(\ac|\s)  \Big| \log{\plcy(\ac|\s)} - \log{\soft{\plcy}^\star(\ac|\s)} \Big| \\
	& \le \norm{\log \plcy - \log \soft{\plcy}^\star}_\infty (e-1) \sum_{\ac} \prob(\s'|\s,\ac)\soft{\plcy}^\star(\ac|\s) \\
	& \le 2\brk{\sP_{\soft{\plcy}^\star}}_{\s, \s'}\norm{\log \plcy - \log \soft{\plcy}^\star}_\infty  ,
	\end{align*}
	%
	where (i) uses the assumption $\| \log \pi^{\star} - \log \pi\|_{\infty} \leq 1$
	together with the elementary inequality $\abs{x} \le (e-1) \abs{\log(1+x)}$ when $-1 < x \le e-1$. With the preceding bound in mind, we can demonstrate that
	\begin{align}
		\Big| \big( d_\rho^{\soft{\plcy}^\star} \big)^\top \prn{\sP_\plcy -  \sP_{\soft{\plcy}^\star}} \Big| 
		\leq 2\norm{\log \plcy - \log \soft{\plcy}^\star}_\infty \big( d_\rho^{\soft{\plcy}^\star} \big)^\top {\sP_{\soft{\plcy}^\star}}.
		\label{eq:drho-UB123}
	\end{align}
%
%
	Here and throughout, we overload the notation $|z|$ for any vector $z\in \mathbb{R}^{|\cS|}$ to denote $[|z_i|]_{1\leq i\leq |\cS|}$.  

	In addition, the definitions of ${d_\rho^\plcy}$ and ${d_\rho^{\soft{\plcy}^{\star}}}$ admit the following matrix-vector representation:
	\begin{align}
		\big( d_\rho^\plcy \big)^\top =& (1-\disct)\rho^\top \prn{\idn - \disct \sP_{\plcy}}^{-1}, \label{eq:d-rho-matrix}\\
		\big( d_\rho^{\soft{\plcy}^\star} \big)^\top =& (1-\disct) \rho^\top \prn{\idn - \disct \sP_{\plcy^\star}}^{-1},
	\end{align}
	thus allowing one to derive
	\begin{align*}
		\big(d_\rho^\plcy - d_\rho^{\soft{\plcy}^\star}\big)^\top =& (1-\disct)\rho^\top \prn{\idn - \disct \sP_{\plcy^\star}}^{-1} \big[ \prn{\idn - \disct \sP_{\plcy^\star}} - \prn{\idn - \disct \sP_{\plcy}} \big] \prn{\idn - \disct \sP_{\plcy}}^{-1} \\
	=&  \disct \big( d_\rho^{\soft{\plcy}^\star} \big)^\top \prn{\sP_\plcy -  \sP_{\soft{\plcy}^\star}}( \idn -\disct\sP_{\plcy})^{-1} .
	\end{align*}
	This together with the non-negativity of the matrix $( \idn -\disct\sP_{\plcy})^{-1}$ \citep[Lemma 7]{li2020sample}  enables the following bound
	\begin{align}
		\abs{ \big(d_\rho^\plcy - d_\rho^{\soft{\plcy}^\star}\big)^\top } 
		&\leq \gamma \abs{\big( d_\rho^{\soft{\plcy}^\star} \big)^\top  \prn{\sP_\plcy -  \sP_{\soft{\plcy}^\star}}}(\idn-\disct\sP_{\plcy})^{-1} \notag \\
		&\leq  2 \big\| \log \plcy - \log \soft{\plcy}^\star \big\|_\infty  \disct \big( d_\rho^{\soft{\plcy}^\star} \big)^\top {\sP_{\soft{\plcy}^\star}}(\idn-\disct\sP_{\plcy})^{-1},
		\label{eq:d-rho-124}
	\end{align}
	where the last inequality results from \eqref{eq:drho-UB123}.

	Furthermore, we make the observation that
	\begin{align*}
		\gamma\big(d_{\rho}^{\soft{\plcy}^{\star}}\big)^{\top}\sP_{\soft{\plcy}^{\star}}  &=(1-\gamma)\gamma\rho^{\top}\big(I-\gamma P_{\soft{\plcy}^{\star}}\big)^{-1}\sP_{\soft{\plcy}^{\star}}
  =(1-\gamma)\gamma\rho^{\top}\left[ \sum_{i=0}^{\infty}\big(\gamma P_{\soft{\plcy}^{\star}}\big)^{i}\right] P_{\soft{\plcy}^{\star}}\\
 & =(1-\gamma)\rho^{\top}\left[ \sum_{i=1}^{\infty}\big(\gamma P_{\soft{\plcy}^{\star}}\big)^{i}\right] \\
 & =(1-\gamma)\rho^{\top}\left[ \big(I-\gamma P_{\soft{\plcy}^{\star}}\big)^{-1}-I\right] =\big(d_{\rho}^{\soft{\plcy}^{\star}}\big)^{\top}-(1-\gamma)\rho^{\top} 
 \leq \prn{\norm{\frac{d_\rho^{\soft{\plcy}^\star}}{\rho}}_\infty  - (1-\disct)} \rho^\top,
\end{align*}
%
where the last line comes from a change of distributions argument. 
Combining this bound with \eqref{eq:d-rho-124} gives
	\begin{align*}
		\abs{\big(d_\rho^\plcy - d_\rho^{\soft{\plcy}^\star}\big)^\top} & \leq  2\big\|\log \plcy - \log \soft{\plcy}^\star \big\|_\infty  \prn{ \frac{1}{1-\disct} \norm{\frac{d_\rho^{\soft{\plcy}^\star}}{\rho}}_\infty  - 1} (1-\disct)\rho^\top \prn{\idn - \disct \sP_{\plcy}}^{-1}\\
		&= 2\big\|\log \plcy - \log \soft{\plcy}^\star \big\|_\infty  \prn{\frac{1}{1-\disct}  \norm{\frac{d_\rho^{\soft{\plcy}^\star}}{\rho}}_\infty  - 1} \big( d_\rho^\plcy \big)^\top,
	\end{align*}
	where the last line arises from the expression \eqref{eq:d-rho-matrix}. 
	As a result, we establish the claimed bound
	\[
	\norm{1 - \frac{d_\rho^{\soft{\plcy}^\star}}{d_\rho^\plcy}}_\infty \le 2 \big\| \log \plcy - \log \soft{\plcy}^\star \big\|_\infty  \prn{ \frac{1}{1-\disct} \norm{\frac{d_\rho^{\soft{\plcy}^\star}}{\rho}}_\infty  - 1}.
	\]

\end{document}